\documentclass{article}

\usepackage[preprint]{neurips_2026}

\usepackage[T1]{fontenc}    
\usepackage{hyperref}       
\usepackage{url}            
\hypersetup{hidelinks}
\usepackage{booktabs}       
\usepackage{amsfonts}       
\usepackage{nicefrac}       
\usepackage{microtype}      
\usepackage{xcolor}         

\usepackage{algorithm}
\usepackage{algorithmic}
\usepackage{float}

\usepackage{xpatch}
\usepackage{array}
\usepackage{amsmath}
\usepackage{amssymb}
\usepackage{amsthm}
\usepackage{graphicx}
\usepackage{physics}
\usepackage{multirow}
\usepackage{subfigure}
\usepackage{mathtools}

\usepackage{tikz}
\usetikzlibrary{positioning, decorations.pathreplacing, calligraphy}

\newtheorem{definition}{Definition}
\newtheorem{lemma}{Lemma}
\newtheorem*{lemma*}{Lemma}

\newtheorem{theorem}{Theorem}
\newtheorem*{theorem*}{Theorem}
\newtheorem{corollary}{Corollary}
\newtheorem{assumption}{Assumption}
\newtheorem*{assumption*}{Assumption}

\DeclareFontShape{T1}{ptm}{m}{scit}{<->ssub * ptm/m/sc}{}
\DeclareFontShape{T1}{ptm}{b}{scit}{<->ssub * ptm/b/sc}{}
\newcolumntype{C}[1]{>{\centering\arraybackslash}m{#1}}
\makeatletter
\AtBeginDocument{\xpatchcmd{\@thm}{\thm@headpunct{.}}{\thm@headpunct{}}{}{}}
\makeatother

\title{Tree-Guided Identify-Then-Exploit: A Unified Framework of Best Arm Identification and Regret Minimization for Dueling Bandits}

\author{
  Pu Wang \\
  State Key Lab of CAD\&CG \\
  Zhejiang University \\
  \texttt{puwang0508@gmail.com}\\
  \And
  Yao-Xiang Ding \\
  State Key Lab of CAD\&CG \\
  Zhejiang University \\
  \texttt{dingyx.gm@gmail.com} \\
}

\begin{document}

\maketitle

\begin{abstract}
We study $N$-armed stochastic dueling bandits under the Condorcet-winner assumption, where three widely adopted objectives are considered: best-arm identification (BAI), weak regret, and strong regret. We propose {\it Tree-Guided Identify-Then-Exploit} (\textsc{TG-ITE}), the first unified framework to tackle all these objectives to our knowledge. Without requiring stronger assumptions, we propose a shared tree-guided identification approach to find a high-confidence incumbent within $O(N)$ comparisons. We further propose varied exploitation strategies to utilize this warm-start stage to optimize the specific objectives at hand. This methodology enables our approach to (1) achieve $O(N)$ sample complexity in BAI without commonly adopted stronger assumptions; (2) build the first winner-stays-style algorithm to achieve $O(N)$ weak regret; (3) enjoy the same $O(N \log T)$ guarantee as specialized strong-regret approaches; (4) realize the joint optimization of BAI and weak regret with $O(N)$ guarantees for both, eliminating the sub-optimal gap of $O(\log N)$ in the existing approach. 
Our results provide evidence that the trade-off between BAI and regret minimization is relatively benign in dueling bandits.
\end{abstract}

\section{Introduction}
\label{sec:introduction}

Dueling bandits~\citep{yue2009interactively,bengs2021preference} model sequential learning from noisy relative feedback: instead of observing a numerical reward, the learner compares two arms and observes a binary preference outcome. This feedback model is natural in ranking, recommendation, and human-preference learning, where relative judgments are easier to obtain than calibrated absolute scores. In this paper, we study $N$-armed dueling bandits under the Condorcet winner assumption. In this setting, there exists a single best arm---the Condorcet winner \(a^\star\)---that beats every other arm, while preferences among the remaining arms may be cyclic. We consider three common objectives in this setting: (1) best arm identification (BAI), which requires returning \(a^\star\) at a data-dependent stopping time, as well as two regret minimization (RM) objectives, i.e. (2) {online weak regret}, where comparisons involving \(a^\star\) are free; (3) online strong regret, where only self-comparison \((a^\star,a^\star)\) is free. Our target is to explore the following question: {\it Can we build a unified algorithmic framework that leads to optimally performed algorithms in all three objectives?}

Intuitively, these objectives agree on what must eventually be learned: the Condorcet winner \(a^\star\). But they disagree on how the learned winner should be exploited. This observation suggests a natural \emph{identify-then-exploit} design template: first, identify the Condorcet winner with high confidence, then exploit it according to the objective at hand. However, the Explore-Then-Commit (ETC) strategy for stochastic multi-armed bandit (MAB), which is built exactly on this clean idea, has been shown to be necessarily suboptimal on cumulative regret in the seminal work of~\citet{garivier2016etc}. This result is not incidental: The pioneering work of \citet{audibert2010best} has already shown that in MAB, BAI requires highly exploratory strategies. These results reveal a fundamental principle in bandits: BAI and RM involve different trade-offs in exploration and exploitation \citep{bubeck2009pure}. Generally, if one algorithm is optimal on one side, then it may not be optimal on another.

On the other hand, allowing to choose two arms at one time seems to make dueling bandits more benign between BAI and RM. This benefit allows an algorithm to utilize one arm for exploration and another for exploitation. This property especially fits for weak regret optimization: One could keep a current incumbent in one coordinate (the weak regret can be eliminated if the choice is correct) while testing challengers in another (to fix the possible wrong choice of incumbent). This is exactly the methodology behind the \emph{winner-stays} (\textsc{WS}) strategy for weak regret~\citep{chen2017dueling}. Furthermore, this naturally inspires us to utilize identify-then-exploit strategy to jointly optimize BAI and weak regret: Introducing a BAI warm start phase to identify the high-confidence incumbent, and then conducting \textsc{WS}-style exploitation for controlling weak regret. Recently, \citet{ma2025dual} proposes a \textsc{WS}-style algorithm to achieve this target. Denote by $N$ the number of arms. Under the assumptions that Condorcet winner exists and the arms have a total order, their approach achieves $O(N\log N)$ weak regret, which matches the \textsc{WS-W} algorithm from~\citep{chen2017dueling}, and $O(N\log N)$ BAI sample complexity, $O(\log N)$ factor worse than the existing optimal BAI algorithm. Though promising, important limitations still remain: (1) The $O(\log N)$ gap in BAI remains unclosed; (2) In comparison to the current best $O(N)$ weak regret algorithm from~\citep{saad2024weak}, which builds upon a different paradigm of online mirror descent with Tsallis regularizer, the result is still sub-optimal and depends on additional total order assumption; (3) As shown in \citep{chen2017dueling}, \textsc{WS} can be tuned to optimize strong regret, which is not considered in~\citet{ma2025dual}.

\begin{table}[t]
\centering
\small
\renewcommand{\arraystretch}{1.25}
\caption{Main results for the Tree-Guided Identify-Then-Exploit (TG-ITE) framework. The identification primitive is shared; the exploitation rule changes with the objective's zero-regret geometry.}
\begin{tabular}{
    C{0.16\linewidth}
    C{0.34\linewidth}
    C{0.40\linewidth}
}
\toprule
Objective & Identify-Then-Exploit Policy & Guarantee \\
\midrule
$\delta$-BAI 
& \textsc{TreeAscent}, then stop 
& \(O\!\left(\frac{N}{\Delta^2}\left(\log\frac1\delta+\log\frac1\Delta+1\right)\right)\) \\

Weak Regret 
& constant \textsc{TreeAscent} warm start, then \textsc{ScreenAndReplace} 
& \(O\!\left(\frac{N}{\Delta^2}\left(\log\frac1\Delta+1\right)\right)\) \\

Strong Regret \textit{(known $H$)} & \textsc{TreeAscent} then self-play
    & $O\!\left(\frac{N}{\Delta^2}\left(\log H+\log\frac{1}{\Delta}+1\right)\right)$ \\
Strong Regret \textit{(anytime)} & repeated \textsc{TreeAscent} + self-play blocks
    & $O\!\left(\frac{N}{\Delta^2}\!\left[\log T+(\log\log T+1)\log\frac{1}{\Delta}+1\right]\right)$ \\
\bottomrule
\end{tabular}
\label{tab:policies-summary}
\end{table}

In this work, we propose \emph{Tree-Guided Identify-Then-Exploit} (\textsc{TG-ITE}), the first unified framework for BAI, weak regret, and strong regret in dueling bandits to our knowledge. The main results are illustrated in Table~\ref{tab:policies-summary}. For each objective, we introduce a shared identification subroutine \textsc{TreeAscent}, followed by objective-specific exploitation strategies. The \textsc{TreeAscent} plays the crucial role of warm start to the following exploitation stages. The key challenge lies in identifying an incumbent with high confidence within $O(N)$ comparisons without requiring too strong assumptions beyond Condorcet winner exists. We address this by introducing the technique of tree-based tournament decomposition, as detailed in Section~\ref{subsec:fixed-tree-path-decomposition}. For BAI, \textsc{TreeAscent} directly achieves $O(N)$ sample complexity, matching the current optimal BAI strategies such as~\citet{falahatgar2017maximum}, without requiring stronger assumptions such as strong stochastic transitivity and stochastic triangle inequality. For strong regret, we integrate \textsc{TreeAscent} with the self-play exploitation strategy, achieving $O(N\log T)$ strong regret, matching the guarantee of the \textsc{WS-S} algorithm in~\citet{chen2017dueling}, as well as other classical strong regret specialized algorithms~\citep{yue2011beat,yue2012k,zoghi2014relative}. For weak regret, we integrate \textsc{TreeAscent} with a novel \textsc{WS}-style exploitation strategy \textsc{ScreenAndReplace}. The resulted \textsc{TG-ITE-Weak} algorithm improves the \textsc{WS-W} algorithm from $O(N\log N)$ to $O(N)$ without using the total order assumption, matching the current best result in~\citet{saad2024weak}. Notably, \textsc{TG-ITE-Weak} can be used to simultaneously conduct BAI with $O(N)$ sample complexity and achieve $O(N)$ weak regret, closing both gaps of $O(\log N)$ in~\citep{ma2025dual}. Overall, our unified framework provides further theoretical evidence that the trade-off between exploration and exploitation is indeed alleviated in dueling bandits\footnote{We delay more discussions of the comparison between our results and existing works to the remarks in Section~\ref{sec:identification-to-exploitation}. Due to space limitation, we leave full related work discussions in Section~\ref{sec:rw} and all formal proofs in the appendix.}.

\section{Problem Formulation}
\label{sec:pre}

We study stochastic dueling bandits with a finite arm set $[N]\coloneqq\{1,\ldots,N\}$, $N\ge 2$. Unless stated otherwise, at each round the learner selects an ordered pair $(i_t,j_t)$ and observes an independent outcome $Y_t\sim\mathrm{Bernoulli}(p_{i_t,j_t})$, where $Y_t=1$ indicates that $i_t$ wins the duel. The unknown preference matrix $P=(p_{i,j})_{i,j\in[N]}$ satisfies $p_{i,j}\in(0,1)$, $p_{i,j}=1-p_{j,i}$, and $p_{i,i}=\tfrac{1}{2}$. We say that a subroutine call uses \emph{fresh samples} when every comparison outcome drawn during the call has not been used by the algorithm before; the pair choices inside the call may still be adaptive.

\begin{assumption}[Condorcet winner with positive global gap]
\label{assump:condorcet-gap}
There exists an arm $a^\star\in[N]$ with $p_{a^\star,j}>\tfrac{1}{2}$ for every $j\neq a^\star$, and
\[
    \Delta \;\coloneqq\; \min_{i\neq j}\,\Bigl|p_{i,j}-\tfrac{1}{2}\Bigr| \;>\; 0.
\]
\end{assumption}

Assumption~\ref{assump:condorcet-gap} singles out a unique Condorcet winner $a^\star$ but allows cycles among the nonwinner arms. For distinct $i\neq j$, write $\Delta_{i,j}\coloneqq |p_{i,j}-\tfrac{1}{2}|$. All upper bounds in this paper are stated in terms of the global gap $\Delta$. Further elimination of this global gap assumption is left as future work. 

\paragraph{Best-arm identification ($\delta$-BAI).}
In the fixed-confidence BAI problem, the learner receives a confidence parameter $\delta\in(0,1)$, stops at a data-dependent time $\tau$, and outputs an arm $\widehat a\in[N]$. The algorithm is \emph{$\delta$-correct} if $\mathbb P(\widehat a=a^\star)\ge 1-\delta$, and its performance is measured by the expected stopping time $\mathbb E[\tau]$. A BAI algorithm terminates and produces no further comparisons after stopping. Note that in the main paper, we consider the exact $\delta$-BAI setting in which the best arm is the target output, instead of the $(g, \delta)$-BAI setting in which a known performance gap of $g$ is allowed. We discuss the extension of our results under the $(g, \delta)$-BAI setting in Sec.~\ref{sec:known-gap-budgeted-extension}. 

\paragraph{Weak regret.}
For weak regret, the learner generates comparisons indefinitely. The (binary) weak regret is
\[
    r_t^{\mathrm{weak}}
    \coloneqq
    \mathbf 1\{i_t\neq a^\star,\ j_t\neq a^\star\},
    \qquad
    R_T^{\mathrm{weak}}
    \coloneqq
    \sum_{t=1}^T r_t^{\mathrm{weak}},
    \qquad
    R_\infty^{\mathrm{weak}}
    \coloneqq
    \lim_{T\to\infty} R_T^{\mathrm{weak}}.
\]
Thus a round incurs zero weak regret whenever at least one of the two compared arms is the Condorcet winner. Weak-regret guarantees in this paper are stated in this horizonless form by default.

\paragraph{Strong regret.}
For strong regret only, we extend the action space to allow self-comparisons: when the learner plays $(i,i)$, the environment returns a dummy $\mathrm{Bernoulli}(1/2)$ outcome, which the algorithm ignores. The (binary) strong regret is
\[
    r_t^{\mathrm{str}}
    \coloneqq
    1-\mathbf 1\{i_t=a^\star,\ j_t=a^\star\},
    \qquad
    R_T^{\mathrm{str}}
    \coloneqq
    \sum_{t=1}^T r_t^{\mathrm{str}}.
\]
Every distinct-arm comparison thus incurs unit strong regret, while $(a^\star,a^\star)$ is regret-free. We treat both \emph{known-horizon} algorithms, which take $T$ as input, and \emph{horizonless} algorithms, which are evaluated at every finite $T$ without knowing it in advance.

\paragraph{Objective-specific instantiations.}
Our goal is not a single policy that is simultaneously optimal for all three criteria, but a shared tree-guided identification primitive instantiated with three exploitation tails: stop after a high-confidence identification (BAI); a champion-centric online tail (weak regret); or interleaved identifications with self-play (strong regret). The shared primitive is \textsc{TreeAscent}; the objectives differ only through their confidence schedules and exploitation tails.

\begin{algorithm}[h]
\caption{\textsc{DuelTest}$(u,v,\delta)$: Pairwise testing over two arms}
\label{alg:dueltest}

\begin{algorithmic}[1]
\REQUIRE Arms $u\neq v$ and confidence level $\delta\in(0,1)$.
\STATE $n\leftarrow 0$, $S\leftarrow 0$.
\REPEAT
    \STATE $n\leftarrow n+1$.
    \STATE Observe $Y_n\sim\mathrm{Bernoulli}(p_{u,v})$, where $Y_n=1$ means that $u$ wins.
    \STATE $S\leftarrow S+Y_n$ and $\widehat\mu_n\leftarrow S/n$.
\UNTIL{$|\widehat\mu_n-1/2|>c(n,\delta)$, where $c(n,\delta) = \sqrt{\frac{\log\bigl(2n(n+1)/\delta\bigr)}{2n}}$.}
\IF{$\widehat\mu_n>1/2$}
    \STATE \textbf{return} $u$.
\ELSE
    \STATE \textbf{return} $v$.
\ENDIF
\end{algorithmic}
\end{algorithm}

\begin{algorithm}[h]
\caption{\textsc{BKT}$(S,\delta)$: Balanced knockout tournament on a subset of arms}
\label{alg:bkt}
\begin{algorithmic}[1]
\REQUIRE A nonempty arm set $S\subseteq[N]$ and confidence level $\delta\in(0,1)$.
\STATE $X\leftarrow S$, $r\leftarrow1$.
\WHILE{$|X|>1$}
    \STATE Set $\delta_r\leftarrow 6\delta/(\pi^2 r^2)$.
    \STATE Pair the arms in $X$ according to a fixed deterministic order; one arm receives a bye if $|X|$ is odd.
    \STATE For each pair $(u,v)$, keep the winner returned by \textsc{DuelTest}$(u,v,\delta_r)$.
    \STATE Let $X$ be the set of winners together with the possible bye arm; set $r\leftarrow r+1$.
\ENDWHILE
\STATE \textbf{return} the unique arm in $X$.
\end{algorithmic}
\end{algorithm}

\section{Tree-Guided Identification}
\label{sec:tree-guided-identification}

This section isolates the identification phase used by all later policies. The primitive \textsc{TreeAscent} takes an arbitrary starting arm \(b\) and returns a high-confidence candidate for the Condorcet winner using \(O(N)\) comparisons, uniformly over \(b\). Section~\ref{sec:identification-to-exploitation} then attaches objective-specific exploitation policies for the three objectives.

The design starts from a graph-level picture. Under Assumption~\ref{assump:condorcet-gap}, the arms and their pairwise preferences form a (possibly cyclic) directed graph in which the Condorcet winner \(a^\star\) uniquely dominates every other arm. We overlay this graph with a deterministic balanced binary tree \(\mathcal T\) on the \(N\) arms: \(\mathcal T\) encodes no stochastic information and is not learned from data; it serves only as a scaffold that prescribes which comparisons to run and how to allocate confidence across them. The knockout comparisons induced by \(\mathcal T\) then carve out a directed subtree of the preference graph, and the surviving candidate is exactly the arm at the top of this completed beating subtree. The role of \(\mathcal T\) is geometric: it groups the arms into sibling blocks of geometrically growing size around \(b\), so that a size-proportional confidence allocation keeps the total identification cost linear in \(N\). For readability the main text assumes \(N=2^L\); the general case uses formal byes (Appendix~\ref{app:tlr-proofs}).

\subsection{Statistical Primitives}
\label{subsec:statistical-primitives}

We first introduce two algorithmic subroutines. The first (Algorithm~\ref{alg:dueltest}) is a sequential test for a single pair. It repeatedly compares \(u\) and \(v\) until the empirical mean separates from \(1/2\) by an anytime confidence radius, and then returns the empirically better arm. The second (Algorithm~\ref{alg:bkt}) is a balanced knockout tournament restricted to an arbitrary subset. For an arbitrary subset it produces a tournament representative or a candidate for the Condorcet winner. Lemma~\ref{lem:app-bkt-subset} shows that its comparison cost is linear in the subset size, and that if the subset contains a Condorcet arm---in particular, if it contains \(a^\star\)---then the tournament returns that arm with high probability. 

\subsection{Tree-Based Tournament Decomposition}
\label{subsec:fixed-tree-path-decomposition}
We now make the geometric scaffold concrete. The key mechanism is to decompose global identification of the Condorcet winner into a series of local tournaments with geometrically growing sizes, organized along the ancestor path of the starting arm \(b\). Figure~\ref{fig:sibling-blocks} illustrates the construction.

Fix a deterministic balanced binary tree \(\mathcal T\) on the \(N=2^L\) arms, and write \(S(v)\) for the set of arms in the subtree below an internal node \(v\). Now fix a starting arm \(b\in[N]\) and walk from the leaf \(\{b\}\) up to the root. At each level \(h\in\{0,1,\ldots,L\}\), this walk reaches an ancestor whose subtree covers \(2^h\) arms; we call this set the \emph{ancestor block}
\[
    U_h(b),\qquad U_0(b)=\{b\},\qquad U_L(b)=[N],\qquad |U_h(b)|=2^h.
\]
By the binary structure, \(U_h(b)\) splits into two halves of equal size: the previous ancestor block \(U_{h-1}(b)\), and the arms freshly attached at level \(h\), which we call the \emph{sibling block}
\[
    Q_h(b)\;:=\;U_h(b)\setminus U_{h-1}(b),\qquad |Q_h(b)|=2^{h-1}.
\]
Thus \(Q_1(b)\) is a single arm and \(Q_L(b)\) already contains half of all arms. Iterating the split yields the disjoint partition
\[
    [N]=\{b\}\sqcup Q_1(b)\sqcup\cdots\sqcup Q_L(b),\qquad m_h:=|Q_h(b)|=2^{h-1},\qquad \sum_{h=1}^L m_h=N-1.
\]
The Condorcet winner \(a^\star\) therefore either coincides with \(b\) or lies in exactly one sibling block; \textsc{TreeAscent} will identify which by walking the ancestor path and running a knockout tournament on each block. The formal definitions of other notations used in the proofs, together with the partition statement, are given as Lemma~\ref{lem:app-sibling-partition} in Appendix~\ref{app:tree-geometry}.

\begin{figure}[t]
\centering
\begin{tikzpicture}[
    scale=.7,
    every node/.style={font=\small},
    pnode/.style={draw, rounded corners=1.5pt, fill=black!5,
                  inner sep=3pt, minimum height=0.5cm, font=\small},
    qnode/.style={draw, rounded corners=1.5pt, fill=blue!6, densely dashed,
                  inner sep=3pt, minimum height=0.5cm, font=\small,
                  draw=blue!40!black},
    arr/.style={->, >=stealth, semithick, black!60},
    qarr/.style={->, >=stealth, semithick, blue!40!black},
    lbl/.style={font=\footnotesize, black!55},
    qlbl/.style={font=\footnotesize, blue!40!black},
]
 
\def\vs{1.35}  
\def\hs{1.8}   
 
\node[pnode] (root) at (0, 0) {$\mathrm{root}=U_L(b)$};
 
\node[pnode] (UL1) at (-\hs, -\vs) {$U_{L\!-\!1}(b)$};
\node[qnode] (QL)  at ( \hs, -\vs) {$Q_L(b)$};
 
\node[pnode] (UL2) at (-2*\hs+\hs/2, -2*\vs) {$U_{L\!-\!2}(b)$};
\node[qnode] (QL1) at (-\hs+\hs/2+0.15, -2*\vs) {$Q_{L\!-\!1}(b)$};
 
\node[font=\large] (dots) at (-2*\hs+\hs/2, -2*\vs - 0.85) {$\vdots$};
 
\node[pnode] (U1)  at (-2*\hs+\hs/2, -2*\vs - 1.7) {$U_1(b)$};
 
\node[pnode] (leaf) at (-2*\hs, -2*\vs - 1.7 - \vs) {$\{b\}$};
\node[qnode] (Q1)   at (-2*\hs+\hs, -2*\vs - 1.7 - \vs) {$Q_1(b)$};
 
\draw[arr] (root) -- (UL1);
\draw[arr] (UL1)  -- (UL2);
\draw[arr] (U1)   -- (leaf);
 
\draw[qarr] (root) -- (QL);
\draw[qarr] (UL1)  -- (QL1);
\draw[qarr] (U1)   -- (Q1);
 
\node[qlbl, right=3pt of QL]  {$2^{L-1}$ arms};
\node[qlbl, right=3pt of QL1] {$2^{L-2}$ arms};
\node[qlbl, right=3pt of Q1]  {$1$ arm};
 
\node[lbl, left=6pt of root] {level $L$};
\node[lbl, left=6pt of UL1]  {level $L\!-\!1$};
\node[lbl] at (-2*\hs+\hs/2 - 1.45, -2*\vs - 1.7) {level $1$};
\node[lbl, left=6pt of leaf] {level $0$};
 
\draw[decorate, decoration={brace, amplitude=5pt, raise=4pt},
      blue!40!black, semithick]
    ([xshift=2.2cm]QL.north east) -- ([xshift=1.3cm]Q1.south east)
    node[midway, right=14pt, font=\footnotesize, blue!40!black]
    {$\displaystyle\sum_h\!|Q_h|=N\!-\!1$};
 
\node[pnode, minimum width=0.55cm, minimum height=0.28cm, 
      font=\scriptsize] (leg1box) at (2, -2*\vs - 1.7 - \vs + 0.15) {};
\node[font=\scriptsize, right=3pt of leg1box] {champion path};
 
\node[qnode, minimum width=0.55cm, minimum height=0.28cm,
      font=\scriptsize] (leg2box) at (2, -2*\vs - 1.7 - \vs - 0.3) {};
\node[font=\scriptsize, right=3pt of leg2box] {sibling blocks};
 
\end{tikzpicture}
\caption{The ancestor path of a starting arm~$b$ decomposes the arms into 
sibling blocks $Q_1(b),\ldots,Q_L(b)$.  
\textsc{TreeAscent} walks up the champion path, runs a knockout tournament 
on each sibling block to obtain a representative, and merges that representative 
with the current champion. If a block contains $a^\star$, the representative is 
$a^\star$ with high probability. Block sizes grow geometrically ($|Q_h|=2^{h-1}$), 
enabling a size-proportional confidence allocation that keeps the total 
identification cost linear in~$N$.}
\label{fig:sibling-blocks}
\end{figure}

\subsection{Global Winner Identification by \textsc{TreeAscent}}
\label{subsec:tree-ascent-pass}

Based on the tree-based tournament decomposition, we establish the tree-guided identification algorithm as illustrated in Algorithm~\ref{alg:plr-pass}. At each level, the pass runs a knockout tournament on the newly attached sibling block and merges the returned representative with the current champion. The proof only tracks the first level at which \(a^\star\) enters the path, and then shows that correct merge tests preserve it.

\begin{algorithm}[t]
\caption{\textsc{TreeAscent}\((b,\mathcal T,\epsilon)\): Tree-guided winner identification}
\label{alg:plr-pass}
\begin{algorithmic}[1]
\REQUIRE Starting arm \(b\in[N]\); balanced tree \(\mathcal T\) on \(N=2^L\) leaves; confidence \(\epsilon\in(0,1)\).
\ENSURE An identified candidate \(\widetilde b\).
\STATE Compute \(v_0(b)\subset\cdots\subset v_L(b)=\mathrm{root}\) and sibling blocks \(Q_1(b),\ldots,Q_L(b)\).
\STATE Set \(c_0\leftarrow b\).
\FOR{\(h=1,2,\ldots,L\)}
    \STATE \(m_h\leftarrow |Q_h(b)|\).
    \STATE \(\displaystyle \beta_h\leftarrow \frac{\epsilon}{2}\frac{m_h}{N-1}\), \qquad \(\displaystyle \eta_h\leftarrow \frac{\epsilon}{2L}\).
    \STATE \(g_h\leftarrow \textsc{BKT}(Q_h(b),\beta_h)\).
    \STATE \(c_h\leftarrow \textsc{DuelTest}(c_{h-1},g_h,\eta_h)\).
\ENDFOR
\STATE \textbf{return} \(\widetilde b\leftarrow c_L\).
\end{algorithmic}
\end{algorithm}

The subtree budgets \(\beta_h\) are proportional to the block sizes \(m_h\): this proportionality is what keeps \(\sum_h m_h\log(1/\beta_h)\) linear in \(N\), rather than \(N\log\log N\) as a uniform allocation would give. The merge budgets \(\eta_h\) are uniform and contribute at most \(\epsilon/2\) total failure probability.

\begin{theorem}[Tree-guided identification]
\label{thm:plr-repair-main}
For every fixed balanced tree \(\mathcal T\), every starting arm \(b\in[N]\), and every \(\epsilon\in(0,1)\), the output of \textsc{TreeAscent}\((b,\mathcal T,\epsilon)\) satisfies
\[
    \mathbb P(\widetilde b=a^\star)\ge 1-\epsilon.
\]
Moreover, if \(\tau_{\mathrm{TA}}(b,\epsilon)\) denotes the total number of comparisons used by the pass, then
\[
    \mathbb E[\tau_{\mathrm{TA}}(b,\epsilon)]
    \le
    \frac{C N}{\Delta^2}
    \left(
        \log\frac1\epsilon+
        \log\frac1\Delta+1
    \right)
\]
for a universal constant \(C>0\). The same statement holds for arbitrary \(N\ge2\) under the formal-bye implementation in Appendix~\ref{app:tlr-proofs}.
\end{theorem}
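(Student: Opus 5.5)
The plan has two parts, correctness and sample complexity, and both rest on two facts about the primitives. First, for \textsc{DuelTest}$(u,v,\delta)$: by Hoeffding's inequality and a union bound over $n$ with weights $\delta/(n(n+1))$, the event $\{\exists n: |\widehat\mu_n-p_{u,v}|>c(n,\delta)\}$ has probability at most $\delta$. Outside this event the test returns the arm that truly wins the pair, so it is correct with probability at least $1-\delta$. It also stops once $2c(n,\delta)<\Delta_{u,v}$, which gives $\mathbb E[n]\le C\Delta^{-2}(\log(1/\delta)+\log(1/\Delta)+1)$; the $\log(1/\Delta)$ comes from inverting $n\gtrsim \Delta^{-2}\log(n/\delta)$, and the failure branch contributes a summable tail. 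Second, I invoke Lemma~\ref{lem:app-bkt-subset}. If $a^\star\in S$, then \textsc{BKT}$(S,\delta)$ returns $a^\star$ with probability at least $1-\delta$, since $\sum_r\delta_r\le\delta$ and $a^\star$ only needs to win one duel per round. Its expected cost is $O\bigl(|S|\Delta^{-2}(\log(1/\delta)+\log(1/\Delta)+1)\bigr)$. The reason is that round $r$ has about $|S|/2^r$ duels, each at level $\delta_r\asymp\delta/r^2$, and $\sum_r 2^{-r}\log(r^2/\delta)=O(\log(1/\delta)+1)$.

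\emph{Correctness.} By Lemma~\ref{lem:app-sibling-partition}, either $a^\star=b$ or $a^\star\in Q_{h^\star}(b)$ for a unique $h^\star$. Let $E$ be the intersection of two events: the \textsc{BKT} on $Q_{h^\star}$ returns $a^\star$, and every merge \textsc{DuelTest} returns the truly better arm (if $a^\star=b$, $E$ is just the second event). The merge tests fail with total probability at most $L\cdot\epsilon/(2L)=\epsilon/2$, and the single \textsc{BKT} fails with probability at most $\beta_{h^\star}\le\epsilon/2$. So $\mathbb P(E^c)\le\epsilon$ by a union bound. On $E$ we have $c_{h^\star}=a^\star$. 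Every later merge involves $a^\star$ and is correct, hence returns $a^\star$, which gives $c_L=a^\star$. The other blocks' tournaments can return anything; this does not matter, because every merge test is correct for any pair of distinct arms, and each duel uses fresh samples under its own conditional Bernoulli law. When $c_{h-1}=g_h$ I treat the merge as trivial with no cost.

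\emph{Cost.} By linearity, $\mathbb E[\tau_{\mathrm{TA}}]$ splits into the \textsc{BKT} costs and the merge costs. The cost bounds hold for any $S$, with no reliance on a Condorcet arm in $S$, because the stopping bound of \textsc{DuelTest} uses only $\Delta_{u,v}\ge\Delta$. Summing the bound for the tournaments over the levels gives
\[
\sum_{h=1}^L\frac{m_h}{\Delta^2}\Bigl(\log\frac{2(N-1)}{\epsilon m_h}+\log\frac1\Delta+1\Bigr).
\]
The key computation, and the one I expect to be the crux, is the bound $\sum_h m_h\log\frac{N-1}{m_h}=O(N)$. It follows from $m_h=2^{h-1}$: writing $k=L-h$, the sum is at most $\sum_{k\ge0}2^{L-1-k}(k+1)\log 2=O(N)$. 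The merge costs total $L\cdot O\bigl(\Delta^{-2}(\log(2L/\epsilon)+\log(1/\Delta)+1)\bigr)$. Since $L\log L=O(N)$, this is absorbed into the $N$-term bound. Finally, for general $N$ I would pad to $2^{\lceil\log_2N\rceil}<2N$ leaves using formal byes that cost no samples. Block sizes then stay at most $2^{h-1}$ with true sizes summing to $N-1$, so I need to recheck only that the size-proportional $\beta_h$ still yields the same $O(N)$ entropy-type sum.
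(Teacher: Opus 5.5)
Your proposal is correct and follows the paper's proof essentially step for step: capture of $a^\star$ by the \textsc{BKT} on its winner-entry block, preservation through the merges, a union bound giving $\beta_{h_\star}+\sum_h\eta_h\le\epsilon$, and a cost split into tournament and merge parts controlled by $\sum_h 2^{h-1}(L-h+1)\log 2\le 2N\log 2$ and $L\log L=O(N)$. Requiring every merge to be correct, not only those involving $a^\star$, is a harmless strengthening under the global gap. The general-$N$ check you leave open closes immediately in the paper's formal-bye version, which sets $\beta_h=(\epsilon/2)\bar m_h/(M-1)$ using the capacity $\bar m_h=2^{h-1}\ge m_h$; then $\sum_h m_h\log(1/\beta_h)\le\sum_h\bar m_h\log(1/\beta_h)$, and the same computation runs with $M<2N$ in place of $N$.
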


\section{Objective-Specific Exploitation}
\label{sec:identification-to-exploitation}

This section converts \textsc{TreeAscent} into algorithms for the three objectives of Section~\ref{sec:pre}, following the principle: \emph{shared identification + objective-specific exploitation}.

\subsection{BAI: Identify Then Stop}
\label{subsec:bai-policy}

The BAI policy \textsc{TG-ITE-BAI} runs \(\textsc{TreeAscent}(b_0,\mathcal T,\delta)\), outputs the returned arm \(\widehat a\), and stops.

\begin{corollary}[Fixed-confidence $\delta$-BAI]
\label{cor:tlcr-bai}
For every \(\delta\in(0,1)\), \textsc{TG-ITE-BAI} satisfies
\[
    \mathbb P(\widehat a=a^\star)\ge 1-\delta
\]
and its expected stopping time is bounded by
\[
    \mathbb E[\tau_{\mathrm{BAI}}]
    \le
    \frac{C N}{\Delta^2}
    \left(
        \log\frac1\delta+
        \log\frac1\Delta+1
    \right).
\]
\end{corollary}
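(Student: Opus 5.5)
The corollary follows by instantiating Theorem~\ref{thm:plr-repair-main} with $\epsilon=\delta$ and an arbitrary fixed starting arm $b_0$. \textsc{TG-ITE-BAI} is exactly one call of \textsc{TreeAscent}$(b_0,\mathcal T,\delta)$ followed by termination, so there is nothing to analyze beyond that call.

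For correctness, Theorem~\ref{thm:plr-repair-main} holds for every fixed tree $\mathcal T$, every starting arm $b$, and every $\epsilon\in(0,1)$. Taking $b=b_0$ and $\epsilon=\delta$ gives $\mathbb P(\widehat a=a^\star)=\mathbb P(\widetilde b=a^\star)\ge 1-\delta$. This is $\delta$-correctness. The choice of $b_0$ is deterministic, or independent of the samples if it is drawn at random. In the random case we condition on $b_0$ and use that the bound is uniform in $b$.

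For sample complexity, the stopping time $\tau_{\mathrm{BAI}}$ equals the number of comparisons made by the pass, i.e., $\tau_{\mathrm{TA}}(b_0,\delta)$, since no comparisons occur after the output. The expectation bound of Theorem~\ref{thm:plr-repair-main} with $\epsilon=\delta$ then gives the stated bound with the same universal constant $C$. For general $N\ge 2$ we invoke the formal-bye version of the theorem in the same way.

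There is one point to check, which is the only real obstacle. $\mathbb E[\tau_{\mathrm{TA}}]$ must be finite, meaning the pass terminates almost surely. This is already part of the theorem's expectation bound, because every \textsc{DuelTest} between distinct arms has gap at least $\Delta>0$ and therefore stops almost surely. So no additional argument is required.
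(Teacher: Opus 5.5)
Your proposal is correct and matches the paper's argument: \textsc{TG-ITE-BAI} is a single \textsc{TreeAscent}$(b_0,\mathcal T,\delta)$ pass followed by stopping, so the corollary is Theorem~\ref{thm:plr-repair-main} (that is, Theorems~\ref{thm:app-plr-correctness} and~\ref{thm:app-plr-cost}) instantiated at $\epsilon=\delta$. Your remarks on a randomly drawn $b_0$ and on almost-sure termination are harmless extras, already covered by the theorem's uniformity over the starting arm and by its finite expectation bound.
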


\paragraph{Remark.} The result in Corollary~\ref{cor:tlcr-bai} guarantees $O(\frac{N}{\Delta^2}\log\frac{1}{\Delta})$ sample complexity. Compared with existing BAI algorithms, our result matches the optimal $O(N)$, and is near-optimal in $\Delta$ without requiring commonly adopted assumptions such as strong stochastic transitivity and stochastic triangle inequality. 
The detailed discussions are provided in Section~\ref{rw:bai}.  

\subsection{Weak Regret: Winner-Centric Exploitation}
\label{subsec:weak-policy}
The weak-regret policy uses \textsc{TreeAscent} for a constant-confidence warm start to identify an initial winner, then runs iterative winner-centric exploitation subroutine \textsc{ScreenAndReplace} (Algorithm~\ref{alg:screen-and-replace}). In each epoch, the algorithm screens all challengers against the current winner and, if any appears to beat it, runs a replacement tournament on the winner together with those apparent beaters. The resulted weak regret algorithm \textsc{TG-ITE-Weak} is illustrated in Algorithm~\ref{alg:tlcr-weak}.

\begin{algorithm}[t]
\caption{\textsc{ScreenAndReplace}\((c,\rho,\kappa)\)}
\label{alg:screen-and-replace}
\begin{algorithmic}[1]
\REQUIRE Incumbent arm \(c\); screening confidence \(\rho\); replacement confidence \(\kappa\).
\STATE \(B\leftarrow\emptyset\).
\FOR{each \(j\in[N]\setminus\{c\}\) in a fixed order}
    \STATE \(w\leftarrow \textsc{DuelTest}(c,j,\rho)\) using fresh samples.
    \IF{\(w=j\)}
        \STATE \(B\leftarrow B\cup\{j\}\).
    \ENDIF
\ENDFOR
\IF{\(B=\emptyset\)}
    \STATE \textbf{return} \(c\).
\ELSE
    \STATE \textbf{return} \textsc{BKT}$(B\cup\{c\}, \kappa)$. 
\ENDIF
\end{algorithmic}
\end{algorithm}

\begin{algorithm}[t]
\caption{\textsc{TG-ITE-Weak}\((b_0,\mathcal T)\)}
\label{alg:tlcr-weak}
\begin{algorithmic}[1]
\REQUIRE Starting arm \(b_0\in[N]\); fixed balanced tree \(\mathcal T\).
\STATE Run a constant-confidence warm start \(c_0\leftarrow \textsc{TreeAscent}(b_0,\mathcal T,1/4)\).
\FOR{\(s=1,2,3,\ldots\)}
    \STATE Set \(\rho_s\leftarrow 4^{-(s+2)}\) and \(\kappa_s\leftarrow 4^{-(s+2)}\).
    \STATE \(c_s\leftarrow \textsc{ScreenAndReplace}(c_{s-1},\rho_s,\kappa_s)\).
\ENDFOR
\end{algorithmic}
\end{algorithm}

\begin{theorem}[Online weak regret with linear dependence on \(N\)]
\label{thm:online-weak-linear-main} 
Under Assumption~\ref{assump:condorcet-gap}, the weak regret of Algorithm~\ref{alg:tlcr-weak} satisfies
\[
    \mathbb E[R_\infty^{\mathrm{weak}}]
    \le
    \frac{C N}{\Delta^2}
    \left(
        \log\frac1\Delta+1
    \right)
\]
for a universal constant \(C>0\). In particular, \(R_\infty^{\mathrm{weak}}<\infty\) almost surely.
\end{theorem}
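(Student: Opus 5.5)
We bound the weak regret by splitting the run into the warm start and the epochs $s\ge1$. Since weak regret is at most the number of comparisons, the warm start contributes at most $\mathbb E[\tau_{\mathrm{TA}}(b_0,1/4)]\le \frac{CN}{\Delta^2}(\log 4+\log\frac1\Delta+1)$ by Theorem~\ref{thm:plr-repair-main}. For the epochs we first collect the needed facts about \textsc{DuelTest} and \textsc{BKT}. The anytime radius $c(n,\delta)$ together with a union bound over $n$ gives the following: \textsc{DuelTest}$(u,v,\delta)$ returns the arm with $p>1/2$ with probability at least $1-\delta$. Its expected length is $O(\Delta_{u,v}^{-2}(\log\frac1\delta+\log\frac1{\Delta_{u,v}}))$, which is at most $O(\Delta^{-2}(\log\frac1\delta+\log\frac1\Delta+1))$. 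For \textsc{BKT} we use Lemma~\ref{lem:app-bkt-subset}: on a set of size $k$ it costs $O(\frac{k}{\Delta^2}(\log\frac1\kappa+\log\frac1\Delta+1))$, and it returns $a^\star$ with probability $\ge1-\kappa$ whenever $a^\star$ is in the set.

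\textbf{Epochs with the correct incumbent.} Suppose $c_{s-1}=a^\star$. Every screening duel then involves $a^\star$, so screening incurs zero weak regret. Each test wrongly puts $j$ into $B$ with probability $\le\rho_s$, so the expected size is $\mathbb E|B|\le N\rho_s$. The replacement \textsc{BKT} on $B\cup\{a^\star\}$ is the only source of regret. Bounding its cost by $O(\frac{|B|+1}{\Delta^2}(\log\frac1{\kappa_s}+\log\frac1\Delta+1))$ times $\mathbf 1\{B\ne\emptyset\}$ gives expected regret at most $O(N\rho_s\cdot\frac{1}{\Delta^2}(s+\log\frac1\Delta+1))$. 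We handle the $|B|$ factor by summing over $j$: the event $\{j\in B\}$ contributes the indicator times the tournament cost. Since the size of $B$ is independent of the future fresh samples, $\mathbb E[(|B|+1)\mathbf 1\{B\neq\emptyset\}]\le 2\mathbb E|B|$. With $\rho_s=4^{-(s+2)}$, summing over $s$ gives $O(\frac{N}{\Delta^2}(\log\frac1\Delta+1))$. In addition, the incumbent is lost only if $B\ne\emptyset$ and \textsc{BKT} fails, which happens with probability $\le \kappa_s$.

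\textbf{Epochs with a wrong incumbent.} Suppose $c_{s-1}\neq a^\star$. The whole epoch may incur regret, at cost $O(\frac{N}{\Delta^2}(s+\log\frac1\Delta+1))$ in expectation (screening $N-1$ duels plus a \textsc{BKT} on at most $N$ arms). On the other hand, the test against $a^\star$ puts $a^\star\in B$ with probability $\ge1-\rho_s$, and then \textsc{BKT} returns $a^\star$ with probability $\ge1-\kappa_s$. So a wrong incumbent persists with probability at most $\rho_s+\kappa_s\le 2\cdot4^{-(s+2)}$, conditionally on the past. Now let $W_s=\{c_{s-1}\ne a^\star\}$. Then $\mathbb P(W_1)\le 1/4$, and $\mathbb P(W_{s+1})\le \mathbb P(W_s)\cdot 2\cdot4^{-(s+2)}+\kappa_s \le 4^{-s}$ up to constants. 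The expected regret from wrong epochs is $\sum_s \mathbb P(W_s)\,O(\frac{N}{\Delta^2}(s+\log\frac1\Delta+1))$. The conditional expected epoch cost given $W_s$ is bounded because the samples are fresh, and the series $\sum_s 4^{-s}s$ converges. Hence this part is also $O(\frac{N}{\Delta^2}(\log\frac1\Delta+1))$.

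Adding the three contributions gives the bound on $\mathbb E[R^{\mathrm{weak}}_\infty]$, and finiteness of the expectation gives $R^{\mathrm{weak}}_\infty<\infty$ almost surely.

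\textbf{Main obstacle.} The delicate step is making the conditioning rigorous. We need expected durations of \textsc{DuelTest} and \textsc{BKT} conditioned on the event that the incumbent is wrong, or that $B\ne\emptyset$. This is handled by the fresh-sample property: the event is measurable with respect to the past, so the conditional expectation of the next call's length equals its unconditional bound. The same property covers the $|B|$-weighted tournament cost.
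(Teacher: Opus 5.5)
Your proposal is correct and follows essentially the same route as the paper. The warm-start cost comes from the \textsc{TreeAscent} bound. Each epoch's regret is bounded by $\mathbf 1\{c_{s-1}\neq a^\star\}\cdot(\text{full epoch cost})+\rho_s\cdot(\text{replacement cost})$, using fresh-sample conditioning together with $(|B|+1)\mathbf 1\{B\neq\emptyset\}\le 2|B|$. The geometric recursion $\Pr(c_{s-1}\neq a^\star)\lesssim 4^{-s}$ is then summed against the $O(s)$ per-epoch costs.

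There are only two minor differences. You use $\Pr(c_0\neq a^\star)\le 1/4$ from the warm start, whereas the paper takes $p_1\le 1$, which makes its tail bound uniform over the starting incumbent. The paper also explicitly invokes monotone convergence to exchange the infinite sum over epochs with the expectation.
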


Furthermore, by adjusting the warm start confidence in Algorithm~\ref{alg:tlcr-weak}, we can obtain an algorithm jointly optimizing BAI and weak regret.
\begin{corollary}
If running warm start \(c_0\leftarrow \textsc{TreeAscent}(b_0,\mathcal T, \delta)\) and output $c_0$ as the best arm, then Algorithm~\ref{alg:tlcr-weak} would achieve $O(N)$ sample complexity in $\delta$-BAI and $O(N)$ weak regret.  
\label{cor:dual}
\end{corollary}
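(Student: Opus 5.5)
The plan is to split Corollary~\ref{cor:dual} into its BAI part and its weak-regret part, and to reuse Theorem~\ref{thm:plr-repair-main} and the argument behind Theorem~\ref{thm:online-weak-linear-main} almost verbatim. The only change to Algorithm~\ref{alg:tlcr-weak} is the warm-start confidence, $\delta$ instead of $1/4$. The output $c_0$ is declared at the stopping time $\tau$ equal to the end of the warm start. Later epochs only affect regret, not the BAI output.

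\emph{BAI part.} By Theorem~\ref{thm:plr-repair-main} with $\epsilon=\delta$, we get $\mathbb P(c_0=a^\star)\ge1-\delta$. Also $\mathbb E[\tau]=\mathbb E[\tau_{\mathrm{TA}}(b_0,\delta)]\le \frac{CN}{\Delta^2}(\log\frac1\delta+\log\frac1\Delta+1)$. This is exactly Corollary~\ref{cor:tlcr-bai}, so the sample complexity is $O(N)$ for fixed $\delta,\Delta$.

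\emph{Weak-regret part.} I decompose $\mathbb E[R^{\mathrm{weak}}_\infty]$ into the warm-start regret and the exploitation-tail regret. The warm-start regret is at most the number of warm-start comparisons, hence at most $\mathbb E[\tau_{\mathrm{TA}}(b_0,\delta)]$, which is $O(N)$ by the bound above.

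For the tail, I reuse the proof of Theorem~\ref{thm:online-weak-linear-main}. Its tail analysis depends on the warm start only through the failure probability $\mathbb P(c_0\neq a^\star)\le 1/4$. The tail argument bounds the regret of epoch $s$ by conditioning on whether $c_{s-1}=a^\star$:
\begin{itemize}
\item if $c_{s-1}=a^\star$, every screening duel involves $a^\star$ and costs zero weak regret, and a spurious replacement happens with probability $O(N\rho_s+\kappa_s)$;
\item if $c_{s-1}\neq a^\star$, the epoch costs $O(\frac{N}{\Delta^2}(\log\frac1{\rho_s}+\log\frac1\Delta))$, and the incumbent is repaired with probability at least $1-O(\rho_s+\kappa_s)$.
\end{itemize}
Since $\delta\le 1/4$ can be assumed without loss of generality (taking $\delta'=\min(\delta,1/4)$ only changes constants), the probability that $c_{s-1}\neq a^\star$ is at most the same geometric sequence as in the original proof. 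Each term of the resulting tail sum is therefore dominated by the corresponding term for the $1/4$ warm start, and the tail regret stays $\le \frac{CN}{\Delta^2}(\log\frac1\Delta+1)$.

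The main obstacle is making this monotonicity precise. I must check that the proof of Theorem~\ref{thm:online-weak-linear-main} uses the warm start only through $\mathbb P(c_0\ne a^\star)\le1/4$, and not through its cost or through particular arms. It also needs fresh-sample independence between the warm start and the subsequent \textsc{ScreenAndReplace} calls. If the original proof instead charges a per-epoch cost unconditionally, I would redo the epoch-$s$ bound as $\mathbb P(\text{some failure before epoch } s)\cdot O(\frac{N}{\Delta^2}(s+\log\frac1\Delta))$. Here the failure probability is at most $\delta+\sum_{s'<s}O(N4^{-s'})$, and the sum is summable. Summing the two parts gives $O(N)$ weak regret, namely $O(\frac{N}{\Delta^2}(\log\frac1\delta+\log\frac1\Delta+1))$, simultaneously with $\delta$-correct identification.
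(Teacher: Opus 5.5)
Your proposal is correct and follows the paper's proof. The BAI part is Theorems~\ref{thm:app-plr-correctness} and~\ref{thm:app-plr-cost} at confidence $\delta$, the warm-start weak regret is bounded by its comparison count, and the tail is charged separately via Theorem~\ref{thm:app-online-weak}.

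The ``main obstacle'' you flag is not a real issue. Theorem~\ref{thm:app-online-weak} is already uniform over the starting incumbent: it uses only $p_1\le 1$, and $p_{s+1}\le\rho_s+\kappa_s$ holds whatever $c_{s-1}$ is. So you need neither the monotonicity comparison nor the reduction to $\delta\le 1/4$, which would in fact change the algorithm when $\delta>1/4$. Drop your fallback as well: a cumulative ``some failure before epoch $s$'' probability does not decay in $s$, so weighting per-epoch costs by it makes the sum diverge.
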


\paragraph{Remark.} The result in Theorem~\ref{thm:online-weak-linear-main} guarantees $O(\frac{N}{\Delta^2}\log\frac{1}{\Delta})$ weak regret. Compared with state-of-the-art weak regret specific algorithm~\citep{saad2024weak} with $O(N/\Delta^{\star})$ guarantee in which $\Delta^{\star} \;\coloneqq\; \min_{i\neq a^\star}\,p_{a^\star,i} - 1/2$ is the Condorcet winner gap, our result matches the optimal $O(N)$. Even though the dependence on $\Delta$ is sub-optimal and we additionally need the positive global gap assumption, our approach is the first winner-stays (\textsc{WS}) style algorithm that achieves $O(N)$, improving the original \textsc{WS-W} algorithm~\citep{chen2017dueling} which has $O(N\log N/\Delta^6)$ guarantee and requires the stronger total order assumption.  

Corollary~\ref{cor:dual} improves the sub-optimal $O(\log N)$ factors in \citet{ma2025dual}, yielding bounds that are linear in $N$ for both BAI and weak regret. 
In terms of arm gaps, however, we conjecture it would be hard to achieve dual optimality further since the BAI complexity is known to be lower bounded by $\Omega(1/\Delta^2)$, which is more strict than $\Omega(1/\Delta^\star)$ required by optimality in weak regret. 
An interesting observation is that, if one can establish an $O(N)$ BAI algorithm in the same mild assumption as $O(N)$ weak regret algorithm, then we can directly combine them to achieve $O(N)$ dual optimality in theory. 
While we note that this strategy loses an important advantage in our approach: The \textsc{TreeAscent} warm start not only finds an initial incumbent, but also establishes a \emph{coarse} multi-level winner structure over all arms through the tree-based tournament within $O(N)$ instead of the commonly required $O(N\log N)$ complexity for ranking. We conjecture that this structural information would be essential when targeting at the dual optimality in weak regret and more relaxed version of BAI.

\subsection{Strong Regret: Self-Play Exploitation}
\label{subsec:strong-policy}

Binary strong regret has the opposite geometry: any comparison for distinct arms incurs one unit of regret, even if one of the two arms is \(a^\star\). The policy therefore identifies a reliable incumbent and self-plays. The following results show the regret guarantees, whose proofs are left in Appendix~\ref{app:strong-proofs}.

\paragraph{Known horizon.}
If the horizon \(H\) is known, the policy runs \textsc{TreeAscent} at confidence \(\epsilon_H=1/(H+1)\) and then self-plays.

\begin{algorithm}[t]
\caption{\textsc{TG-ITE-Strong-KnownHorizon}\((b_0,\mathcal T, H)\)}
\label{alg:tlcr-strong-known}
\begin{algorithmic}[1]
\REQUIRE Starting arm \(b_0\); fixed balanced tree \(\mathcal T\); horizon \(H\ge1\).
\STATE Set \(\epsilon_H\leftarrow 1/(H+1)\).
\STATE Run \textsc{TreeAscent}\((b_0,\mathcal T,\epsilon_H)\) until the pass terminates or the horizon \(H\) expires.
\IF{the pass terminates by time \(H\) with output \(\widehat a\)}
    \STATE For every remaining round up to time \(H\), play the self-comparison \((\widehat a,\widehat a)\).
\ELSE
    \STATE Halt at time \(H\); all rounds up to \(H\) were spent inside the identification pass.
\ENDIF
\end{algorithmic}
\end{algorithm}

\begin{theorem}[Known-horizon strong regret]
\label{thm:strong-known-main}
Under the self-play convention, Algorithm~\ref{alg:tlcr-strong-known} satisfies, for every horizon \(H\ge1\),
\[
    \mathbb E[R_H^{\mathrm{str}}]
    \le
    \frac{C N}{\Delta^2}
    \left(
        \log(H+1)+
        \log\frac1\Delta+1
    \right)+1.
\]
\end{theorem}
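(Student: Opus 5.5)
We bound the strong regret on the horizon $H$ by splitting the rounds into the identification phase and the self-play tail. Let $\tau=\tau_{\mathrm{TA}}(b_0,\epsilon_H)$ be the (untruncated) number of comparisons \textsc{TreeAscent}$(b_0,\mathcal T,\epsilon_H)$ would use, and let $E=\{\widehat a=a^\star\}$ be the event that the pass returns the Condorcet winner. Since every round carries binary strong regret at most one, the rounds spent inside the pass contribute at most $\min(\tau,H)\le\tau$. If the pass does not terminate by time $H$, there are no other rounds. If it terminates, each remaining self-play round $(\widehat a,\widehat a)$ incurs regret $\mathbf 1\{\widehat a\neq a^\star\}$. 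This gives the pathwise decomposition
\[
    R_H^{\mathrm{str}}\;\le\;\tau+H\,\mathbf 1\{E^c\}.
\]

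Taking expectations and invoking Theorem~\ref{thm:plr-repair-main} with $\epsilon=\epsilon_H=1/(H+1)$, we get
\[
    \mathbb E[\tau]\le \frac{CN}{\Delta^2}\Bigl(\log(H+1)+\log\frac1\Delta+1\Bigr)
    \quad\text{and}\quad
    \mathbb P(E^c)\le \frac{1}{H+1}.
\]
Hence $H\,\mathbb P(E^c)\le H/(H+1)\le 1$, which accounts for the additive $+1$ in the statement. The two bounds together give the claim.

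The one point needing care is the truncation at time $H$. The theorem's sample-complexity and correctness guarantees concern the untruncated pass. Because the algorithm's actions up to time $\min(\tau,H)$ coincide with those of the untruncated pass, we couple the two runs on the same sample sequence. The output $\widehat a$ is then only used when $\tau\le H$, and on that event it equals the untruncated output. So $\mathbf 1\{\widehat a\neq a^\star,\ \tau\le H\}\le\mathbf 1\{E^c\}$ for the untruncated $E$, and the bound $\min(\tau,H)\le \tau$ lets us use $\mathbb E[\tau]$ directly.

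No step is genuinely hard. The only subtlety is this coupling argument: correctness of the output is a statement about the full pass, so we must not condition on termination by time $H$, which could bias $\mathbb P(E^c)$. The self-comparison convention ensures the dummy outcomes do not interact with the pass, since they occur only after it ends.
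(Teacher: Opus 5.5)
Your proposal is correct and follows essentially the same route as the paper: define the untruncated pass cost and output counterfactually, use the pathwise bound $R_H^{\mathrm{str}}\le \tau+H\,\mathbf 1\{\widehat a\neq a^\star\}$, and then apply the correctness and cost guarantees of \textsc{TreeAscent} with $\epsilon_H=1/(H+1)$. Your truncation remark matches the paper's treatment. The pass has the same sample path up to $\min(\tau,H)$, and its output is used only when $\tau\le H$, so no conditioning on early termination is needed.
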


\paragraph{Unknown horizon.}
When the horizon is unknown, the policy cannot choose a single confidence level \(1/(T+1)\). A robust anytime alternative is to interleave increasingly confident identified incumbents with increasingly long self-play blocks.

\begin{algorithm}[t]
\caption{\textsc{TG-ITE-Strong-Anytime}\((b_0,\mathcal T)\)}
\label{alg:tlcr-strong-anytime}
\begin{algorithmic}[1]
\REQUIRE Starting arm \(b_0\); fixed balanced tree \(\mathcal T\).
\STATE \(c_0\leftarrow b_0\).
\FOR{\(s=1,2,3,\ldots\)}
    \STATE Set \(\epsilon_s\leftarrow 2^{-2^s}\) and \(B_s\leftarrow 2^{2^s}\).
    \STATE \(c_s\leftarrow \textsc{TreeAscent}(c_{s-1},\mathcal T,\epsilon_s)\) using fresh samples.
    \STATE Play \((c_s,c_s)\) for the next \(B_s\) rounds.
\ENDFOR
\end{algorithmic}
\end{algorithm}

\begin{theorem}[Horizonless strong regret]
\label{thm:strong-anytime-main}
Under the self-play convention, Algorithm~\ref{alg:tlcr-strong-anytime} satisfies, for every \(T\ge 2\),
\[
    \mathbb E[R_T^{\mathrm{str}}]
    \le
    \frac{C N}{\Delta^2}
    \left(
        \log(T+1)
        +
        (\log\log(T+2)+1)
        \left(
            \log\frac1\Delta+1
        \right)
    \right)
    +C'(\log\log(T+2)+1)
\]
for universal constants \(C,C'>0\). For any fixed \(\Delta>0\), the leading term as \(T\to\infty\) is \(O(N\Delta^{-2}\log T)\).
\end{theorem}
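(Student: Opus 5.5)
The plan is to decompose the horizon into epochs $s=1,2,\ldots$ of Algorithm~\ref{alg:tlcr-strong-anytime}. Epoch $s$ consists of an identification pass $\textsc{TreeAscent}(c_{s-1},\mathcal T,\epsilon_s)$ followed by $B_s=2^{2^s}$ self-play rounds. Strong regret accrues in two ways: every comparison inside a pass costs one unit, and every self-play round with $c_s\neq a^\star$ costs one unit.

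First, I will determine the last epoch that can intersect $[1,T]$. Since epoch $s$ alone contains $B_s=2^{2^s}$ rounds, any epoch $s$ that starts by time $T$ must have $2^{2^{s-1}}\le T$. This gives $s\le S_T:=\lceil\log_2\log_2(T+2)\rceil+1=O(\log\log(T+2)+1)$.

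The identification cost is bounded by summing Theorem~\ref{thm:plr-repair-main} over $s\le S_T$. That theorem holds uniformly over the starting arm $b$, and each pass uses fresh samples. Conditioning on $c_{s-1}$ therefore gives
\[
\mathbb E[\tau_{\mathrm{TA}}(c_{s-1},\epsilon_s)]\le \frac{CN}{\Delta^2}\Bigl(2^s\log 2+\log\frac1\Delta+1\Bigr).
\]
Summing over $s\le S_T$, the term $\sum_s 2^s$ is at most $2^{S_T+1}=O(\log(T+2))$, which gives the $\frac{CN}{\Delta^2}\log(T+1)$ term. The terms $\log\frac1\Delta+1$ are counted $S_T$ times, which gives the $(\log\log(T+2)+1)(\log\frac1\Delta+1)$ term.

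A subtlety is that the final pass may be truncated at time $T$. Its contribution is bounded by its full expected length, which is already included in the sum above. I must also justify $2^{S_T}=O(\log T)$ given the ceiling. This holds because $2^{S_T}\le 4\cdot 2^{\log_2\log_2(T+2)}=4\log_2(T+2)$.

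The self-play cost is handled next. By Theorem~\ref{thm:plr-repair-main}, $\mathbb P(c_s\neq a^\star)\le\epsilon_s$, regardless of the conditioning on $c_{s-1}$. The expected regret from self-play block $s$ is therefore at most $\epsilon_sB_s=1$. Summing over $s\le S_T$ gives $S_T\le C'(\log\log(T+2)+1)$, which is exactly the additive term in the statement.

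The main obstacle is the bookkeeping needed to apply Theorem~\ref{thm:plr-repair-main} to passes whose starting arm $c_{s-1}$ is random and data-dependent. Uniformity over $b$ combined with fresh samples handles this cleanly through the tower property. The only other delicate point is making sure the last-epoch index and the doubly exponential schedule give $\sum_{s\le S_T}2^s=O(\log T)$ and not something larger.

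The final asymptotic claim then follows directly. For fixed $\Delta$, the $\log\log T$ terms are $o(\log T)$, so the leading term is $O(N\Delta^{-2}\log T)$.
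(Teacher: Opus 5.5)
Your proposal is correct and follows the paper's proof essentially step for step: a deterministic cap on the number of started phases via $B_{s-1}\le T$, a pathwise overcount by full pass lengths plus $B_s\mathbf 1\{c_s\neq a^\star\}$, conditional use of the starting-arm-uniform \textsc{TreeAscent} guarantees with fresh samples, and the fallback term $B_s\epsilon_s=1$ summed $S_T$ times. Your cap $S_T=\lceil\log_2\log_2(T+2)\rceil+1$ is one smaller than the paper's $2+\lceil\log_2\log_2(T+2)\rceil$, which affects only the constants.
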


\paragraph{Remark.} Existing strong regret specialized algorithms~\citep{chen2017dueling,yue2011beat,yue2012k,zoghi2014relative} usually achieve $O(N\log T/\Delta)$. In comparison, our results are optimal in $N, T$, while sub-optimal in $\Delta$. This reflects the fundamental trade-off between BAI and strong regret as in general bandit learning. While we conjecture this gap can be eliminated if the BAI objective in \textsc{TreeAscent} is properly relaxed. 

\section{Known-Gap Budgeted Extension}
\label{sec:known-gap-budgeted-extension}

The logarithmic factor \(\log(1/\Delta)\) in the preceding bounds comes from the anytime boundary inside \textsc{DuelTest}: the test is gap-oblivious and must keep a valid confidence sequence over all possible stopping times. If a little more side information is available, this factor can be removed by replacing \textsc{DuelTest} with a fixed-budget test in the spirit of the comparison primitive of \citet{falahatgar2017maximum}. Define the Condorcet gap
\[
    \Delta^\star:=\min_{j\neq a^\star}\left(p_{a^\star,j}-\frac12\right).
\]
In this section, the learner is additionally given a valid lower bound \(g\in(0,1/2]\) such that \(g\le\Delta^\star\). In particular, under Assumption~\ref{assump:condorcet-gap}, any known global lower bound \(g\le\Delta\) is valid. The key point is that the correctness proofs of \textsc{BKT}, \textsc{TreeAscent}, and \textsc{ScreenAndReplace} only need accurate decisions on comparisons involving \(a^\star\); all comparisons between nonwinner arms may return arbitrary representatives as long as their budgets are bounded.

\begin{algorithm}[H]
\caption{\textsc{BudgetedDuelTest}\((u,v,g,\delta)\)}
\label{alg:budgeted-dueltest}
\begin{algorithmic}[1]
\REQUIRE Arms \(u\neq v\); valid gap lower bound \(g\in(0,1/2]\); confidence level \(\delta\in(0,1)\).
\STATE Set \(M\leftarrow \left\lceil \frac{1}{2g^2}\log\frac{4}{\delta}\right\rceil\) and \(S\leftarrow0\).
\FOR{\(n=1,2,\ldots,M\)}
    \STATE Observe \(Y_n\sim\mathrm{Bernoulli}(p_{u,v})\), where \(Y_n=1\) means that \(u\) wins.
    \STATE \(S\leftarrow S+Y_n\), \(\widehat\mu_n\leftarrow S/n\), and \(b_n(\delta)\leftarrow \sqrt{\frac{\log(8n^2/\delta)}{2n}}\).
    \IF{\(|\widehat\mu_n-1/2|>b_n(\delta)-g\)}
        \IF{\(\widehat\mu_n\ge1/2\)}
            \STATE \textbf{return} \(u\).
        \ELSE
            \STATE \textbf{return} \(v\).
        \ENDIF
    \ENDIF
\ENDFOR
\IF{\(\widehat\mu_M\ge1/2\)}
    \STATE \textbf{return} \(u\).
\ELSE
    \STATE \textbf{return} \(v\).
\ENDIF
\end{algorithmic}
\end{algorithm}

\begin{lemma}[Validity of the budgeted pairwise test]
\label{lem:budgeted-dueltest-main}
For every pair \(u\neq v\), Algorithm~\ref{alg:budgeted-dueltest} uses at most
\[
    \left\lceil \frac{1}{2g^2}\log\frac{4}{\delta}\right\rceil
\]
comparisons deterministically. If \(|p_{u,v}-1/2|\ge g\), then it returns the better arm with probability at least \(1-\delta\). 
\end{lemma}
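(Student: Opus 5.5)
The sample bound is immediate: the loop runs for at most \(M=\lceil \frac{1}{2g^2}\log\frac4\delta\rceil\) iterations and uses one comparison per iteration, so no probability is needed. For correctness, assume without loss of generality that \(p_{u,v}\ge \tfrac12+g\), so \(u\) is the better arm; the case \(p_{u,v}\le\tfrac12-g\) is symmetric, with the tie \(\widehat\mu=1/2\) going to \(u\) only helping the first case. The plan is to intersect two good events and show that on their intersection the test returns \(u\).

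The first event is a time-uniform concentration event
\[
\mathcal E_1=\Bigl\{\forall n\le M:\ |\widehat\mu_n-p_{u,v}|\le b_n(\delta)\Bigr\}.
\]
Hoeffding's inequality together with a union bound gives \(\mathbb P(\mathcal E_1^c)\le\sum_{n\ge1} 2\exp(-2n b_n^2)=\sum_n \delta/(4n^2)=\delta\pi^2/24<\delta/2\). The second event \(\mathcal E_2=\{\widehat\mu_M>1/2\}\) concerns the final fallback. Since \(p_{u,v}-\tfrac12\ge g\), Hoeffding gives \(\mathbb P(\mathcal E_2^c)\le \exp(-2Mg^2)\le \delta/4\) by the choice of \(M\). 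Together, \(\mathbb P(\mathcal E_1\cap\mathcal E_2)\ge 1-\delta\).

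Now check the two exits on \(\mathcal E_1\cap\mathcal E_2\).

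\emph{Early stop.} Suppose the test stops early at step \(n\) with \(\widehat\mu_n<1/2\), so that it returns \(v\). The stopping rule then reads \(\tfrac12-\widehat\mu_n>b_n-g\), that is, \(\widehat\mu_n<\tfrac12+g-b_n\le p_{u,v}-b_n\). This contradicts \(\mathcal E_1\). So every early stop outputs \(u\).

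\emph{No early stop.} In this case the test runs all \(M\) steps and the fallback returns \(u\) on \(\mathcal E_2\).

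The only step needing care is the early-stop argument. There the shifted threshold \(b_n-g\), which may even be negative at large \(n\), has to combine with the gap \(g\) to give exactly the contradiction with \(\mathcal E_1\). I expect the sign bookkeeping there, together with the tie convention, to be the main obstacle, while the constants \(8n^2\) and \(4/\delta\) leave comfortable slack.
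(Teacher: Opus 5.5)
Your proof is correct and follows the paper's argument: a union-bounded Hoeffding event over $n\le M$ rules out a wrong early return, and a single Hoeffding bound at $n=M$ controls the fallback majority vote. The differences are cosmetic. You use a two-sided concentration event, costing $\delta\pi^2/24$ instead of the paper's one-sided $\delta/4$. Your direct inequality $\widehat\mu_n<\tfrac12+g-b_n\le p_{u,v}-b_n$ also avoids the paper's case split on the sign of $b_n-g$.
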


Let \(\textsc{BKT}_g\), \(\textsc{TreeAscent}_g\), and \(\textsc{ScreenAndReplace}_g\) denote the implementations obtained by replacing every call to \textsc{DuelTest} in Algorithms~\ref{alg:bkt}, \ref{alg:plr-pass}, and~\ref{alg:screen-and-replace} by \textsc{BudgetedDuelTest} with the same confidence argument and the known lower bound \(g\). All confidence schedules and exploitation rules are otherwise unchanged. This substitution preserves the correctness logic of Section~\ref{sec:identification-to-exploitation}, while converting the pairwise comparison cost into a deterministic budget.

\begin{theorem}[Known-gap budgeted \textsc{TG-ITE}]
\label{thm:known-gap-extension-main}
Assume that a valid lower bound \(g\le\Delta^\star\) is available and use the budgeted implementation described above. Then all success-probability guarantees of Section~\ref{sec:identification-to-exploitation} remain valid. Moreover, for universal constants \(C,C'>0\), the following improved bounds hold.
\[
\begin{gathered}
\text{(g, $\delta$)-BAI: }\Pr(\widehat a=a^\star)\ge1-\delta,\quad
\tau_{\rm BAI}\le \frac{CN}{g^2}\left(\log\frac1\delta+1\right)
\text{ deterministically};\\
\text{Weak regret: }\mathbb E[R_\infty^{\rm weak}]\le \frac{CN}{g^2};\\
\text{Weak regret with a }\delta\text{-BAI warm start:}\quad
\mathbb E[R_\infty^{\rm weak}]
\le \frac{CN}{g^2}\left(\log\frac1\delta+1\right);\\
\text{Strong regret, known }H:\quad
\mathbb E[R_H^{\rm str}]
\le \frac{CN}{g^2}\left(\log(H+1)+1\right)+1;\\
\text{Strong regret, anytime: for every }T\ge2,\\
\mathbb E[R_T^{\rm str}]
\le \frac{CN}{g^2}\left(\log(T+1)+\log\log(T+2)+1\right)
+C'(\log\log(T+2)+1).
\end{gathered}
\]
The formal-bye implementation for arbitrary \(N\ge2\) is unchanged.
\end{theorem}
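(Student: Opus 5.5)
The plan is to rerun every correctness argument of Section~\ref{sec:identification-to-exploitation} with \textsc{BudgetedDuelTest} as the primitive, and to replace each expected-cost calculation by a deterministic budget count. The first step is Lemma~\ref{lem:budgeted-dueltest-main}. Each budgeted call with confidence \(\delta'\) costs at most \(M(\delta')=\lceil \frac{1}{2g^2}\log\frac4{\delta'}\rceil\) comparisons, regardless of the outcome. When one of the two arms is \(a^\star\), we have \(|p-1/2|\ge\Delta^\star\ge g\), so the call returns \(a^\star\) with probability at least \(1-\delta'\).

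The second step is to check that the correctness proofs of \textsc{BKT} (Lemma~\ref{lem:app-bkt-subset}), Theorem~\ref{thm:plr-repair-main}, and the screening and replacement analysis behind Theorem~\ref{thm:online-weak-linear-main} use only the following event: \(a^\star\) wins every comparison in which it participates. In \textsc{BKT}, \(a^\star\) meets at most one opponent per round, and the union bound uses \(\delta_r\). In \textsc{TreeAscent}, \(a^\star\) first enters the path at some block \(Q_{h^\star}\), so the relevant failure events are the \textsc{BKT} failure on that block (probability \(\beta_{h^\star}\)) and the merge failures (at most \(L\eta_h\le\epsilon/2\)). Comparisons between nonwinners may return either arm, which the earlier proofs already permit. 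In \textsc{ScreenAndReplace} with incumbent \(a^\star\), screening errors only enlarge \(B\), and \(\textsc{BKT}(B\cup\{a^\star\})\) still returns \(a^\star\) with probability at least \(1-\kappa\). If the incumbent is wrong, the screen of \(a^\star\) puts it in \(B\) with probability at least \(1-\rho\), and the replacement then returns \(a^\star\). So every success probability carries over verbatim.

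The third step is the cost accounting. For \(\textsc{BKT}_g(S,\delta)\), round \(r\) runs at most \(|S|/2^r+1\) tests, each costing \(O(g^{-2}(\log\frac1\delta+\log r+1))\). Summing gives \(O(|S|g^{-2}(\log\frac1\delta+1))\), because \(\sum_r 2^{-r}\log r\) converges. For \(\textsc{TreeAscent}_g\), apply this with \(\beta_h=\frac{\epsilon m_h}{2(N-1)}\). The key sum is \(\sum_h m_h\log\frac{N}{m_h}=\sum_h 2^{h-1}(L-h+1)=O(N)\), and the merges contribute \(L\cdot O(g^{-2}(\log\frac1\epsilon+\log L))=O(Ng^{-2}(\log\frac1\epsilon+1))\). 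Together these give the deterministic bound \(CNg^{-2}(\log\frac1\epsilon+1)\), which is the BAI claim with \(\epsilon=\delta\). For strong regret with known horizon, take \(\epsilon_H=1/(H+1)\). The cost is at most \(CNg^{-2}(\log(H+1)+1)\), and failure adds at most \(H\epsilon_H\le1\). For the anytime version, epoch \(s\) costs \(O(Ng^{-2}(2^s+1))\). Up to time \(T\) only \(O(\log\log T)\) epochs start, and the last reached epoch has \(2^{s}=O(\log T)\), since the preceding self-play block \(B_{s-1}=2^{2^{s-1}}\le T\). Each self-play block with a wrong incumbent costs \(\epsilon_sB_s=1\) in expectation, which gives the \(C'(\log\log(T+2)+1)\) term.

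The step I expect to be the main obstacle is weak regret. Epoch \(s\) costs deterministically \(O(Ng^{-2}(s+1))\), since \(\log(1/\rho_s)=\Theta(s)\). Weak regret is incurred only when the incumbent is not \(a^\star\), or during the replacement tournament. I would bound the probability that epoch \(s\) starts with a wrong incumbent by a geometric quantity such as \(\tfrac14\cdot 4^{-(s-1)}\cdot O(1)\), using the warm start at confidence \(1/4\) and the recovery probability per epoch. Once the incumbent is \(a^\star\), screening incurs zero regret, and losing \(a^\star\) requires a \(\kappa_s\)-failure, or a screening error followed by replacement. This yields \(\mathbb E[R^{\rm weak}_\infty]\le \sum_s O(Ng^{-2}(s+1))\,4^{-s}+O(Ng^{-2})=O(Ng^{-2})\). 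One subtlety needs checking: even with incumbent \(a^\star\), the replacement \(\textsc{BKT}(B\cup\{a^\star\})\) incurs regret on nonwinner matches, but \(B\) is nonempty only with probability at most \(N\rho_s\). To avoid an extra factor of \(N\), I would bound \(\mathbb E|B|\le (N-1)\rho_s\) and use the linear cost of \textsc{BKT} in \(|B|\). With a \(\delta\)-warm start, the extra term is just the \(\textsc{TreeAscent}_g\) cost at confidence \(\delta\).
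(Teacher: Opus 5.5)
Your proposal is correct and follows essentially the same route as the paper. Both arguments substitute \textsc{BudgetedDuelTest}, observe that correctness only uses comparisons involving $a^\star$ (where the gap is at least $\Delta^\star\ge g$), and turn each expected-cost bound into a deterministic budget via the size-weighted sum $\sum_h m_h\log(N/m_h)=O(N)$. For the weak-regret tail both use the recursion $p_{s+1}\le\rho_s+\kappa_s$ together with $\mathbb E|B|\le(N-1)\rho_s$, and for strong regret both reuse the unchanged pathwise and phase-counting arguments.

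The only small slip is in the $\textsc{BKT}_g$ budget. There the additive $+1$ tests per round contribute $\sum_{r\le R}\log r\le R\log R=O(m)$ with $R=\lceil\log_2 m\rceil$. The convergence of $\sum_r 2^{-r}\log r$ alone does not cover this term, but the bound is immediate.
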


The proof is deferred to Appendix~\ref{app:known-gap-budgeted-proofs}. Compared with Table~\ref{tab:policies-summary}, the extension removes every residual \(\log(1/\Delta)\) term and strengthens the BAI stopping-time guarantee from an expectation bound to a deterministic comparison budget. The price is that the algorithm is no longer completely gap-oblivious: it needs a certified lower bound \(g\le\Delta^\star\) in advance.

\section{Experiments}
\label{sec:experiments}

We evaluate the four TG-ITE instantiations: \textsc{TG-ITE-BAI}, \textsc{TG-ITE-Weak}, \textsc{TG-ITE-Strong-KnownHorizon} (which appears in the figure as \textsc{TG-ITE-Strong-KH}), and \textsc{TG-ITE-Strong-Anytime}---against representative baselines: \textsc{BKT}, \textsc{SeqElim}, \textsc{Knockout}~\citep{falahatgar2017maximum}, \textsc{WS-W} and \textsc{WS-S}~\citep{chen2017dueling}, \textsc{WR-TINF} and \textsc{WR-EXP3-IX}~\citep{saad2024weak}, \textsc{WSW-PE}~\citep{ma2025dual}, and \textsc{Versatile-DB}~\citep{pmlr-v162-saha22a}. All runs use identical preference matrices and random seeds; we report mean $\pm$ one standard deviation over 10 seeds.

\begin{figure}[t]
    \centering
    \subfigure[BAI sample complexity ($\delta=0.1$).]
    {
        \includegraphics[width=0.3\textwidth]{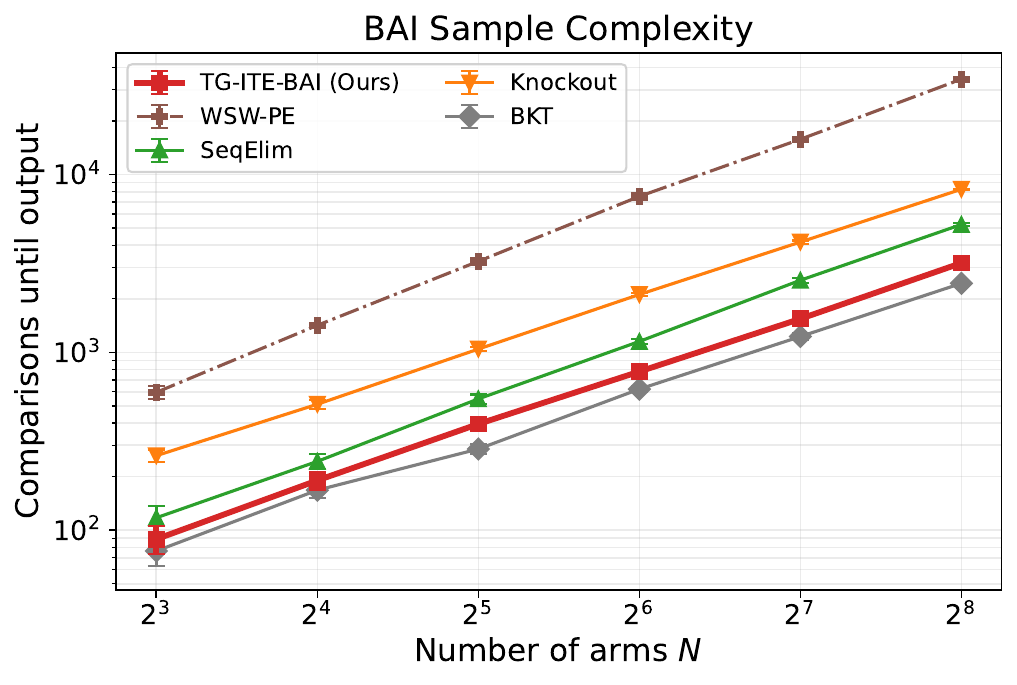}
        \label{fig:exp-bai}
    }
    \subfigure[Weak regret ($N=256$, $T=50{,}000$).]
    {
        \includegraphics[width=0.3\textwidth]{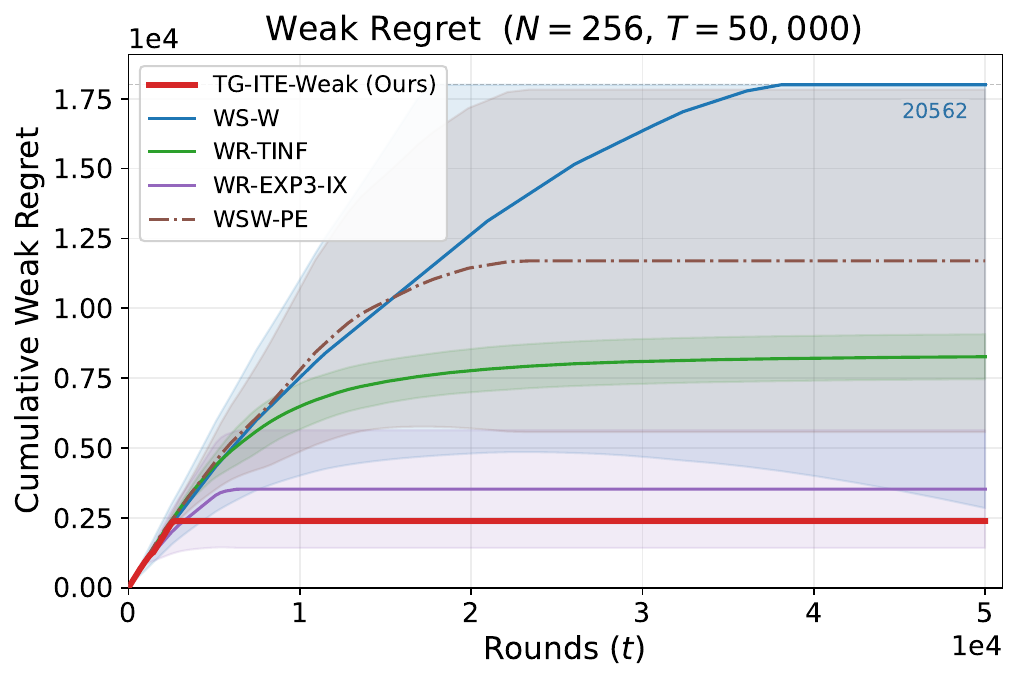}
        \label{fig:exp-weak}
    }
    \subfigure[Strong regret ($N=256$, $T=100{,}000$).]
    {
        \includegraphics[width=0.3\textwidth]{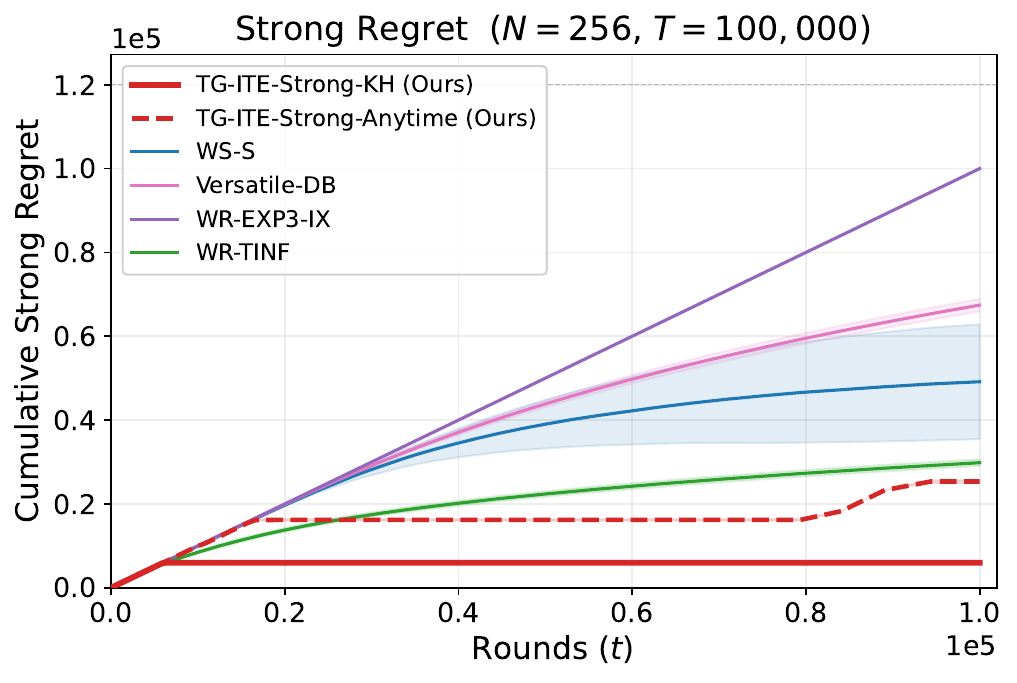}
        \label{fig:exp-strong}
    }
    \caption{\textsc{TG-ITE} algorithms vs.\ baselines. (a)~BAI sample complexity as $N$ grows on cyclic instances. (b)~Weak-regret curves on a sparse instance with $N=256$. (c)~Strong-regret curves on a cyclic instance with $N=256$. Shaded bands are $\pm 1$ standard deviation over 10 seeds; in (b) and (c), curves exceeding the visual threshold are annotated with their final values.}
    \label{fig:experiments}
\end{figure}

\paragraph{Instance design.}
We use two families of preference matrices, both satisfying Assumption~\ref{assump:condorcet-gap}, designed to stress the two axes that distinguish our setting from prior unification work: \emph{non-transitivity} and \emph{asymmetric gap structure}. (1) \emph{Cyclic instances.} The Condorcet winner $a^\star$ beats every other arm with gap $\Delta^\star$. The remaining $N-1$ arms form a directed cycle: arm $i$ beats arm $j$ with gap $\Delta_{\text{cyc}}$ iff $(j-i)\bmod(N-1)\le(N-1)/2$. Assumption~\ref{assump:condorcet-gap} holds without total order, so algorithms relying on transitivity through global ranking lose their structural advantage. (2) \emph{Sparse instances.} Built on the cyclic skeleton with a small winner gap and non-winner probabilities pushed toward $\{0.06,0.94\}$. This produces $\Delta_{\text{sub}}\gg\Delta^\star$, the regime that instance-dependent baselines such as \textsc{WR-TINF} and \textsc{WR-EXP3-IX} are designed to exploit~\citep{saad2024weak}; retaining a lead here is the strongest test of a worst-case-bounded framework against refined competitors. In both, $a^\star$ is relabeled to $\lfloor N/2\rfloor$ to avoid index-ordering shortcuts.

\paragraph{BAI scaling with $N$ (Figure~\ref{fig:exp-bai}).}
On cyclic instances with $\Delta^\star=0.32$, $\Delta_{\text{cyc}}=0.38$, $\delta=0.1$, $N\in\{8,16,32,256\}$, \textsc{TG-ITE-BAI} scales near-linearly in $N$, matching Corollary~\ref{cor:tlcr-bai}; \textsc{BKT} and \textsc{SeqElim} share the slope with larger constants, while \textsc{WSW-PE} grows roughly an order of magnitude faster at $N=256$ from adaptive pairwise elimination on non-transitive instances.

\paragraph{Weak regret under large $N$ (Figure~\ref{fig:exp-weak}).}
On a sparse instance with $N=256$, $\Delta^\star=0.26$, $T=50{,}000$, \textsc{TG-ITE-Weak} achieves the lowest regret ($2{,}384\pm82$) and plateaus before $t=5{,}000$, consistent with Theorem~\ref{thm:online-weak-linear-main}: the warm start identifies $a^\star$ and only rare false-positive replacements remain. \textsc{WR-EXP3-IX}, \textsc{WR-TINF}, and \textsc{WS-W} follow at increasing distance, with \textsc{WS-W} showing the largest variance from its unstructured exploration.

\paragraph{Strong regret under large $N$ (Figure~\ref{fig:exp-strong}).}
On a cyclic instance with $N=256$, $\Delta^\star=0.30$, $\Delta_{\text{cyc}}=0.38$, $T=100{,}000$, \textsc{TG-ITE-Strong-KnownHorizon} reaches $5{,}928\pm154$, roughly $4\times$ below the next-best baseline; its curve flattens once self-play begins ($\sim 7{,}000$ rounds). \textsc{TG-ITE-Strong-Anytime} ($25{,}337\pm336$) pays for not knowing $T$ but still beats every other baseline. Among the rest, \textsc{WS-S} self-plays but inherits \textsc{WS-W}'s slow exploration phase and accumulates substantial regret before settling; \textsc{WR-TINF} and \textsc{Versatile-DB} grow too fast; and \textsc{WR-EXP3-IX}, whose policy decouples the two coordinates and never repeats an arm, incurs linear strong regret.

Across all three objectives, TG-ITE delivers both the lowest regret and consistently small variance, the latter a direct payoff of the high-confidence warm start: the online tail begins from a certified incumbent rather than a mid-stream estimate. On pure BAI alone, \textsc{BKT} runs neck-and-neck with \textsc{TG-ITE-BAI}, which is expected---\textsc{TreeAscent} is not a stand-alone BAI optimizer, and instance-dependent lower bounds for Condorcet identification~\citep{saad2026sampling} preclude any starting-arm-uniform global-gap procedure from matching objective-specific BAI optima. The value of \textsc{TreeAscent} lies in structural reusability across all three objectives; this tradeoff pays off on the online ones, where TG-ITE dominates every baseline while methods that interleave exploration and exploitation either converge slowly (\textsc{WS-W}, \textsc{WS-S}) or never settle into the zero-regret action (\textsc{WR-EXP3-IX}; \textsc{WR-TINF} for strong regret).

\section{Related Work}
\label{sec:rw}

This section reviews the work most related to our setting. We group prior results by objective: regret minimization, pure exploration, and attempts to unify multiple objectives.

\subsection{Regret minimization for dueling bandits}
\label{rw:rm}

Dueling bandits were introduced to model online learning from noisy pairwise preferences, with early applications in interactive information retrieval and online ranker evaluation \citep{yue2009interactively,yue2012k}. The regret-minimization literature has since developed along several assumptions and benchmarks. Early algorithms such as Interleaved Filter and Beat-the-Mean rely on latent-order or stochastic-transitivity-type assumptions, and achieve logarithmic-in-$T$ regret under such structure \citep{yue2011beat,yue2012k}. Later work relaxed or changed the structural requirements: \citet{ailon2014reducing} reduce dueling bandits to conventional cardinal bandits; \citet{zoghi2014relative} propose RUCB for the Condorcet setting and prove an $O(N\log T)$ finite-time regret bound; \citet{zoghi2015mergerucb} introduce MergeRUCB for large-scale online ranker evaluation; and \citet{komiyama2015regret} give an asymptotic lower bound based on information divergence together with RMED, the first asymptotically matching algorithm under the Condorcet assumption. Other works study alternative winner notions or extensions, including Copeland winners, Thompson-sampling-style methods, contextual dueling bandits, multi-dueling bandits, dependent-arm models, and batched feedback \citep{zoghi2015copeland,wu2016double,dudik2015contextual,sui2017multidueling,du2020dueling,agarwal2022batched}. These directions are mostly orthogonal to our finite-arm Condorcet setting.

\paragraph{Weak regret.}
Weak regret counts a round as regret-free whenever at least one selected arm is the Condorcet winner. This objective can be much easier than strong regret because one coordinate can exploit the current incumbent while the other coordinate explores challengers. \citet{chen2017dueling} made this distinction explicit and proposed Winner-Stays-Weak (WS-W), whose expected cumulative weak regret is $O(N^2)$ under the Condorcet-winner assumption and improves to $O(N\log N)$ when the arms follow a total order; in gap-explicit form, the total-order guarantee is often written as $O(N\Delta^{-6}\log N)$, where $\Delta$ is a minimum pairwise gap. More recently, \citet{saad2024weak} showed that weak regret under the sole Condorcet-winner assumption is governed by the full preference matrix: they prove an $\Omega(N/\Delta^\star)$ lower bound in the large-optimality-gap regime, where $\Delta^\star:=\min_{i\neq a^\star}(p_{a^\star,i}-1/2)$, and propose WR-TINF, which matches this order in that regime; they also give another algorithm for regimes where the Condorcet gap is not the right hardness parameter. The closest joint RM--BAI work is \citet{ma2025dual}, which builds a reduction from weak-regret minimization to fixed-confidence BAI; instantiated with WS-W, their WSW-PE algorithm preserves the weak-regret guarantee $O(N\Delta^{-6}\log N)$ and identifies the Condorcet winner with high probability using
$O\!\left( R_w + \frac{N}{(\Delta^\star)^2} \log \frac{N}{\Delta^\star\delta} \right)$
comparisons. In contrast, our TG-ITE-Weak keeps the Winner-Stays-style incumbent/challenger structure but replaces the online discovery phase with a tree-guided warm start and screening tail. Under the positive global-gap assumption $\Delta:=\min_{i\neq j}|p_{i,j}-1/2|$, it satisfies
\[
\mathbb E[R^{\rm weak}_\infty]
\le
O\!\left(
\frac{N}{\Delta^2}
\left(\log\frac1\Delta+1\right)
\right).
\]
Thus our weak-regret guarantee is time-independent and linear in $N$ without the $\log N$ factor of WS-W/WSW-PE. Compared with \citet{saad2024weak}, our uniform bound depends on the global minimum gap $\Delta$ rather than the instance-dependent optimal gap $\Delta^\star$, which makes it slightly looser in terms of gap dependence. However, this is the cost for a simple, reusable identification primitive that supports BAI, weak regret, and strong regret simultaneously.

\paragraph{Strong regret.}
Strong regret requires both selected coordinates to be the Condorcet winner. This is the geometry captured by the standard Condorcet regret used in much of the dueling-bandit literature, where pulling a suboptimal arm in either coordinate incurs gap-weighted loss. \citet{yue2012k} prove an $\Omega(N\log T)$ worst-case expected-regret lower bound under the Condorcet-winner assumption, and Interleaved Filter achieves $O(N\log T)$ under strong stochastic transitivity. Beat-the-Mean further gives a high-probability regret bound of order
$O\!\left(\sum_{i\neq a^\star}\frac{\gamma^8}{\Delta_i}\log T\right)$, $\Delta_i:=p_{a^\star,i}-1/2$
under relaxed stochastic transitivity and stochastic triangle inequality \citep{yue2011beat}. RUCB obtains an $O(N\log T)$ finite-time bound in the Condorcet setting \citep{zoghi2014relative}, while RMED is asymptotically optimal without additional structural assumptions \citep{komiyama2015regret}. \citet{pmlr-v162-saha22a} later give a reduction-based best-of-both-worlds algorithm, Versatile-DB, which is optimal for both stochastic and adversarial preferences and, in the stochastic Condorcet case, achieves the gap-dependent bound
$ O\!\left( \sum_{i\neq a^\star}\frac{\log T}{\Delta_i} \right) $
against the Condorcet-winner benchmark. In the binary strong-regret formulation of \citet{chen2017dueling}, WS-S obtains $O(N^2+N\log T)$ expected cumulative strong regret, improved to $O(N\log N+N\log T)$ under total order. Our strong-regret policy instead identifies a reliable incumbent and self-plays. With known horizon $T$, it guarantees
\[
\mathbb E[R^{\rm str}_T]
\le
O\!\left(
\frac{N}{\Delta^2}
\left(\log T+\log\frac1\Delta+1\right)
\right),
\]
and the horizonless version has the same leading $O(N\Delta^{-2}\log T)$ term up to lower-order $\log\log T$ factors. Therefore, under global-gap accounting, our strong-regret result removes the $N^2$ exploration term of WS-S and matches the linear-in-$N$, logarithmic-in-$T$ behavior of specialized strong-regret algorithms, while using the same \textsc{TreeAscent} primitive as the BAI and weak-regret policies.

\subsection{Pure Exploration for Dueling Bandits}
\label{rw:bai}
Pure exploration has a long history in ordinary stochastic bandits, where best-arm identification is separated from cumulative regret minimization. In the standard stochastic MAB setting, fixed-confidence and fixed-budget BAI have been extensively studied, with sample complexity governed by inverse squared reward gaps, such as $\sum_{i\neq i^\star}\Delta_i^{-2}$, up to logarithmic factors in the confidence parameter \citep{evendar2006action,audibert2010best,gabillon2012best,kaufmann2016complexity}. Dueling bandits replace numerical rewards by noisy pairwise comparisons, so pure exploration first requires specifying what the ``best'' arm means. Common winner notions include Condorcet, Borda, and Copeland winners; in this work we focus on the Condorcet winner \citep{bengs2021preference}.

A large body of pure-exploration work for dueling bandits studies winner identification under structural assumptions on the preference matrix. Let $\Delta_{i,j}=p_{i,j}-1/2$ denote the signed pairwise gap. Weak stochastic transitivity (WST) requires 
\[
\Delta_{i,j}\ge 0,\ \Delta_{j,k}\ge 0
\quad\Longrightarrow\quad
\Delta_{i,k}\ge 0,
\]
which induces a transitive preference relation and hence, up to ties or tie-breaking, a latent total order. Strong stochastic transitivity (SST) is stronger: for ordered arms $i\succ j\succ k$, it requires
\[
\Delta_{i,k}\ge
\max\{\Delta_{i,j},\Delta_{j,k}\}.
\]
Thus SST not only implies a latent total order but also imposes monotonicity on comparison margins. This distinction matters statistically. 

Under total-order-type assumptions, \citet{yue2011beat} proposed BTM-PAC, which identifies an $\epsilon$-optimal arm with sample complexity
$O\!\left( \frac{N}{\epsilon^2} \log\frac{N}{\epsilon\delta} \right).$
Subsequent noisy-comparison work sharpened this dependence under stronger structural assumptions. In particular, \citet{falahatgar2017maximum} gave a knockout-style maximum-selection algorithm under SST and stochastic triangle inequality (STI) with
$O\!\left( \frac{N}{\epsilon^2} \left(\log\frac1\delta\right) \right)$
comparisons, which is optimal up to constants for PAC maxing. Later work clarified how much structure is needed: \citet{falahatgar2017few} showed that SST alone is enough for linear-complexity PAC maxing, while \citet{falahatgar2018limits} showed that WST alone can require $\Omega(N^2)$ comparisons for general PAC maxing, even though WST still induces a latent order.

Another line studies exact identification rather than approximate maxing. Exact Condorcet identification can be viewed as the regime $\epsilon\le \Delta^\star$, where
$\Delta^\star := \min_{j\neq a^\star} |p_{a^\star,j}-1/2|$
is the Condorcet minimum gap. Classical noisy-comparison lower bounds imply that any $\delta$-correct algorithm needs
$\Omega\!\left( \frac{N}{\Delta^2} \log \frac1\delta \right)$
comparisons in the worst case, even when the minimum gap is known \citep{feige1994computing}; related lower bounds with unknown gaps are given by \citet{falahatgar2017few}. Under parametric or ranking-style assumptions, \citet{busaFekete2014mallows} proposed MallowsMPI for exact BAI under a Mallows model, with sample complexity
$
O\!\left(
\frac{N}{\Delta^2}
\log\frac{N}{\delta\Delta}
\right).$
\citet{mohajer2017active} studied active top-$K$ rank aggregation from noisy comparisons and, when specialized to top-$1$ identification, removes the extra $\log(1/\Delta)$ factor appearing in MallowsMPI-type bounds. More recent exact-ranking and best-$k$ selection results further refine this picture: \citet{ren2019sample} characterize exact ranking from noisy comparisons, and \citet{ren2020sample} study PAC and exact best-$k$ selection under SST and STI, obtaining matching PAC best-$k$ bounds of order
$\Theta\!\left(\frac{N}{\epsilon^2}\log\frac{k}{\delta}\right)$
and exact top-$1$ guarantees that are optimal up to logarithmic or doubly-logarithmic factors.

The closest setting to ours is Condorcet winner identification without requiring a latent total order among all arms. \citet{haddenhorst2021testification} study statistical procedures for testing and identifying Condorcet winners. \citet{maiti2024near} study pure exploration in matrix games, a framework that subsumes stochastic bandits and dueling bandits, and provides near-optimal pure-exploration guarantees for the induced identification problem. Most recently, \citet{saad2026sampling} introduce an identification procedure that exploits the full gap matrix $\Delta_{i,j}=p_{i,j}-1/2$ rather than only the winner-to-arm gaps $\{\Delta_{a^\star,j}\}$, leveraging informative comparisons among non-winners to improve the best known instance-dependent sample-complexity rates by up to logarithmic factors. They further establish the first lower bounds for Condorcet identification under the sole Condorcet-winner assumption, matching their upper bounds up to logarithmic factors and isolating the intrinsic cost of locating and estimating informative entries of the gap matrix; this reveals new regimes in which the optimal complexity is governed by the full preference matrix rather than by $\Delta^\star$ alone. 

Our BAI guarantee uses a global-gap accounting with $\Delta := \min_{i\neq j}|p_{i,j}-1/2|$ 
and obtain a $\delta$-correct identification primitive with expected sample complexity
\[
O\!\left(
\frac{N}{\Delta^2}
\left(
\log\frac1\delta+\log\frac1\Delta+1
\right)
\right).
\]
Compared with early BTM-PAC or PE-style bounds containing $\log(N/(\epsilon\delta))$ or $\log(N/(\Delta\delta))$, the confidence dependence in our bound is $\log(1/\delta)$ rather than $\log(N/\delta)$; the remaining $\log(1/\Delta)$ term comes from the sequential pairwise tests used inside the tournament primitive. Compared with SST/STI-based maxing algorithms, our procedure does not rely on a total order or magnitude-transitivity assumptions among all arms. Compared with the instance-dependent CW-identification guarantees of \citet{saad2026sampling}, our bound is coarser in the gap structure, but the resulting primitive is simple, starting-arm-uniform, and directly reusable in the weak- and strong-regret components of our framework.

\subsection{Targeting Multi-Objective Unification}

The relation between regret minimization and best-arm identification is well studied in ordinary bandits. Fixed-confidence identification requires enough exploration to rule out alternatives, while regret minimization rewards early exploitation. This difference is reflected in classical BAI algorithms and lower bounds \citep{audibert2010best,kaufmann2016complexity}. Explore-then-commit strategies make the tension explicit and can be suboptimal for regret minimization \citep{garivier2016etc}. Several works study the interface more directly, including algorithms for A/B testing and Pareto-frontier analyses of regret minimization versus best-arm identification \citep{degenne2019bridging,zhong2022pareto}.

In dueling bandits, the learner chooses two arms per round, which creates additional ways to combine exploration and exploitation. Winner-Stays-style algorithms use one coordinate as an incumbent and the other to test challengers, which is especially natural for weak regret \citep{chen2017dueling}. The work closest to our multi-objective motivation is \citet{ma2025dual}, which studies joint regret minimization and sample complexity for dueling bandits and proposes a black-box reduction transforming any regret-minimization algorithm into a best-arm identification algorithm. However, their proposed algorithm only achieves the WS-W style weak regret bound of order $O(N \log N)$, a $\log N$ factor above the $O(N)$ optimum of \citet{saad2024weak}. And their algorithm is a further $\log N$ factor suboptimal for BAI. 

\section{Limitations and Future Work}
\label{sec:future}

Several directions are suggested by the structural reusability of \textsc{TreeAscent}.
\textit{(i)} In the anytime strong-regret policy, rebuilding $\mathcal{T}$ between phases so that the current incumbent sits at the leaf---using only past outcomes independent of the next pass's fresh samples---would reduce the cost of confirming a correct incumbent at the constant-factor level.
\textit{(ii)} \textsc{ScreenAndReplace} currently discards the partial beating graph generated during identification. Folding it into the screening order or the replacement bracket can reduce expected duels per epoch on instances with many low-quality arms.
\textit{(iii)} The fresh-samples requirement is sufficient but not necessary. A martingale-based analysis would allow caching coarse-confidence statistics across phases of the strong-regret policy, saving substantial cost when a stable incumbent persists.
\textit{(iv)} In the gap-oblivious implementation, the redundant $\log(1/\Delta)$ factor originates from the anytime radius inside \textsc{DuelTest}. The known-gap extension in Section~\ref{sec:known-gap-budgeted-extension} removes this factor when a certified lower bound $g\le\Delta^\star$ is supplied; closing the same gap without side information remains an interesting direction.

\bibliographystyle{named}
\bibliography{ref}

\appendix
\section{Primitive Tests and Tournament Subroutines}
\label{app:primitives}

This appendix collects the statistical primitives used throughout the paper and gives their complete proofs.  We keep the statements in a global-gap form because this is the accounting used by the main results.  Recall that
\[
    \Delta_{i,j}:=\left|p_{i,j}-\frac12\right|,
    \qquad
    \Delta:=\min_{i\neq j}\Delta_{i,j}.
\]
All logarithms are natural.  Universal numerical constants may change from line to line.

\subsection{\textsc{DuelTest}}
\label{app:dueltest}

\begin{lemma}[Anytime confidence sequence]
\label{lem:app-anytime-cs}
Let $Y_1,Y_2,\ldots$ be i.i.d. Bernoulli random variables with mean $p$, and let $\widehat\mu_n=n^{-1}\sum_{t=1}^nY_t$.  Then for every $\delta\in(0,1)$,
\[
    \Pr\left(\exists n\ge 1:
    |\widehat\mu_n-p|\ge c(n,\delta)
    \right)
    \le \delta .
\]
\end{lemma}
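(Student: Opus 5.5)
The plan is a Hoeffding bound at each fixed time $n$, followed by a union bound over all $n\ge1$, with the per-time confidence level chosen so that the series sums to $\delta$.

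\emph{Step 1: fixed $n$.} For each fixed $n\ge 1$, the two-sided Hoeffding inequality for averages of $[0,1]$-valued i.i.d.\ variables gives
\[
    \Pr\bigl(|\widehat\mu_n-p|\ge x\bigr)\le 2\exp(-2nx^2).
\]
We substitute $x=c(n,\delta)$, so that $2n\,c(n,\delta)^2=\log\bigl(2n(n+1)/\delta\bigr)$. This yields
\[
    \Pr\bigl(|\widehat\mu_n-p|\ge c(n,\delta)\bigr)\le 2\cdot\frac{\delta}{2n(n+1)}=\frac{\delta}{n(n+1)}.
\]

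\emph{Step 2: union over $n$.} By a union bound over $n\ge1$ and the telescoping identity $\sum_{n\ge1}\frac{1}{n(n+1)}=\sum_{n\ge1}\bigl(\frac1n-\frac1{n+1}\bigr)=1$, we get
\[
    \Pr\bigl(\exists n\ge1:\ |\widehat\mu_n-p|\ge c(n,\delta)\bigr)\le\sum_{n\ge1}\frac{\delta}{n(n+1)}=\delta .
\]
No peeling or martingale maximal inequality is needed, because the radius already absorbs a $\log(n^2)$ union-bound cost.

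\emph{Obstacle.} There is no real obstacle; the only care needed is the choice of tail bound. The Hoeffding bound is valid for non-strict deviations ($\ge$), which covers the event as stated. It also holds for every fixed $n$ independently of any stopping rule, which is exactly why a union over all $n$, rather than over the stopping time, suffices.
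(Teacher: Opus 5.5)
Your proposal is correct and follows essentially the same argument as the paper: a two-sided Hoeffding bound at each fixed $n$, giving $2\exp\{-2n\,c(n,\delta)^2\}=\delta/(n(n+1))$, then a union bound over $n\ge1$ using $\sum_{n\ge1}1/(n(n+1))=1$.
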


\begin{proof}
For any fixed $n\ge1$, Hoeffding's inequality gives
\[
    \Pr\{|\widehat\mu_n-p|\ge c(n,\delta)\}
    \le
    2\exp\{-2n c(n,\delta)^2\}
    =
    \frac{\delta}{n(n+1)}.
\]
Taking a union bound over all $n\ge1$ and using $\sum_{n\ge1}1/(n(n+1))=1$ proves the claim.
\end{proof}

\begin{lemma}[Correctness of \textsc{DuelTest}]
\label{lem:app-dueltest-correct}
For every pair $u\neq v$ with $p_{u,v}\neq1/2$, Algorithm~\ref{alg:dueltest} returns the better arm with probability at least $1-\delta$.
\end{lemma}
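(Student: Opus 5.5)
The plan is to reduce correctness to the good event of the anytime confidence sequence in Lemma~\ref{lem:app-anytime-cs}. Without loss of generality take $p:=p_{u,v}>1/2$, so $u$ is the better arm; the case $p<1/2$ is symmetric after swapping the roles of $u$ and $v$, equivalently replacing $Y_n$ by $1-Y_n$. Let
\[
G:=\{\forall n\ge1:\ |\widehat\mu_n-p|<c(n,\delta)\}.
\]
Lemma~\ref{lem:app-anytime-cs} gives $\Pr(G)\ge1-\delta$. The rest of the argument is deterministic on $G$.

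On $G$, I would show that the test cannot stop with the wrong sign. Suppose at some time $n$ the stopping rule fires with $\widehat\mu_n<1/2$. The rule then says $\widehat\mu_n<1/2-c(n,\delta)$. Since $p>1/2$, this gives
\[
p-\widehat\mu_n>p-1/2+c(n,\delta)>c(n,\delta),
\]
which contradicts $G$. The rule also cannot fire with $\widehat\mu_n=1/2$, because that would require $0>c(n,\delta)$. So on $G$, whenever the test stops, it stops with $\widehat\mu_n>1/2$ and returns $u$.

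It remains to check that the test actually stops almost surely, so that "returns the better arm" is a genuine event of probability at least $1-\delta$. By the strong law of large numbers, $\widehat\mu_n\to p$ almost surely. Also $c(n,\delta)\to0$, since $c(n,\delta)^2=O(\log n/n)$. Hence $|\widehat\mu_n-1/2|-c(n,\delta)\to p-1/2>0$ almost surely, and the stopping condition is eventually met. This uses $p\neq1/2$, which the hypothesis provides. Combining the two parts, $\Pr(\text{return }u)\ge\Pr(G\cap\{\text{stop}\})=\Pr(G)\ge1-\delta$.

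None of these steps is hard. The only points needing care are the sign bookkeeping in the contradiction and noting that finite stopping holds almost surely rather than only on $G$. If finiteness of the expected stopping time is also wanted later, I would add it separately using $c(n,\delta)\le\Delta_{u,v}/2$ for $n\gtrsim\Delta_{u,v}^{-2}\log(1/(\delta\Delta_{u,v}))$ together with the bound on $\Pr(G^c)$, but that is not needed for the correctness claim.
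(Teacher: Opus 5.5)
Your proof is correct and follows the paper's argument. Both reduce to the good event of Lemma~\ref{lem:app-anytime-cs}, and both use the same sign contradiction to show that on this event the test cannot stop on the wrong side of $1/2$. Your strong-law step for almost-sure termination makes explicit a point the paper leaves implicit (it only shows the wrong-return event lies in $\mathcal E_\delta^c$), and it is that step that turns ``no wrong return on $G$'' into ``returns the better arm with probability at least $1-\delta$.''
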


\begin{proof}
We prove the case $p_{u,v}=1/2+\gamma$ with $\gamma>0$; the other case is symmetric.  Let
\[
    \mathcal E_\delta
    :=
    \{\forall n\ge1:
      |\widehat\mu_n-p_{u,v}|<c(n,\delta)\}.
\]
By Lemma~\ref{lem:app-anytime-cs}, $\Pr(\mathcal E_\delta)\ge1-\delta$.  On $\mathcal E_\delta$, the test cannot stop and return $v$.  Indeed, if it returned $v$ at time $\tau$, then $\widehat\mu_\tau<1/2-c(\tau,\delta)$ by the stopping rule and the decision rule.  Hence
\[
    p_{u,v}-\widehat\mu_\tau
    >
    \left(\frac12+\gamma\right)
    -
    \left(\frac12-c(\tau,\delta)\right)
    =
    \gamma+c(\tau,\delta)
    > c(\tau,\delta),
\]
contradicting $\mathcal E_\delta$.  Thus the error event is contained in $\mathcal E_\delta^c$, whose probability is at most $\delta$.
\end{proof}

The expected number of comparisons made by \textsc{DuelTest} has the standard inverse-gap form.  We record a version with a concrete threshold.

\begin{lemma}[Stopping-time bound for one pair]
\label{lem:app-dueltest-time}
There is a universal constant $C_{\rm dt}>0$ such that, for every pair $(u,v)$ with gap $\gamma=|p_{u,v}-1/2|\in(0,1/2]$ and every $\delta\in(0,1)$, the stopping time $\tau(u,v,\delta)$ of Algorithm~\ref{alg:dueltest} satisfies
\begin{equation}
\label{eq:app-dueltest-exp}
    \mathbb E[\tau(u,v,\delta)]
    \le
    \frac{C_{\rm dt}}{\gamma^2}
    \left(
        \log\frac1\delta+
        \log\frac1\gamma+1
    \right).
\end{equation}
In particular, since $\gamma\ge\Delta$ for every pair used in the paper,
\[
    \mathbb E[\tau(u,v,\delta)]
    \le
    \frac{C_{\rm dt}}{\Delta^2}
    \left(
        \log\frac1\delta+
        \log\frac1\Delta+1
    \right).
\]
\end{lemma}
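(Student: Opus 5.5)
We bound the expected stopping time through the tail-sum identity $\mathbb E[\tau]=\sum_{n\ge0}\Pr(\tau>n)$. Assume without loss of generality that $p_{u,v}=\tfrac12+\gamma$ with $\gamma>0$; the case $\gamma<0$ follows by symmetry. The event $\{\tau>n\}$ implies that at time $n$ the stopping rule did not fire, that is, $|\widehat\mu_n-\tfrac12|\le c(n,\delta)$. In particular $\widehat\mu_n\le \tfrac12+c(n,\delta)$, so $p_{u,v}-\widehat\mu_n\ge \gamma-c(n,\delta)$. The proof therefore splits into a deterministic threshold computation and a Hoeffding tail sum.

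\emph{Step 1 (critical time).} We define $n_0$ as the smallest integer such that $c(n,\delta)\le\gamma/2$ holds for every $n\ge n_0$. Since $c(n,\delta)^2=\log(2n(n+1)/\delta)/(2n)$ and $\log(2n(n+1))\le \log 4+2\log n$, it suffices to have
\[
    n\ \ge\ \frac{2}{\gamma^2}\Bigl(\log\tfrac1\delta+\log 4+2\log n\Bigr).
\]
We use the standard inversion: $n\ge a\log n+b$ holds whenever $n\ge 2a\log(2a)+2b$, where $a,b\ge0$ and $a\ge1$. Here $a=4/\gamma^2$ and $b=2(\log(1/\delta)+\log4)/\gamma^2$. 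This gives
\[
    n_0\ \le\ \frac{C}{\gamma^2}\Bigl(\log\tfrac1\delta+\log\tfrac1\gamma+1\Bigr).
\]
Because $c(n,\delta)$ is eventually decreasing, the condition holds for all $n\ge n_0$ and not merely at $n_0$. Controlling the non-monotone initial range of $c(n,\delta)$ is the only place where some care is needed, and absorbing it into the constant $C$ is the main technical nuisance.

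\emph{Step 2 (tail sum).} For $n\ge n_0$, Hoeffding's inequality gives
\[
    \Pr(\tau>n)\ \le\ \Pr\bigl(p_{u,v}-\widehat\mu_n\ge\gamma/2\bigr)\ \le\ e^{-n\gamma^2/2}.
\]
Summing over $n\ge n_0$ yields at most $1/(1-e^{-\gamma^2/2})\le 4/\gamma^2$, where the last inequality uses $\gamma\le\tfrac12$. Bounding $\Pr(\tau>n)\le1$ for $n<n_0$, we obtain
\[
    \mathbb E[\tau]\ \le\ n_0+\frac{4}{\gamma^2},
\]
which is \eqref{eq:app-dueltest-exp} after enlarging $C_{\rm dt}$. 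As a byproduct, $\tau<\infty$ almost surely.

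\emph{Step 3 (global-gap form).} The map $\gamma\mapsto\gamma^{-2}(\log(1/\delta)+\log(1/\gamma)+1)$ is decreasing on $(0,\tfrac12]$. Hence $\gamma\ge\Delta$ gives the second displayed bound directly. We expect no real obstacle beyond the logarithmic inversion in Step 1.
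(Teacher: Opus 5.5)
Your proposal is correct and follows the paper's proof essentially step for step: a threshold beyond which $c(n,\delta)\le\gamma/2$, the inclusion $\{\tau>n\}\subseteq\{\widehat\mu_n\le\tfrac12+\tfrac\gamma2\}$ with the one-sided Hoeffding bound $e^{-n\gamma^2/2}$, and the tail-sum estimate $\mathbb E[\tau]\le n_0+4/\gamma^2$. The only difference is how the boundary is inverted: you use the generic $n\ge a\log n+b$ lemma, while the paper checks the inequality at $n_\star=\lceil C_0A/\gamma^2\rceil$ and uses monotonicity of $n\gamma^2/2-2\log n$ beyond it. Your inversion lemma already holds for every $n\ge 2a\log(2a)+2b$, so your remark about the eventual monotonicity of $c(n,\delta)$ is not needed.
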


\begin{proof}
It is enough to consider $p_{u,v}=1/2+\gamma$; the other case is symmetric.  Define
\[
    n_\star(\gamma,\delta)
    :=
    \left\lceil
    \frac{C_0}{\gamma^2}
    \left(
        \log\frac1\delta+
        \log\frac1\gamma+1
    \right)
    \right\rceil,
\]
where $C_0$ is a sufficiently large universal constant.  We spell out the boundary inversion.  Let
\[
    A:=\log\frac1\delta+\log\frac1\gamma+1 .
\]
For $n\ge n_\star(\gamma,\delta)$ we have $n\gamma^2\ge C_0A$.  Also $n_\star(\gamma,\delta)\ge 4/\gamma^2$ once $C_0$ is chosen large enough, so the function
\[
    n\longmapsto \frac{n\gamma^2}{2}-2\log n-\log\frac4\delta
\]
is nondecreasing on $[n_\star(\gamma,\delta),\infty)$.  It is therefore enough to check the inequality at the threshold.  Since $\gamma\le1/2$ and $A\ge1$, the ceiling gives
\[
    \log n_\star(\gamma,\delta)
    \le
    \log\left(\frac{2C_0A}{\gamma^2}\right)
    \le C_1 A
\]
for a numerical constant $C_1$ depending only on $C_0$.  Hence
\[
    \log\frac4\delta+2\log n_\star(\gamma,\delta)
    \le C_2 A
    \le \frac{C_0A}{2}
    \le \frac{n_\star(\gamma,\delta)\gamma^2}{2}
\]
by increasing $C_0$ if necessary.  Using $n+1\le2n$, this proves
\[
    \log\bigl(2n(n+1)/\delta\bigr)
    \le
    \log\frac4\delta+2\log n
    \le
    \frac{n\gamma^2}{2},
    \qquad n\ge n_\star(\gamma,\delta).
\]
Thus $c(n,\delta)\le\gamma/2$ for all such $n$.

If the test has not stopped by time $n\ge n_\star(\gamma,\delta)$, then $|\widehat\mu_n-1/2|\le c(n,\delta)\le\gamma/2$.  Therefore
\[
    \{\tau>n\}
    \subseteq
    \left\{\widehat\mu_n\le \frac12+\frac\gamma2\right\}
    =
    \left\{\widehat\mu_n-p_{u,v}\le -\frac\gamma2\right\}.
\]
A one-sided Hoeffding bound yields
\[
    \Pr(\tau>n)
    \le
    \exp\{-n\gamma^2/2\},
    \qquad n\ge n_\star(\gamma,\delta).
\]
Using the tail-sum formula,
\[
    \mathbb E[\tau]
    =
    \sum_{n\ge0}\Pr(\tau>n)
    \le
    n_\star(\gamma,\delta)
    +
    \sum_{n\ge n_\star(\gamma,\delta)}e^{-n\gamma^2/2}.
\]
Since $1-e^{-x}\ge x/2$ for $x\in(0,1]$ and $\gamma\le1/2$, the geometric tail is at most $4/\gamma^2$.  Absorbing constants into $C_{\rm dt}$ gives~\eqref{eq:app-dueltest-exp}.
\end{proof}

\subsection{\textsc{BKT}}
\label{app:bkt}

\begin{lemma}[BKT on an arbitrary candidate set]
\label{lem:app-bkt-subset}
Let $S\subseteq[N]$ be nonempty and let $m=|S|$.  For every $\delta\in(0,1)$, Algorithm~\ref{alg:bkt} satisfies the following two properties.
First, if $m\ge2$, its expected number of comparisons satisfies
\begin{equation}
\label{eq:app-bkt-cost}
    \mathbb E[\tau_{\rm BKT}(S,\delta)]
    \le
    \frac{C_{\rm bkt}m}{\Delta^2}
    \left(
        \log\frac1\delta+
        \log\frac1\Delta+1
    \right)
\end{equation}
for a universal constant $C_{\rm bkt}>0$.  Second, suppose that $S$ contains an arm $w$ satisfying
\[
    p_{w,j}>\frac12
    \qquad\text{for every }j\in S\setminus\{w\}.
\]
Then Algorithm~\ref{alg:bkt} returns $w$ with probability at least $1-\delta$. In particular, if $a^\star\in S$, then \textsc{BKT}$(S,\delta)$ returns $a^\star$ with probability at least $1-\delta$.
\end{lemma}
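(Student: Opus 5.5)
I would prove the two claims separately, using only Lemmas~\ref{lem:app-dueltest-correct} and~\ref{lem:app-dueltest-time} together with the round structure of Algorithm~\ref{alg:bkt}.

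\emph{Cost.} Let $X_r$ be the surviving set at the start of round $r$. By construction $|X_1|=m$ and $|X_{r+1}|=\lceil |X_r|/2\rceil$. This is deterministic whatever the test outcomes, so the number of rounds is $R=\lceil\log_2 m\rceil$. Round $r$ runs at most $|X_r|/2\le m2^{-(r-1)}/2+1$ tests. To be careful, I would instead use the exact count $\lfloor|X_r|/2\rfloor\le m2^{-r}+1/2$; the additive term is harmless because each round with $|X_r|\ge 2$ contains at least one test. Each test in round $r$ is run at confidence $\delta_r=6\delta/(\pi^2r^2)$. Conditionally on the past, its pair is fixed and its gap is at least $\Delta$. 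By Lemma~\ref{lem:app-dueltest-time} and the tower property, its expected length is at most $\frac{C_{\rm dt}}{\Delta^2}(\log\frac1\delta+2\log r+\log\frac{\pi^2}{6}+\log\frac1\Delta+1)$.

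Summing over rounds, the number of tests in round $r$ is at most $2m2^{-r}$, since $|X_r|\ge2$ implies $\lfloor |X_r|/2\rfloor\le |X_r|/2\le m2^{-r+1}/2\cdot 2$. Hence the total is bounded by
\[
\frac{C_{\rm dt}}{\Delta^2}\sum_{r\ge1} 2m2^{-r}\Bigl(\log\tfrac1\delta+\log\tfrac1\Delta+2\log r+2\Bigr).
\]
Because $\sum_r 2^{-r}\log r<\infty$, this is $O\bigl(\frac{m}{\Delta^2}(\log\frac1\delta+\log\frac1\Delta+1)\bigr)$.

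\emph{Correctness.} Suppose $w\in S$ beats every other arm of $S$. If $w$ survives to round $r$, then either it gets the bye or it is paired with some $u\in S\setminus\{w\}$, where $p_{w,u}>1/2$. In the second case, Lemma~\ref{lem:app-dueltest-correct} (applied conditionally on the history, since the pair is determined by the past) shows that $w$ loses with probability at most $\delta_r$. A union bound over rounds gives a failure probability of at most $\sum_r 6\delta/(\pi^2r^2)=\delta$. On the complementary event, $w$ survives every round and is the unique arm returned. The case $w=a^\star$ follows from Assumption~\ref{assump:condorcet-gap}.

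\emph{Main obstacle.} The main subtlety is the cost bound. I must make sure the $\log r$ terms coming from the round-dependent confidences are summable against the geometrically shrinking number of tests, so that no $\log\log m$ factor appears. The bound above handles this.
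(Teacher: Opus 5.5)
Your proposal is correct and follows the paper's proof essentially step for step. The correctness part tracks $w$ and union-bounds the round confidences to get $\sum_r\delta_r=\delta$, and the cost part combines Lemma~\ref{lem:app-dueltest-time} at confidence $\delta_r$ with the deterministic halving of the bracket; the one small difference is that you bound the tests in a nonempty round by $2m2^{-r}$ instead of the paper's $m/2^r+1$, so the $\log r$ terms sum directly to $O(m)$ and the paper's separate $R\log R\le Cm$ step is not needed.
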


\begin{proof}
If $m=1$, the algorithm returns the unique arm in $S$, so the correctness claim is immediate whenever the displayed condition holds.  For $m\ge2$, assume first that such an arm $w$ exists.  Track only $w$ through the bracket. In each tournament round $r$, this arm participates in at most one call to \textsc{DuelTest}, with confidence $\delta_r=6\delta/(\pi^2r^2)$. If $w$ is ever eliminated, one of these calls must return an arm that is worse than $w$, and hence must be wrong. By Lemma~\ref{lem:app-dueltest-correct} and a union bound,
\[
    \Pr\{w\text{ is eliminated}\}
    \le
    \sum_{r\ge1}\delta_r
    =
    \sum_{r\ge1}\frac{6\delta}{\pi^2r^2}
    =\delta.
\]
If $w$ is never eliminated, it is the final remaining arm.

We now prove the expected-cost bound, which does not require $S$ to contain any Condorcet arm. Let $X_{r-1}$ denote the active set at the beginning of round $r$, and let $K_r$ be the number of pairwise tests in that round. The halving bound is deterministic: regardless of the duel outcomes, the bracket keeps at most one survivor from each pair and possibly one bye, so
\[
    |X_{r-1}|\le \left\lceil \frac{m}{2^{r-1}}\right\rceil,
    \qquad
    K_r\le \left\lfloor\frac{|X_{r-1}|}{2}\right\rfloor .
\]
The number of nonempty rounds is at most $R:=\lceil\log_2 m\rceil$. By Lemma~\ref{lem:app-dueltest-time}, each test in round $r$ has expected length at most
\[
    \frac{C}{\Delta^2}
    \left(
        \log\frac1\delta+2\log r+
        \log\frac1\Delta+1
    \right).
\]
Therefore
\[
\mathbb E[\tau_{\rm BKT}(S,\delta)]
\le
\frac{C}{\Delta^2}
\sum_{r=1}^{R}K_r
\left(
        \log\frac1\delta+2\log r+
        \log\frac1\Delta+1
\right).
\]
The total number of matches is at most $m-1$, so the terms not involving $\log r$ contribute at most the right-hand side of~\eqref{eq:app-bkt-cost}. For the remaining term, the deterministic halving bound gives
\[
    \sum_{r=1}^{R}K_r\log r
    \le
    \sum_{r=1}^{R}\left(\frac{m}{2^r}+1\right)\log r
    \le
    C m + C R\log R.
\]
Since $R\le\lceil\log_2m\rceil$ and $R\log R\le C m$ for all $m\ge2$, the logarithmic-round contribution is also $O(m)$. This proves Equation~\eqref{eq:app-bkt-cost} after increasing the constant.
\end{proof}

When $S$ contains no Condorcet arm, \textsc{BKT} should be interpreted only as returning a bracket representative. The expected-cost part of Lemma~\ref{lem:app-bkt-subset} still applies to such sets.

\section{Tree-Guided Identification}
\label{app:tlr-proofs}

This appendix gives the complete proof of the tree-guided identification bound, including the formal-bye treatment for arbitrary $N\ge2$.

\subsection{Tree-Based Tournament Decomposition}
\label{app:tree-geometry}

If $N$ is not a power of two, embed the $N$ real arms into a completed balanced bracket with
\[
    M:=2^{\lceil\log_2N\rceil}<2N
\]
leaves. The additional leaves are formal byes: they are not arms, are not assigned preference probabilities, and never generate stochastic comparisons. If $N$ is a power of two, $M=N$. Let $L:=\log_2M$ and let $\mathcal T$ be the completed balanced tree. For a node $v$, write $\bar S(v)$ for all bracket leaves under $v$ and $S(v):=\bar S(v)\cap[N]$ for the real arms under $v$. We do not define a local maximum for arbitrary nonempty $S(v)$, because such a maximum need not exist when nonwinner arms form cycles.

For a starting real arm $b$, let
\[
    v_0(b)\subset v_1(b)\subset\cdots\subset v_L(b)=\mathrm{root}
\]
be its ancestor chain in the completed tree, with $\bar S(v_0(b))=\{b\}$ and $|\bar S(v_h(b))|=2^h$. Write $U_h(b):=S(v_h(b))$. Let $q_h(b)$ be the sibling of $v_{h-1}(b)$ inside $v_h(b)$ and define
\[
    Q_h(b):=S(q_h(b))=U_h(b)\setminus U_{h-1}(b),
    \qquad
    \bar Q_h(b):=\bar S(q_h(b)).
\]
The set $Q_h(b)$ may be empty because formal byes are ignored.

\begin{lemma}[Sibling-block partition with formal byes]
\label{lem:app-sibling-partition}
For every real arm $b\in[N]$,
\[
    [N]=\{b\}\sqcup Q_1(b)\sqcup Q_2(b)\sqcup\cdots\sqcup Q_L(b),
\]
where the union is disjoint. If $m_h:=|Q_h(b)|$ and $\bar m_h:=|\bar Q_h(b)|$, then
\[
    \sum_{h=1}^L m_h=N-1,
    \qquad
    \bar m_h=2^{h-1},
    \qquad
    \sum_{h=1}^L\bar m_h=M-1.
\]
In particular, when $N=2^L$, $M=N$ and $m_h=\bar m_h=2^{h-1}$.
\end{lemma}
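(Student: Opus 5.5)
The statement is purely combinatorial, so I will argue directly from the structure of the completed balanced tree $\mathcal T$ and the ancestor chain $v_0(b)\subset\cdots\subset v_L(b)$. The plan is to prove the analogous identity first for the bracket-leaf sets $\bar S(\cdot)$. I will then intersect with $[N]$ to obtain the statements about real arms.

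\emph{Leaf-level partition.} Because $\mathcal T$ is a complete binary tree, each internal node $v_h(b)$ has exactly two children, $v_{h-1}(b)$ and its sibling $q_h(b)$. The leaf sets of these two children are disjoint and their union is $\bar S(v_h(b))$. Hence
\[
\bar S(v_h(b))=\bar S(v_{h-1}(b))\sqcup\bar Q_h(b).
\]
Unrolling this from $h=L$ down to $h=1$ gives
\[
\bar S(\mathrm{root})=\{b\}\sqcup\bar Q_1(b)\sqcup\cdots\sqcup\bar Q_L(b).
\]
This uses $\bar S(v_0(b))=\{b\}$ and $\bar S(\mathrm{root})$, which is the set of all $M$ leaves.

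\emph{Block sizes.} The sets $\bar S(v_{h-1}(b))$ and $\bar Q_h(b)$ are leaf sets of two siblings at the same depth of a complete tree, so they have equal size. Since $|\bar S(v_h(b))|=2^h$, each has size $2^{h-1}$, which gives $\bar m_h=2^{h-1}$. Summing the geometric series yields $\sum_h\bar m_h=2^L-1=M-1$; equivalently, this follows by counting both sides of the leaf-level partition.

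\emph{Passing to real arms.} Intersecting the leaf-level partition with $[N]$ preserves disjointness. Because $b$ is a real arm, $\{b\}\cap[N]=\{b\}$. By definition, $S(q_h(b))=\bar Q_h(b)\cap[N]=Q_h(b)$, and every real arm is a leaf, so $\bar S(\mathrm{root})\cap[N]=[N]$. This establishes the partition of $[N]$. The identity $Q_h(b)=U_h(b)\setminus U_{h-1}(b)$ follows from the same intersection argument applied to $\bar S(v_h(b))$. Counting elements on both sides of the partition gives $\sum_h m_h=N-1$.

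\emph{Power-of-two case.} When $N=2^L$ there are no formal byes, so $M=N$ and every bracket leaf is a real arm. Therefore $m_h=\bar m_h=2^{h-1}$.

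No step here is a real obstacle. The only point requiring care is the statement that the sibling of $v_{h-1}(b)$ has exactly $2^{h-1}$ leaves. This depends on the completed bracket being a perfect binary tree of depth $L$, with all $M$ leaves at the same depth. That is precisely why the real arms are padded with formal byes up to $M=2^{\lceil\log_2 N\rceil}$, and the argument should state this explicitly.
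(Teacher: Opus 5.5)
Your proposal is correct and follows essentially the same argument as the paper: split each $v_h(b)$ into its two children $v_{h-1}(b)$ and $q_h(b)$, iterate along the ancestor chain, and use that the completed bracket is a perfect binary tree, so $|\bar Q_h(b)|=|\bar S(v_{h-1}(b))|=2^{h-1}$. The only difference is cosmetic: you first build the partition on bracket leaves and then intersect with $[N]$, while the paper restricts to real arms at each level, writing $U_h(b)=U_{h-1}(b)\sqcup Q_h(b)$, before iterating.
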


\begin{proof}
For every $h\ge1$, the node $v_h(b)$ has two children: $v_{h-1}(b)$ and $q_h(b)$. Restricting to real arms gives
\[
    U_h(b)=U_{h-1}(b)\sqcup Q_h(b).
\]
Iterating this identity up to $h=L$ gives the real-arm partition because $U_0(b)=\{b\}$ and $U_L(b)=[N]$. This also gives $\sum_hm_h=N-1$. For capacities, the completed bracket is full and balanced, so $|\bar Q_h(b)|=|\bar S(v_{h-1}(b))|=2^{h-1}$ and $\sum_{h=1}^L2^{h-1}=M-1$.
\end{proof}

\begin{definition}[Winner-entry level]
\label{def:app-winner-entry-level}
For $b\in[N]$, define
\[
    h_\star(b)
    :=
    \begin{cases}
    0, & b=a^\star,\\
    \text{the unique }h\in\{1,\ldots,L\}\text{ such that }a^\star\in Q_h(b), & b\neq a^\star .
    \end{cases}
\]
The second case is well-defined by the sibling-block partition in Lemma~\ref{lem:app-sibling-partition}.
\end{definition}

\subsection{Correctness of \textsc{TreeAscent}}

\label{app:plr-correctness}

\begin{algorithm}[h]
\caption{\textsc{TreeAscent}$(b,\mathcal T,\varepsilon)$: tree-guided identification}
\label{alg:app-plr}
\begin{algorithmic}[1]
\REQUIRE Starting arm $b\in[N]$, completed balanced tree $\mathcal T$ with $M$ leaves, confidence $\varepsilon\in(0,1)$.
\STATE Let $L\leftarrow\log_2M$. Compute the ancestor chain $v_0(b)\subset\cdots\subset v_L(b)$ and the sibling blocks $Q_h(b)$ and $\bar Q_h(b)$.
\STATE $c_0\leftarrow b$.
\FOR{$h=1,\ldots,L$}
    \STATE $m_h\leftarrow |Q_h(b)|$ and $\bar m_h\leftarrow|\bar Q_h(b)|$.
    \STATE Set $\beta_h\leftarrow (\varepsilon/2)\bar m_h/(M-1)$ and $\eta_h\leftarrow \varepsilon/(2L)$.
    \IF{$m_h=0$}
        \STATE $c_h\leftarrow c_{h-1}$.
    \ELSE
        \STATE $g_h\leftarrow\textsc{BKT}(Q_h(b),\beta_h)$.
        \STATE $c_h\leftarrow\textsc{DuelTest}(c_{h-1},g_h,\eta_h)$.
    \ENDIF
\ENDFOR
\STATE \textbf{return} $c_L$.
\end{algorithmic}
\end{algorithm}

\begin{lemma}[Capture and preservation of the Condorcet winner]
\label{lem:app-plr-capture-preserve}
Fix a starting arm $b$ and let $h_\star=h_\star(b)$ be the winner-entry level from Definition~\ref{def:app-winner-entry-level}. Suppose that, if $h_\star>0$, the call \textsc{BKT}$(Q_{h_\star}(b),\beta_{h_\star})$ returns $a^\star$. Suppose also that every invoked merge call at levels $h\ge \max\{1,h_\star\}$ returns $a^\star$ whenever $a^\star$ is one of its two inputs. Then Algorithm~\ref{alg:app-plr} returns $a^\star$.
\end{lemma}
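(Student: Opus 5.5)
The proof is a deterministic induction on the level $h$, tracking the invariant $c_h=a^\star$ for all $h\ge h_\star$. No probability enters: the hypotheses are conditions on the realized outputs of the subroutine calls, and we only need to follow the champion sequence $c_0,c_1,\ldots,c_L$ in Algorithm~\ref{alg:app-plr}. We split on whether $h_\star=0$ or $h_\star\ge1$.

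\emph{Base case.} If $h_\star=0$, then $b=a^\star$ and $c_0=b=a^\star$, so the invariant holds at $h=0$. If $h_\star\ge1$, then $a^\star\in Q_{h_\star}(b)$, so $m_{h_\star}\ge1$. The algorithm therefore takes the non-bye branch at level $h_\star$. By hypothesis $g_{h_\star}=\textsc{BKT}(Q_{h_\star}(b),\beta_{h_\star})=a^\star$. The merge call is $\textsc{DuelTest}(c_{h_\star-1},a^\star,\eta_{h_\star})$, and its level satisfies $h_\star\ge\max\{1,h_\star\}$. Since $a^\star$ is one of its inputs, the hypothesis gives $c_{h_\star}=a^\star$. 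One subtlety needs checking: the two inputs must be distinct for \textsc{DuelTest} to be defined. This holds because $c_{h_\star-1}\in U_{h_\star-1}(b)$ (an easy side induction: every $c_h$ lies in $U_h(b)$, since $g_h\in Q_h(b)\subseteq U_h(b)$ and $U_{h-1}(b)\subseteq U_h(b)$), while $a^\star\in Q_{h_\star}(b)$, which is disjoint from $U_{h_\star-1}(b)$ by Lemma~\ref{lem:app-sibling-partition}.

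\emph{Inductive step.} Suppose $c_{h-1}=a^\star$ for some $h>h_\star$, so that $h\ge\max\{1,h_\star\}$. If $m_h=0$, then $c_h=c_{h-1}=a^\star$ directly. Otherwise $g_h\in Q_h(b)$, and $Q_h(b)$ is disjoint from $U_{h-1}(b)\ni a^\star$, so $g_h\ne a^\star$ and the call $\textsc{DuelTest}(a^\star,g_h,\eta_h)$ is a valid merge with $a^\star$ as an input. The hypothesis then gives $c_h=a^\star$. Note that nothing is required of the BKT output at these later levels, which matters because $Q_h(b)$ may contain no Condorcet arm and BKT is then only a bracket representative. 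Iterating up to $h=L$ yields $c_L=a^\star$.

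This argument is essentially bookkeeping, and I do not expect a real obstacle. The only care needed is around the well-definedness issues: the distinctness of the merge inputs and the formal-bye branch where $m_h=0$. Both are handled by the disjointness in Lemma~\ref{lem:app-sibling-partition}.
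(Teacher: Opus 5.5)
Your proof is correct and follows the same route as the paper: a base case at $h_\star$ (trivial when $h_\star=0$, otherwise BKT capture followed by the level-$h_\star$ merge), then level-by-level preservation with the empty-block branch handled separately. Your extra check that the merge inputs are distinct, via $c_h\in U_h(b)$ and the disjointness of $Q_h(b)$ from $U_{h-1}(b)$, is a well-definedness detail the paper leaves implicit.
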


\begin{proof}
If $h_\star=0$, then $c_0=a^\star$. Consider any later level $h$. If $Q_h(b)$ is empty, the algorithm sets $c_h=c_{h-1}=a^\star$. If $Q_h(b)$ is nonempty, the merge duel compares $a^\star$ with the representative $g_h$, and the stated merge condition returns $a^\star$. Thus $c_h=a^\star$ for all $h$.

If $h_\star>0$, we make no claim about the champion before level $h_\star$. Since $a^\star\in Q_{h_\star}(b)$, this block is nonempty. By the stated BKT condition, $g_{h_\star}=a^\star$. The merge duel at level $h_\star$ then compares $a^\star$ with the previous champion and returns $a^\star$, so $c_{h_\star}=a^\star$. The preservation argument from the previous paragraph applies from level $h_\star+1$ onward, and hence $c_L=a^\star$.
\end{proof}

\begin{theorem}[Correctness of one identification pass]
\label{thm:app-plr-correctness}
For every starting arm $b$ and every $\varepsilon\in(0,1)$,
\[
    \Pr\{\textsc{TreeAscent}(b,\mathcal T,\varepsilon)=a^\star\}
    \ge 1-\varepsilon.
\]
\end{theorem}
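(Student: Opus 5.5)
The plan is to reduce the statement to Lemma~\ref{lem:app-plr-capture-preserve} and bound the probability that its hypotheses fail by a union bound. Fix $b$ and let $h_\star=h_\star(b)$ be the winner-entry level from Definition~\ref{def:app-winner-entry-level}. We define two bad events. The first, $E_{\rm cap}$, is that $h_\star>0$ and \textsc{BKT}$(Q_{h_\star}(b),\beta_{h_\star})$ does not return $a^\star$. The second, $E_h$ for $h\ge\max\{1,h_\star\}$, is that the merge \textsc{DuelTest} at level $h$ is invoked with $a^\star$ among its inputs but returns the other arm. On the complement of $E_{\rm cap}\cup\bigcup_h E_h$, Lemma~\ref{lem:app-plr-capture-preserve} gives $c_L=a^\star$. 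It therefore suffices to show $\Pr(E_{\rm cap})\le\varepsilon/2$ and $\sum_h\Pr(E_h)\le\varepsilon/2$.

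For the capture event, $a^\star\in Q_{h_\star}(b)$ and $a^\star$ beats every other arm of that block. Lemma~\ref{lem:app-bkt-subset} therefore gives failure probability at most
\[
\beta_{h_\star}=\frac{\varepsilon}{2}\,\frac{\bar m_{h_\star}}{M-1}\le\frac{\varepsilon}{2},
\]
since $\bar m_{h}\le M-1$ by Lemma~\ref{lem:app-sibling-partition}. Note that only the single block containing $a^\star$ matters. The size-proportional allocation is needed for cost, not correctness.

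For the merge events, the delicate point is that the opponent of $a^\star$ in the level-$h$ merge (either $c_{h-1}$ or $g_h$) is random and depends on earlier samples. I would handle this by conditioning. Each \textsc{DuelTest} call draws its own fresh comparisons, so conditional on the history before the call, whenever the inputs are $a^\star$ and some $j\ne a^\star$, Lemma~\ref{lem:app-dueltest-correct} gives conditional error at most $\eta_h$, because $p_{a^\star,j}>1/2$ for every such $j$. Taking expectations yields $\Pr(E_h)\le\eta_h=\varepsilon/(2L)$. Summing over at most $L$ levels gives at most $\varepsilon/2$. Levels with $m_h=0$ invoke no merge and contribute nothing.

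Combining the two bounds, the failure probability is at most $\varepsilon$. I expect the main obstacle to be stating this conditioning step rigorously: the randomness of the opponent and the order of the calls must not break the per-call guarantee. This is a standard sequential-sampling argument, since the outcome sequence of each call is i.i.d.\ given the past.
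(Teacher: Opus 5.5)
Your proof is correct and follows the paper's route. Like the paper, you reduce the claim to Lemma~\ref{lem:app-plr-capture-preserve} and union-bound the single capture failure (at most $\beta_{h_\star}$) with the merge failures (at most $\eta_h$ each, via conditioning on the pre-call history and fresh samples); the only cosmetic difference is that the paper bounds $\beta_{h_\star}$ by $\sum_h\beta_h=\varepsilon/2$, whereas you use $\bar m_{h_\star}\le M-1$ directly.
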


\begin{proof}
Let $h_\star=h_\star(b)$. If $h_\star>0$, then $a^\star\in Q_{h_\star}(b)$, and $a^\star$ is a Condorcet arm within this block. By Lemma~\ref{lem:app-bkt-subset},
\[
    \Pr\{\textsc{BKT}(Q_{h_\star}(b),\beta_{h_\star})\neq a^\star\}
    \le \beta_{h_\star}.
\]
If $h_\star=0$, no capture tournament is needed.

For each level $h$, define $F_h$ to be the event that the merge call at level $h$ is invoked with $a^\star$ as one of its two inputs but returns the other arm. Conditional on the history before that fresh merge test, Lemma~\ref{lem:app-dueltest-correct} gives failure probability at most $\eta_h$; hence $\Pr(F_h)\le\eta_h$. On the complement of the capture failure and of all events $F_h$, Lemma~\ref{lem:app-plr-capture-preserve} implies that the final output is $a^\star$. Therefore,
\[
    \Pr\{\textsc{TreeAscent}(b,\mathcal T,\varepsilon)\neq a^\star\}
    \le
    \mathbf 1\{h_\star>0\}\beta_{h_\star}
    +
    \sum_{h=1}^L\eta_h
    \le
    \sum_{h=1}^L\beta_h+
    \sum_{h=1}^L\eta_h .
\]
By construction and Lemma~\ref{lem:app-sibling-partition},
\[
    \sum_{h=1}^L\beta_h
    =
    \frac\varepsilon2\frac{\sum_h\bar m_h}{M-1}
    =\frac\varepsilon2,
    \qquad
    \sum_{h=1}^L\eta_h=\frac\varepsilon2.
\]
This proves the theorem.
\end{proof}

\subsection{Cost of \textsc{TreeAscent}}
\label{app:plr-cost}

\begin{theorem}[One-pass identification cost]
\label{thm:app-plr-cost}
There is a universal constant $C_{\rm TA}>0$ such that for every starting arm $b$ and every $\varepsilon\in(0,1)$, the number of comparisons $\tau_{\rm TA}(b,\varepsilon)$ used by Algorithm~\ref{alg:app-plr} satisfies
\[
    \mathbb E[\tau_{\rm TA}(b,\varepsilon)]
    \le
    \frac{C_{\rm TA}N}{\Delta^2}
    \left(
        \log\frac1\varepsilon+
        \log\frac1\Delta+1
    \right).
\]
The same bound upper-bounds the weak regret incurred during the pass.
\end{theorem}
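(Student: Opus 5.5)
The plan is to decompose $\tau_{\rm TA}(b,\varepsilon)$ level by level into the capture tournaments and the merge duels, bound each piece in expectation with the primitive lemmas, and sum over $h$. The one thing to check is that the confidence allocation adds no $\log N$ or $\log\log N$ factor.

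At level $h$ with $m_h>0$, the pass calls $\textsc{BKT}(Q_h(b),\beta_h)$ followed by one $\textsc{DuelTest}(c_{h-1},g_h,\eta_h)$. The inputs to these calls, namely the previous champion $c_{h-1}$ and the representative $g_h$, are random. However, both lemmas bound expected cost uniformly over the arms involved, since every pair has gap at least $\Delta$. We therefore condition on the history before each call and apply the tower property. This gives
\[
\mathbb E[\tau_{\rm TA}]\le \sum_{h:\,m_h\ge 2}\frac{C_{\rm bkt}m_h}{\Delta^2}\Bigl(\log\tfrac1{\beta_h}+\log\tfrac1\Delta+1\Bigr)+\sum_{h:\,m_h\ge1}\frac{C_{\rm dt}}{\Delta^2}\Bigl(\log\tfrac1{\eta_h}+\log\tfrac1\Delta+1\Bigr),
\]
using Lemma~\ref{lem:app-bkt-subset} for the first sum and Lemma~\ref{lem:app-dueltest-time} for the second. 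A BKT call on a singleton block costs nothing.

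\textbf{Capture term.} We have $\log(1/\beta_h)=\log(2/\varepsilon)+\log\bigl((M-1)/\bar m_h\bigr)$ with $\bar m_h=2^{h-1}$ and $M=2^L$, so $\log\bigl((M-1)/\bar m_h\bigr)\le (L-h+1)\log 2$. The $\varepsilon$- and $\Delta$-dependent pieces are multiplied by $\sum_h m_h=N-1$ (Lemma~\ref{lem:app-sibling-partition}). The remaining piece is bounded by using $m_h\le\bar m_h=2^{h-1}$:
\[
\sum_{h=1}^{L}2^{h-1}(L-h+1)=2^{L}\sum_{k\ge1}k\,2^{-k}\le 2M\le 4N .
\]
This is the step the size-proportional allocation is built for, and the main point of the proof. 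It must be done with the capacities $\bar m_h$, not the real counts $m_h$, so that formal byes do no harm; the bound $M<2N$ then finishes it.

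\textbf{Merge term.} There are at most $L$ merges. Each costs $O\bigl(\Delta^{-2}(\log(1/\varepsilon)+\log(2L)+\log(1/\Delta)+1)\bigr)$. Since $L\le\log_2 N+1$, we get $L\log(2L)=O(N)$ and $L\le N$, so the merge term is also $O\bigl(\tfrac{N}{\Delta^2}(\log\tfrac1\varepsilon+\log\tfrac1\Delta+1)\bigr)$.

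Adding the two terms and absorbing constants into $C_{\rm TA}$ gives the stated bound. For weak regret, each comparison incurs at most one unit, so the weak regret of the pass is at most $\tau_{\rm TA}$ and the same bound applies.
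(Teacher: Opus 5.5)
Your proposal is correct and follows essentially the same route as the paper. It uses the same split into capture tournaments and merge duels, the same size-weighted bound $\sum_h \bar m_h\log\frac{M-1}{\bar m_h}\le 2M\log 2$ via the capacities $\bar m_h$ together with $M<2N$, and the same absorption of the merge cost through $L\log(2L)=O(N)$. Your explicit tower-property conditioning for the random merge inputs and your separate handling of singleton blocks are small rigor additions the paper leaves implicit; your display should read $\le 2^L\sum_{k\ge1}k2^{-k}$ rather than $=$, which is harmless.
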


\begin{proof}
Let $T_h^{\rm sub}$ be the elementary-comparison count of the subtree tournament at level $h$, and let $T_h^{\rm merge}$ be the count of the merge duel. These quantities are zero if $Q_h(b)$ is empty. Then
\[
    \tau_{\rm TA}
    =
    \sum_{h=1}^LT_h^{\rm sub}
    +
    \sum_{h=1}^LT_h^{\rm merge}.
\]
For the subtree terms, the expected-cost part of Lemma~\ref{lem:app-bkt-subset} gives
\[
    \mathbb E[T_h^{\rm sub}]
    \le
    \frac{C m_h}{\Delta^2}
    \left(\log\frac1{\beta_h}+\log\frac1\Delta+1\right),
    \qquad m_h:=|Q_h(b)|.
\]
Therefore
\[
    \sum_{h=1}^L\mathbb E[T_h^{\rm sub}]
    \le
    \frac{C}{\Delta^2}
    \left(
        \sum_{h=1}^Lm_h\log\frac1{\beta_h}
        +(N-1)\left(\log\frac1\Delta+1\right)
    \right).
\]
Since $m_h\le\bar m_h$, $\sum_h\bar m_h=M-1$, and $\beta_h=(\varepsilon/2)\bar m_h/(M-1)$,
\[
    \sum_{h=1}^Lm_h\log\frac1{\beta_h}
    \le
    \sum_{h=1}^L\bar m_h\log\frac1{\beta_h}
    =
    (M-1)\log\frac2\varepsilon
    +
    \sum_{h=1}^L\bar m_h\log\frac{M-1}{\bar m_h}.
\]
Using $\bar m_h=2^{h-1}$ and $M=2^L$,
\[
    \sum_{h=1}^L\bar m_h\log\frac{M-1}{\bar m_h}
    \le
    \sum_{h=1}^L2^{h-1}(L-h+1)\log2
    =
    M\log2\sum_{j=1}^L\frac{j}{2^j}
    \le 2M\log2.
\]
Thus the subtree contribution is bounded by
\[
    \frac{CM}{\Delta^2}
    \left(\log\frac1\varepsilon+\log\frac1\Delta+1\right).
\]
For the merge terms, Lemma~\ref{lem:app-dueltest-time} gives
\[
    \sum_{h=1}^L\mathbb E[T_h^{\rm merge}]
    \le
    \frac{CL}{\Delta^2}
    \left(
        \log\frac{2L}{\varepsilon}+
        \log\frac1\Delta+1
    \right).
\]
The last display is bounded by the same $M$-linear expression after adjusting the universal constant. Indeed, $L\le M$ and $L\log(2L)\le C M$ for a numerical constant $C$, while $L\log(1/\varepsilon)$ and $L(\log(1/\Delta)+1)$ are dominated by $M\log(1/\varepsilon)$ and $M(\log(1/\Delta)+1)$. Since $M<2N$, the comparison bound follows. Finally, weak regret is at most the number of comparisons, proving the final claim.
\end{proof}

\section{Weak Regret}
\label{app:weak-proofs}

This appendix proves the strictly linear infinite-horizon weak-regret result. The proofs use fresh samples in every screening and replacement call. Here ``fresh'' means that the elementary outcomes observed inside the current call have not been observed or reused before. The selected pairs may still depend on the past and on earlier outcomes inside the same call; the stochastic model only requires that, conditional on each selected pair and the current history, the newly observed Bernoulli outcome is independent of previous outcomes with the corresponding mean. This is the reason that the analysis remains valid even though the incumbent entering an epoch was selected adaptively from earlier data.

\subsection{\textsc{ScreenAndReplace}}
\label{app:screen-repair}

\begin{lemma}[One-step correction probability]
\label{lem:app-one-step-repair}
Let \(\mathcal H\) denote the history immediately before a call to Algorithm~\ref{alg:screen-and-replace}; this history includes the incumbent \(c\) already chosen by the algorithm. Let \(c^+\) be the arm returned by the call. Then, almost surely,
\[
    \Pr(c^+\neq a^\star\mid\mathcal H)
    \le
    \mathbf 1\{c\neq a^\star\}(\rho+\kappa)
    +
    \mathbf 1\{c=a^\star\}\kappa.
\]
Consequently,
\[
    \Pr(c^+\neq a^\star\mid c\neq a^\star)\le \rho+\kappa,
    \qquad
    \Pr(c^+\neq a^\star\mid c=a^\star)\le \kappa
\]
whenever the conditioning events have positive probability, and \(\Pr(c^+\neq a^\star)\le\rho+\kappa\) unconditionally.
\end{lemma}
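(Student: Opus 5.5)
The plan is to condition on $\mathcal H$ throughout. Because every call inside \textsc{ScreenAndReplace} uses fresh samples, Lemma~\ref{lem:app-dueltest-correct} and Lemma~\ref{lem:app-bkt-subset} apply conditionally on $\mathcal H$ with the incumbent $c$ treated as fixed. More precisely, at each \textsc{DuelTest} call the pair is $\mathcal H$-measurable, or depends only on outcomes of earlier calls within the same epoch. The new outcomes are i.i.d.\ Bernoulli with the corresponding mean given everything before, so each conditional error bound holds almost surely. I then split into the two cases $c=a^\star$ and $c\neq a^\star$ and bound the failure event by a union of at most two primitive failures.

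\emph{Case $c=a^\star$.} Every screening test \textsc{DuelTest}$(a^\star,j,\rho)$ involves $a^\star$. If $B=\emptyset$ the call returns $c=a^\star$ and there is no error. If $B\neq\emptyset$ the call returns \textsc{BKT}$(B\cup\{a^\star\},\kappa)$. The set $B\cup\{a^\star\}$ contains $a^\star$, which is Condorcet within it, so by the correctness part of Lemma~\ref{lem:app-bkt-subset}, applied conditionally on $\mathcal H$ and the screening outcomes (hence on $B$), the output differs from $a^\star$ with probability at most $\kappa$. Integrating over $B$ gives $\Pr(c^+\neq a^\star\mid\mathcal H)\le\kappa$. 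Screening errors do not even need to be charged here, because they only enlarge $B$ and the set still contains $a^\star$.

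\emph{Case $c\neq a^\star$.} Screening runs \textsc{DuelTest}$(c,a^\star,\rho)$ exactly once, for $j=a^\star$. Since $p_{a^\star,c}>1/2$, Lemma~\ref{lem:app-dueltest-correct} gives that this test returns $c$ (a wrong answer) with conditional probability at most $\rho$. Off that event $a^\star\in B$, so $B\neq\emptyset$ and the output is \textsc{BKT}$(B\cup\{c\},\kappa)$, whose input set contains $a^\star$. As before, conditionally on the screening outcomes the BKT returns $a^\star$ except with probability $\kappa$. A union bound gives $\rho+\kappa$.

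Putting the cases together yields the almost-sure indicator bound. The conditional versions follow by taking expectations over $\mathcal H$ restricted to $\{c\neq a^\star\}$ or $\{c=a^\star\}$, since these events are $\mathcal H$-measurable. The unconditional bound follows from $\kappa\le\rho+\kappa$. The only delicate step is making the conditioning rigorous: the BKT input $B$ is random and depends on the screening samples, so I will apply Lemma~\ref{lem:app-bkt-subset} conditionally on the $\sigma$-field generated by $\mathcal H$ and all screening outcomes. The fresh-samples convention guarantees that this lemma's proof, a union bound over \textsc{DuelTest} failures along $a^\star$'s bracket path, goes through verbatim under this conditioning.
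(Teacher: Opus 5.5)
Your proposal is correct and follows the paper's proof essentially step for step. You condition on the pre-call history so that the fresh-sample guarantees of \textsc{DuelTest} and \textsc{BKT} apply, then split on $c=a^\star$ (only the replacement tournament can lose $a^\star$, with probability at most $\kappa$) versus $c\neq a^\star$ (a union bound over the single screening test against $a^\star$ and the replacement tournament, giving $\rho+\kappa$), and finally integrate out to get the conditional and unconditional claims. Your explicit conditioning on the $\sigma$-field generated by $\mathcal H$ and the screening outcomes is a somewhat more careful version of the paper's ``conditional on $a^\star\in B$'' step, not a different argument.
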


\begin{proof}
Fix any realized pre-call history, including the realized value of \(c\). After this conditioning, the only randomness left is the fresh feedback generated inside the current call. Therefore Lemma~\ref{lem:app-dueltest-correct} and the correctness part of Lemma~\ref{lem:app-bkt-subset} can be applied to the tests and tournaments run inside this call, even though the pairs tested inside the call may be chosen adaptively from earlier fresh feedback in the same call.

If \(c\neq a^\star\), then the screening phase compares \(c\) against \(a^\star\) exactly once. Since \(a^\star\succ c\), \textsc{DuelTest}\((c,a^\star,\rho)\) inserts \(a^\star\) into \(B\) with conditional probability at least \(1-\rho\). Conditional on \(a^\star\in B\), the replacement tournament is run on a set containing \(a^\star\), and Lemma~\ref{lem:app-bkt-subset} implies that it returns \(a^\star\) with conditional probability at least \(1-\kappa\). A union bound gives conditional failure probability at most \(\rho+\kappa\).

If \(c=a^\star\), then either \(B=\emptyset\) and the algorithm returns \(a^\star\) deterministically, or \(B\neq\emptyset\) and the replacement tournament is run on a set containing \(a^\star\). In the latter case the tournament fails with conditional probability at most \(\kappa\). This proves the displayed conditional inequality. The remaining claims follow by taking expectations and conditioning on \(\{c\neq a^\star\}\) or \(\{c=a^\star\}\) when those events have positive probability.
\end{proof}

\begin{lemma}[Expected weak regret in one screen-and-replace epoch]
\label{lem:app-screen-epoch-regret}
Let \(\mathcal H\) denote the history immediately before a call to Algorithm~\ref{alg:screen-and-replace}; this history includes the incumbent \(c\) already chosen by the algorithm. Let \(W(c,\rho,\kappa)\) be the weak regret incurred during the call. Then, almost surely,
\begin{equation}
\label{eq:app-one-epoch-weak-conditional}
    \mathbb E[W(c,\rho,\kappa)\mid\mathcal H]
    \le
    \frac{CN}{\Delta^2}
    \left[
        \mathbf 1\{c\neq a^\star\}
        \left(\log\frac1\rho+\log\frac1\kappa+
        \log\frac1\Delta+1\right)
        +
        \rho\left(\log\frac1\kappa+
        \log\frac1\Delta+1\right)
    \right].
\end{equation}
Consequently, if \(p=\Pr(c\neq a^\star)\), then
\begin{equation}
\label{eq:app-one-epoch-weak}
    \mathbb E[W(c,\rho,\kappa)]
    \le
    \frac{CN}{\Delta^2}
    \left[
        p\left(\log\frac1\rho+\log\frac1\kappa+
        \log\frac1\Delta+1\right)
        +
        \rho\left(\log\frac1\kappa+
        \log\frac1\Delta+1\right)
    \right].
\end{equation}
\end{lemma}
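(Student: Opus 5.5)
We bound the weak regret of the call by the comparisons in its two phases, conditioning throughout on $\mathcal H$ (so $c$ is fixed and only fresh samples remain random). A round incurs weak regret only if neither compared arm is $a^\star$. Write $W=W_{\rm scr}+W_{\rm rep}$ for the regret from the screening \textsc{DuelTest} calls and from the replacement \textsc{BKT}, respectively. Both are bounded by the corresponding comparison counts, which we estimate with Lemma~\ref{lem:app-dueltest-time} and Lemma~\ref{lem:app-bkt-subset}.

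\emph{Case $c\neq a^\star$.} Screening makes $N-1$ calls \textsc{DuelTest}$(c,j,\rho)$. Each has conditional expected length at most $\frac{C_{\rm dt}}{\Delta^2}(\log\frac1\rho+\log\frac1\Delta+1)$ by Lemma~\ref{lem:app-dueltest-time}, since the gap of every pair is at least $\Delta$. Summing gives $\frac{CN}{\Delta^2}(\log\frac1\rho+\log\frac1\Delta+1)$. The replacement \textsc{BKT} runs on a set of size at most $N$, so Lemma~\ref{lem:app-bkt-subset} bounds its conditional expected cost by $\frac{CN}{\Delta^2}(\log\frac1\kappa+\log\frac1\Delta+1)$. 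Both contributions are absorbed into the first bracket of~\eqref{eq:app-one-epoch-weak-conditional}.

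\emph{Case $c=a^\star$.} Every screening duel involves $a^\star$, so $W_{\rm scr}=0$. Regret can arise only in the replacement tournament, which is run only if $B\neq\emptyset$, and the set $B$ is fixed once screening ends. We condition on the screening outcome (the post-screening $\sigma$-field). The bracket then uses fresh samples, and Lemma~\ref{lem:app-bkt-subset} gives cost at most $\frac{C|B\cup\{c\}|}{\Delta^2}(\log\frac1\kappa+\log\frac1\Delta+1)\le\frac{CN}{\Delta^2}(\cdots)$ on $\{B\neq\emptyset\}$. It remains to bound $\Pr(B\neq\emptyset\mid\mathcal H)$. Each $j$ enters $B$ only through a wrong \textsc{DuelTest} decision, which has probability at most $\rho$ by Lemma~\ref{lem:app-dueltest-correct}.

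The main obstacle is this last step. A union bound gives $\Pr(B\ne\emptyset)\le(N-1)\rho$, and paired with the factor $N$ in the BKT cost this yields $N^2\rho$, not $N\rho$. The fix is to use the linear-in-$|B|$ form of the BKT cost rather than the crude bound by $N$. Write the conditional cost as at most $\frac{C(|B|+1)\mathbf 1\{B\neq\emptyset\}}{\Delta^2}(\log\frac1\kappa+\log\frac1\Delta+1)\le\frac{2C|B|}{\Delta^2}(\cdots)$. Then use $\mathbb E[|B|\mid\mathcal H]=\sum_{j}\Pr(j\in B\mid\mathcal H)\le(N-1)\rho$; linearity of expectation removes the extra factor. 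This gives the second term $\frac{CN}{\Delta^2}\rho(\log\frac1\kappa+\log\frac1\Delta+1)$.

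Finally, \eqref{eq:app-one-epoch-weak} follows by taking expectations of~\eqref{eq:app-one-epoch-weak-conditional} over $\mathcal H$, with $\mathbb E[\mathbf 1\{c\neq a^\star\}]=p$.
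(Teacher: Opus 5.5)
Your proposal is correct and follows the paper's proof essentially step for step: the same case split on whether $c=a^\star$, the same use of Lemma~\ref{lem:app-dueltest-time} and Lemma~\ref{lem:app-bkt-subset} for the screening and replacement costs, and the same bound $(|B|+1)\mathbf 1\{B\neq\emptyset\}\le 2|B|$ on the false-positive tournament. As in the paper, you then combine this with $\mathbb E[|B|\mid\mathcal H]\le(N-1)\rho$ to get the $N\rho$ term and avoid an $N^2\rho$ term.
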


\begin{proof}
Fix any realized pre-call history, including the realized value of \(c\). All comparisons inside the call use fresh feedback, so after conditioning on each selected pair and the current within-call history, the expected stopping-time bounds apply exactly as in the nonadaptive case.

During screening, if \(c=a^\star\), every screening comparison is of the form \((a^\star,j)\) and has zero weak regret. If \(c\neq a^\star\), the comparison against \(a^\star\) itself also has zero weak regret; nevertheless, we use the simpler upper bound by the total number of screening comparisons. There are at most \(N-1\) such tests, each with confidence \(\rho\), so Lemma~\ref{lem:app-dueltest-time} gives conditional screening contribution at most
\[
    \mathbf 1\{c\neq a^\star\}
    \frac{CN}{\Delta^2}
    \left(\log\frac1\rho+
    \log\frac1\Delta+1\right).
\]
If \(c\neq a^\star\) and the replacement tournament is invoked, its weak regret is bounded by its total number of comparisons. Since the candidate set has size at most \(N\), Lemma~\ref{lem:app-bkt-subset} gives conditional contribution at most
\[
    \mathbf 1\{c\neq a^\star\}
    \frac{CN}{\Delta^2}
    \left(\log\frac1\kappa+
    \log\frac1\Delta+1\right).
\]
On the event \(c=a^\star\), a non-best arm \(j\) enters \(B\) only if \textsc{DuelTest}\((a^\star,j,\rho)\) returns \(j\), which has conditional probability at most \(\rho\). Therefore
\[
    \mathbb E[|B|\mid\mathcal H, c=a^\star]\le (N-1)\rho.
\]
Conditional on the realized beater set \(B\), if \(B\neq\emptyset\), the replacement tournament is run on \(|B|+1\) arms and its weak regret is at most its comparison count. This tournament uses fresh samples after the screening phase, so Lemma~\ref{lem:app-bkt-subset} applies to the inner conditional expectation. By the tower property,
\[
\mathbb E[T_{\rm repl}\mid\mathcal H,c=a^\star]
=
\mathbb E\!\left[\mathbb E[T_{\rm repl}\mid B,\mathcal H,c=a^\star]\mid\mathcal H,c=a^\star\right].
\]
The inner expectation is at most
\[
    \frac{C(|B|+1)\mathbf 1\{|B|>0\}}{\Delta^2}
    \left(\log\frac1\kappa+
    \log\frac1\Delta+1\right).
\]
Since \((|B|+1)\mathbf 1\{|B|>0\}\le2|B|\), taking expectation over \(B\) gives a false-positive contribution at most
\[
    \frac{CN\rho}{\Delta^2}
    \left(\log\frac1\kappa+
    \log\frac1\Delta+1\right).
\]
Adding the three conditional contributions proves~\eqref{eq:app-one-epoch-weak-conditional}. Taking expectations proves~\eqref{eq:app-one-epoch-weak}.
\end{proof}

\subsection{\textsc{TG-ITE-Weak}}
\label{app:online-weak-algorithm}

\begin{theorem}[Tail bound for infinite-horizon weak regret]
\label{thm:app-online-weak}
Under Assumption~\ref{assump:condorcet-gap}, fix any starting incumbent \(c_0\) chosen before the online tail starts, and run the \textsc{ScreenAndReplace} epochs with \(\rho_s=\kappa_s=4^{-(s+2)}\). If \(R_{\mathrm{tail},\infty}^{\rm weak}\) denotes the weak regret incurred by these epochs only, then
\[
    \mathbb E[R_{\mathrm{tail},\infty}^{\rm weak}]
    \le
    \frac{CN}{\Delta^2}
    \left(\log\frac1\Delta+1\right)
\]
for a universal constant \(C>0\). In particular, \(R_{\mathrm{tail},\infty}^{\rm weak}<\infty\) almost surely.
\end{theorem}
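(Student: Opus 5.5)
The plan is to sum the per-epoch bound of Lemma~\ref{lem:app-screen-epoch-regret} over all epochs $s\ge1$. The incumbent-error probabilities $p_{s-1}:=\Pr(c_{s-1}\neq a^\star)$ will be controlled by the one-step correction bound of Lemma~\ref{lem:app-one-step-repair}. Write $W_s$ for the weak regret of epoch $s$, so that $R^{\rm weak}_{\mathrm{tail},\infty}=\sum_{s\ge1}W_s$. By monotone convergence, $\mathbb E[R^{\rm weak}_{\mathrm{tail},\infty}]=\sum_s\mathbb E[W_s]$. Each epoch uses fresh samples, and its incumbent $c_{s-1}$ is measurable with respect to the pre-call history. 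Hence the conditional form~\eqref{eq:app-one-epoch-weak-conditional} applies, and taking expectations gives~\eqref{eq:app-one-epoch-weak} with $p=p_{s-1}$, $\rho=\rho_s$ and $\kappa=\kappa_s$.

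First I bound $p_s$. Lemma~\ref{lem:app-one-step-repair} gives the unconditional bound $\Pr(c_s\neq a^\star)\le\rho_s+\kappa_s$. This holds whatever $c_{s-1}$ is, because an incorrect incumbent is repaired with probability at least $1-\rho_s-\kappa_s$ and a correct one is kept with probability at least $1-\kappa_s$. Consequently $p_s\le 2\cdot4^{-(s+2)}$ for all $s\ge1$. The only epoch whose incumbent may be arbitrary is the first, where I use the trivial bound $p_0\le1$. This is the point that lets the theorem hold for any starting incumbent $c_0$, with no dependence on the warm start.

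Next I substitute $\log(1/\rho_s)=\log(1/\kappa_s)=(s+2)\log4$. For epoch $1$, the per-epoch bound is at most
\[
\frac{CN}{\Delta^2}\left(C'+\log\frac1\Delta+1\right)
\]
for a constant $C'$. For epoch $s\ge2$, it is at most
\[
\frac{CN}{\Delta^2}\left[\left(2\cdot4^{-(s+1)}+4^{-(s+2)}\right)\left(2(s+2)\log4+\log\frac1\Delta+1\right)\right].
\]
Summing over $s\ge2$ uses only the convergent series $\sum_s 4^{-s}(s+2)<\infty$ and $\sum_s4^{-s}<\infty$. The total is therefore
\[
\frac{CN}{\Delta^2}\left(\log\frac1\Delta+1\right)
\]
after enlarging the constant. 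Finite expectation then gives $R^{\rm weak}_{\mathrm{tail},\infty}<\infty$ almost surely.

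I expect the main subtlety to be justifying that the per-epoch bound may be applied with $p=\Pr(c_{s-1}\neq a^\star)$, since the incumbent is adaptively chosen. This is exactly what the almost-sure conditional statements in Lemmas~\ref{lem:app-one-step-repair} and~\ref{lem:app-screen-epoch-regret} provide: one conditions on the history $\mathcal H_{s-1}$ and applies the tower property. A second subtlety is that the algorithm runs forever, so it must be checked that every epoch terminates almost surely. This follows because each \textsc{DuelTest} has finite expected length by Lemma~\ref{lem:app-dueltest-time}, so each epoch ends almost surely and the tail is a well-defined infinite sequence of epochs.
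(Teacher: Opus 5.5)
Your proposal is correct and follows essentially the same route as the paper's proof. Like the paper, you bound the incumbent error probabilities through the one-step correction lemma (trivial bound for the first epoch, $2\cdot 4^{-(s+1)}$ thereafter), plug these into the conditional per-epoch regret bound via the tower property, sum the resulting geometrically weighted series, and use almost-sure termination of each epoch together with monotone convergence to handle the infinite sum and deduce almost-sure finiteness.
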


\begin{proof}
Let \(\mathcal H_s\) denote the history after epoch \(s\), and let \(W_s\) be the weak regret incurred during epoch \(s\). Define
\[
    p_s:=\Pr(c_{s-1}\neq a^\star).
\]
The starting incumbent can be arbitrary, so \(p_1\le1\). By Lemma~\ref{lem:app-one-step-repair}, conditionally on the history before epoch \(s\),
\[
    \Pr(c_s\neq a^\star\mid\mathcal H_{s-1})
    \le
    \mathbf 1\{c_{s-1}\neq a^\star\}(\rho_s+\kappa_s)
    +
    \mathbf 1\{c_{s-1}=a^\star\}\kappa_s
    \le \rho_s+\kappa_s.
\]
Taking expectations gives
\[
    p_{s+1}
    =\Pr(c_s\neq a^\star)
    \le \rho_s+
    \kappa_s
    =2\cdot4^{-(s+2)}.
\]
Equivalently, for every \(s\ge2\),
\[
    p_s\le 2\cdot4^{-(s+1)},
\]
while \(p_1\le1\). Thus \(p_s\) is geometrically summable from \(s=2\) onward.

Applying Lemma~\ref{lem:app-screen-epoch-regret} conditionally on the history before epoch \(s\) gives
\[
\mathbb E[W_s\mid\mathcal H_{s-1}]
\le
\frac{CN}{\Delta^2}
\left[
        \mathbf 1\{c_{s-1}\neq a^\star\}
        \left(\log\frac1{\rho_s}+\log\frac1{\kappa_s}+
        \log\frac1\Delta+1\right)
        +
        \rho_s\left(\log\frac1{\kappa_s}+
        \log\frac1\Delta+1\right)
\right].
\]
Taking expectations yields the same bound with \(\mathbf 1\{c_{s-1}\neq a^\star\}\) replaced by \(p_s\). Because \(\rho_s=\kappa_s=4^{-(s+2)}\), we have \(\log(1/\rho_s)=\log(1/\kappa_s)=(s+2)\log4\). Using \(p_1\le1\), \(p_s\le2\cdot4^{-(s+1)}\) for \(s\ge2\), and \(\rho_s=4^{-(s+2)}\),
\[
    \sum_{s\ge1}(p_s+\rho_s)(s+1)<\infty,
    \qquad
    \sum_{s\ge1}(p_s+\rho_s)<\infty,
\]
because both are bounded by universal multiples of \(1+\sum_{s\ge1}s4^{-s}\). The first series controls the factors \(\log(1/\rho_s)\) and \(\log(1/\kappa_s)\), while the second controls the common factor \(\log(1/\Delta)+1\). Summing over \(s\) yields
\[
    \sum_{s\ge1}\mathbb E[W_s]
    \le
    \frac{CN}{\Delta^2}
    \left(\log\frac1\Delta+1\right).
\]
Each epoch contains finitely many fresh \textsc{DuelTest} and \textsc{BKT} calls, and each call terminates almost surely under \(\Delta>0\). Hence all epochs terminate almost surely and the infinite comparison stream is well-defined. Since the epochs are consecutive and disjoint in time,
\[
    R_{\mathrm{tail},\infty}^{\rm weak}=\sum_{s\ge1}W_s,
\]
and monotone convergence gives \(\mathbb E[R_{\mathrm{tail},\infty}^{\rm weak}]=\sum_s\mathbb E[W_s]\), proving the expected-regret bound.

Finally, a nonnegative random variable with finite expectation is finite almost surely. Hence \(R_{\mathrm{tail},\infty}^{\rm weak}<\infty\) almost surely.
\end{proof}

\begin{corollary}[Constant-confidence warm start]
\label{cor:app-weak-warm-start}
Under Assumption~\ref{assump:condorcet-gap}, Algorithm~\ref{alg:tlcr-weak} with the constant-confidence warm start in line 1 satisfies
\[
    \mathbb E[R_\infty^{\rm weak}]
    \le
    \frac{CN}{\Delta^2}
    \left(\log\frac1\Delta+1\right)
\]
for a universal constant \(C>0\). In particular, \(R_\infty^{\rm weak}<\infty\) almost surely.
\end{corollary}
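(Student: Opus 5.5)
The plan is to split the total weak regret of Algorithm~\ref{alg:tlcr-weak} into two pieces: the regret incurred during the warm-start pass, and the regret incurred during the subsequent \textsc{ScreenAndReplace} tail. We then bound each piece by an already established result and add. Formally,
\[
    R_\infty^{\rm weak}=R_{\rm warm}^{\rm weak}+R_{\mathrm{tail},\infty}^{\rm weak}.
\]
This identity holds because the warm start and the epochs are consecutive and disjoint in time. It requires that the warm start terminate almost surely, which follows from the finiteness of its expected length.

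For the warm-start piece, we invoke Theorem~\ref{thm:app-plr-cost} with \(\varepsilon=1/4\). Its final claim states that the weak regret incurred during the pass is at most the number of comparisons. This gives
\[
    \mathbb E[R_{\rm warm}^{\rm weak}]\le \frac{C_{\rm TA}N}{\Delta^2}\left(\log 4+\log\frac1\Delta+1\right)\le \frac{CN}{\Delta^2}\left(\log\frac1\Delta+1\right).
\]

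For the tail piece, we apply Theorem~\ref{thm:app-online-weak}. The one point to check is its hypothesis that the starting incumbent is arbitrary and chosen before the tail begins. Here \(c_0\) is random, being the output of \textsc{TreeAscent}. However, that theorem's proof only uses the trivial bound \(p_1\le 1\) together with the conditional, history-based statements of Lemmas~\ref{lem:app-one-step-repair} and~\ref{lem:app-screen-epoch-regret}. Both lemmas hold almost surely given the pre-call history, and the tail uses fresh samples. The argument therefore goes through verbatim when we condition on the warm-start history. This yields the same bound \(\frac{CN}{\Delta^2}(\log\frac1\Delta+1)\) for \(\mathbb E[R_{\mathrm{tail},\infty}^{\rm weak}]\).

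Adding the two bounds gives the claimed expectation bound. Almost-sure finiteness then follows because a nonnegative random variable with finite mean is finite almost surely.

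The only step that needs care is justifying this conditional application of Theorem~\ref{thm:app-online-weak} to a data-dependent initial incumbent. The constant-confidence warm start plays no quantitative role in the bound, since \(p_1\le1\) is already enough; it only helps constants in practice.
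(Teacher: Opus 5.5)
Your proposal is correct and matches the paper's proof: you split $R_\infty^{\rm weak}$ into the warm-start regret and the tail regret. The warm-start piece is bounded by its comparison count via Theorem~\ref{thm:app-plr-cost} at $\varepsilon=1/4$, the tail piece by Theorem~\ref{thm:app-online-weak}, and almost-sure finiteness follows from finite expectation. Your added care about applying the tail theorem to the data-dependent incumbent $c_0$ (via the conditional Lemmas~\ref{lem:app-one-step-repair} and~\ref{lem:app-screen-epoch-regret} and fresh samples) is a more explicit version of the paper's phrase ``uniformly over the starting incumbent.''
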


\begin{proof}
Let \(W_0\) be the weak regret incurred by the warm start. Since weak regret is at most the number of comparisons, Theorem~\ref{thm:app-plr-cost} with confidence \(1/4\) gives
\[
    \mathbb E[W_0]
    \le
    \frac{CN}{\Delta^2}
    \left(\log\frac1\Delta+1\right).
\]
By Theorem~\ref{thm:app-online-weak}, the tail contributes at most the same order uniformly over the starting incumbent. Since
\[
    R_\infty^{\rm weak}=W_0+R_{\mathrm{tail},\infty}^{\rm weak},
\]
the claim follows after absorbing constants. The almost-sure statement follows because both nonnegative terms on the right-hand side are finite almost surely.
\end{proof}

\begin{corollary}[High-confidence warm start]
\label{cor:app-weak-high-confidence-warm-start}
If line 1 of Algorithm~\ref{alg:tlcr-weak} is replaced by \(c_0\leftarrow\textsc{TreeAscent}(b_0,\mathcal T,\delta)\), then \(c_0\) is a \(\delta\)-correct BAI output and the continued run satisfies
\[
    \mathbb E[R_\infty^{\rm weak}]
    \le
    \frac{CN}{\Delta^2}
    \left(\log\frac1\delta+\log\frac1\Delta+1\right).
\]
\end{corollary}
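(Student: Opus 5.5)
The plan mirrors the proof of Corollary~\ref{cor:app-weak-warm-start}: split the run at the end of the warm start as
\[
R_\infty^{\rm weak}=W_0+R_{\mathrm{tail},\infty}^{\rm weak},
\]
where \(W_0\) is the weak regret incurred by \(\textsc{TreeAscent}(b_0,\mathcal T,\delta)\) and the second term is the regret of the \textsc{ScreenAndReplace} epochs. We then bound the two pieces separately.

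First, \(\delta\)-correctness of \(c_0\) is exactly Theorem~\ref{thm:app-plr-correctness} with \(\varepsilon=\delta\). The output is produced at the end of the pass, so the BAI stopping time is \(\tau_{\rm TA}(b_0,\delta)\). Nothing after line 1 affects this output, since the BAI answer is declared before the tail begins.

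Second, weak regret is at most the number of comparisons. Hence Theorem~\ref{thm:app-plr-cost} gives
\[
\mathbb E[W_0]\le \frac{C_{\rm TA}N}{\Delta^2}\left(\log\frac1\delta+\log\frac1\Delta+1\right).
\]
This also establishes the \(O(N)\) BAI sample complexity claimed in Corollary~\ref{cor:dual}.

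Third, for the tail we invoke Theorem~\ref{thm:app-online-weak}, which holds for an arbitrary starting incumbent and uses the bound \(p_1\le 1\). The only point to check is that the theorem still applies when \(c_0\) is random and chosen from warm-start data. The tail calls use fresh samples, and Lemmas~\ref{lem:app-one-step-repair} and~\ref{lem:app-screen-epoch-regret} are stated conditionally on the pre-call history \(\mathcal H\), which includes the realized \(c_0\). So the conditional bound \(\mathbb E[R_{\mathrm{tail},\infty}^{\rm weak}\mid c_0]\le \frac{CN}{\Delta^2}(\log\frac1\Delta+1)\) holds almost surely, and taking expectations preserves it. Adding the two pieces and absorbing constants gives the stated bound. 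Finiteness almost surely follows as before.

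The main obstacle is this conditioning step. Theorem~\ref{thm:app-online-weak} is phrased for a starting incumbent fixed before the tail, so one must argue that its proof is uniform over that incumbent. It is: the recursion only uses \(p_1\le1\) and the conditional lemmas. One could sharpen the tail bound using \(p_1\le\delta\), but this is unnecessary, since the \(\log(1/\delta)\) term already comes from \(W_0\).
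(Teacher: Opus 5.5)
Your proposal is correct and follows the paper's own argument: correctness from Theorem~\ref{thm:app-plr-correctness} with \(\varepsilon=\delta\), warm-start regret bounded by its comparison count via Theorem~\ref{thm:app-plr-cost}, and a tail term of order \(CN\Delta^{-2}(\log(1/\Delta)+1)\) from Theorem~\ref{thm:app-online-weak}, which is absorbed into the displayed bound. Your explicit check that the tail theorem still applies to a data-dependent \(c_0\) is a detail the paper leaves implicit in the phrase ``any starting incumbent chosen before the online tail starts.'' That check rests on fresh samples, the conditional lemmas, and \(p_1\le1\).
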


\begin{proof}
The BAI correctness and warm-start comparison bound follow from Theorems~\ref{thm:app-plr-correctness} and~\ref{thm:app-plr-cost}. The weak regret of the warm start is bounded by its number of comparisons, and Theorem~\ref{thm:app-online-weak} adds a tail term of order \(CN\Delta^{-2}(\log(1/\Delta)+1)\), which is absorbed into the displayed bound.
\end{proof}

\section{Strong Regret}
\label{app:strong-proofs}

This appendix proves the strong-regret guarantees. We use the self-play convention from the problem setting: a self-comparison $(i,i)$ produces dummy feedback and has zero strong regret only when $i=a^\star$. Every non-dummy comparison made by \textsc{TreeAscent} is between two distinct arms, and therefore contributes at most one unit of binary strong regret.

\subsection{\textsc{TG-ITE-Strong-KnownHorizon}}
\label{app:strong-known-horizon}

\begin{theorem}[Known-horizon strong regret]
\label{thm:app-strong-known}
For every horizon $H\ge1$, Algorithm~\ref{alg:tlcr-strong-known} satisfies
\[
    \mathbb E[R_H^{\rm str}]
    \le
    \frac{CN}{\Delta^2}
    \left(
        \log(H+1)+
        \log\frac1\Delta+1
    \right)+1.
\]
\end{theorem}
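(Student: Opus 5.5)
The plan is to split the horizon into the identification phase and the self-play phase, and to bound the strong regret of each separately. Let $\tau:=\tau_{\rm TA}(b_0,\epsilon_H)$ denote the (untruncated) number of comparisons the pass \textsc{TreeAscent}$(b_0,\mathcal T,\epsilon_H)$ would use, with $\epsilon_H=1/(H+1)$. Since the algorithm truncates the pass at time $H$, the number of rounds spent inside identification is $\min\{\tau,H\}\le\tau$. Every one of these rounds compares two distinct arms, so each contributes at most one unit of binary strong regret. The remaining rounds, if any, are self-plays $(\widehat a,\widehat a)$. These are regret-free when $\widehat a=a^\star$, and otherwise each costs at most one. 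This gives the pathwise decomposition
\[
    R_H^{\rm str}\le \tau + H\cdot\mathbf 1\{\tau<H,\ \widehat a\neq a^\star\}.
\]

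For the first term, I will invoke Theorem~\ref{thm:app-plr-cost} with $\varepsilon=\epsilon_H$. This gives $\mathbb E[\tau]\le \frac{C N}{\Delta^2}\bigl(\log(H+1)+\log\frac1\Delta+1\bigr)$, since $\log(1/\epsilon_H)=\log(H+1)$.

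For the second term, I will couple the truncated run with the full pass. The truncated algorithm reaches self-play only on the event $\{\tau<H\}$, and on that event its output coincides with the output of the untruncated pass. The indicator is therefore dominated by $\mathbf 1\{\textsc{TreeAscent}(b_0,\mathcal T,\epsilon_H)\neq a^\star\}$. Theorem~\ref{thm:app-plr-correctness} bounds the probability of this event by $\epsilon_H=1/(H+1)$. Consequently the expected self-play regret is at most $H/(H+1)\le 1$, which is exactly the additive $+1$ in the statement.

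Adding the two bounds gives the claim. The only step needing care is the truncation/coupling: I need to justify that halting the pass at $H$ neither increases the comparison count nor changes the output on the event that the pass completes. This holds because the truncated process is literally a prefix of the same sample path. Everything else is a direct substitution into the earlier theorems.
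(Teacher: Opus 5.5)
Your proposal is correct and follows essentially the same route as the paper: the pathwise bound $R_H^{\rm str}\le \tau_H + H\,\mathbf 1\{\widehat a\neq a^\star\}$ via the counterfactual untruncated pass, then Theorems~\ref{thm:app-plr-cost} and~\ref{thm:app-plr-correctness} at $\varepsilon_H=1/(H+1)$, giving the $\log(H+1)$ term and the additive $H\varepsilon_H\le 1$. Restricting the indicator to $\{\tau<H\}$ is a harmless refinement of the paper's slightly cruder bound.
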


\begin{proof}
Let $\tau_H$ and $\widehat a$ be the stopping time and eventual output of the full \textsc{TreeAscent} pass if it were allowed to run until termination. This is a counterfactual definition relative to the finite horizon: when $\tau_H>H$, the actual algorithm never computes or uses $\widehat a$ before time $H$. The pair $(\tau_H,\widehat a)$ is nevertheless well-defined because the pass terminates almost surely. During the identification pass, every realized comparison is a distinct-arm comparison and hence contributes at most one unit of binary strong regret. In the event $\tau_H>H$, the algorithm spends all $H$ rounds inside the pass and then halts, so $\widehat a$ has no effect on the realized regret. If $\tau_H\le H$, the remaining self-play rounds contribute strong regret only on the event $\{\widehat a\neq a^\star\}$. Pathwise,
\[
    R_H^{\rm str}
    \le
    \min\{\tau_H,H\}
    +(H-\tau_H)_+\mathbf 1\{\widehat a\neq a^\star\}
    \le
    \tau_H+H\mathbf 1\{\widehat a\neq a^\star\}.
\]
Taking expectations and applying Theorems~\ref{thm:app-plr-correctness} and~\ref{thm:app-plr-cost} with $\varepsilon_H=1/(H+1)$ gives
\[
    \mathbb E[R_H^{\rm str}]
    \le
    \frac{CN}{\Delta^2}
    \left(\log\frac1{\varepsilon_H}+\log\frac1\Delta+1\right)
    +H\varepsilon_H.
\]
Since $\log(1/\varepsilon_H)=\log(H+1)$ and $H\varepsilon_H\le1$, the theorem follows.
\end{proof}

\subsection{\textsc{TG-ITE-Strong-Anytime}}
\label{app:strong-horizonless}

\begin{theorem}[Horizonless strong regret]
\label{thm:app-strong-anytime}
For every horizon $T\ge2$, Algorithm~\ref{alg:tlcr-strong-anytime} satisfies
\[
    \mathbb E[R_T^{\rm str}]
    \le
    \frac{CN}{\Delta^2}
    \left[
        \log(T+1)
        +
        (\log\log(T+2)+1)
        \left(\log\frac1\Delta+1\right)
    \right]
    +C'(\log\log(T+2)+1)
\]
for universal constants $C,C'>0$. For every fixed $\Delta>0$, the leading term as $T\to\infty$ is $O(N\Delta^{-2}\log T)$; the additional $O(N\Delta^{-2}\log\log T\,\log(1/\Delta))$ term is explicitly retained and may dominate when the instance gap is very small.
\end{theorem}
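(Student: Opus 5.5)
Fix $T\ge2$ and let $S(T)$ be the number of phases that have started by time $T$. The strong regret then splits into two parts: the identification comparisons, and the self-play rounds played with a wrong incumbent. Write $\tau_s$ for the length of the $s$-th \textsc{TreeAscent} pass, which runs from $c_{s-1}$ with confidence $\epsilon_s=2^{-2^s}$ on fresh samples. Pathwise we then have
\[
    R_T^{\rm str}\le \sum_{s\ge1}\mathbf 1\{\text{phase } s \text{ starts by } T\}\,\tau_s+\sum_{s\ge1}\mathbf 1\{\text{phase } s\text{ starts by }T\}\min\{B_s,T\}\,\mathbf 1\{c_s\neq a^\star\}.
\]
Because every pass uses fresh samples, Theorems~\ref{thm:app-plr-correctness} and~\ref{thm:app-plr-cost} apply conditionally on the history before phase $s$, uniformly over the starting arm $c_{s-1}$. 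This gives $\Pr(c_s\neq a^\star\mid\mathcal H_{s-1})\le\epsilon_s$ and $\mathbb E[\tau_s\mid\mathcal H_{s-1}]\le \frac{C N}{\Delta^2}(2^s\log 2+\log\frac1\Delta+1)$.

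To handle the random number of phases, I will cap it deterministically. Phase $s$ can only start after the self-play blocks $B_1,\dots,B_{s-1}$ are complete, so it starts by time $T$ only if $B_{s-1}=2^{2^{s-1}}\le T$. Let $s_T:=\min\{s: 2^{2^{s-1}}>T\}$. Then every phase that starts by time $T$ has $s\le s_T$, and $s_T\le \log_2\log_2 T+2=O(\log\log(T+2)+1)$. Bounding the start indicators by $\mathbf 1\{s\le s_T\}$ and using the tower property gives the identification cost
\[
    \sum_{s\le s_T}\frac{CN}{\Delta^2}\Bigl(2^s\log2+\log\tfrac1\Delta+1\Bigr).
\]
The geometric sum $\sum_{s\le s_T}2^s\le 2^{s_T+1}$ must be compared with $\log T$. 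Minimality of $s_T$ gives $2^{2^{s_T-2}}\le T$ when $s_T\ge2$, so $2^{s_T}\le 4\log_2 T$. Hence the first part contributes $\frac{CN}{\Delta^2}[\log(T+1)+s_T(\log\frac1\Delta+1)]$, which is exactly the stated form.

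For the second part, each phase contributes at most $\min\{B_s,T\}\Pr(c_s\neq a^\star)\le B_s\epsilon_s=1$. Summing over $s\le s_T$ gives at most $s_T\le C'(\log\log(T+2)+1)$. Since $T\ge2$, the additive constants from small $T$ and $s_T\ge1$ can be absorbed into $C$ and $C'$. The last claim of the statement follows by holding $\Delta$ fixed and letting $T\to\infty$.

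The step I expect to be delicate is the accounting of the random phase count. If the last identification pass is unusually long, the $s_T$ cap must still correctly bound which phases can start. It does, because starting phase $s$ requires the earlier self-play blocks to be complete. A pass in progress at time $T$ is counted through its full $\tau_s$, which only overestimates the regret, and its expectation is controlled conditionally, so no optional-stopping issue arises.
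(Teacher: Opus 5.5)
Your proposal is correct and follows essentially the same route as the paper. You split the regret pathwise into the identification cost $\tau_s$ plus the fallback loss $B_s\mathbf 1\{c_s\neq a^\star\}$, sum both over a deterministic cap $s_T$ on the number of started phases, apply the \textsc{TreeAscent} guarantees conditionally (via fresh samples and uniformity over the starting arm), and finish with $\sum_{s\le s_T}2^s=O(\log T)$ and $\sum_{s\le s_T}B_s\epsilon_s=s_T$.

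The one detail the paper makes explicit and you leave implicit is that every pass terminates almost surely. This is needed so that $\tau_s$ and $c_s$ are well-defined for all $s\le s_T$, including phases that never start by time $T$, once you replace the start indicators by the deterministic $\mathbf 1\{s\le s_T\}$.
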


The $C'(\log\log(T+2)+1)$ term below comes from the $S_T$ many phase-wise fallback costs $B_s\varepsilon_s=1$, and is independent of $N$ and $\Delta$.

\begin{proof}
We first justify the phase variables used below. Conditional on any history before a phase starts, the next \textsc{TreeAscent} call is a finite composition of fresh pairwise tests and tournaments, and therefore terminates almost surely under Assumption~\ref{assump:condorcet-gap}. By induction over phases, for every fixed $s$, phase $s$ starts and terminates almost surely. Hence the full identification cost $\tau_s$ and the output $\widehat a_s$ of phase $s$ are well-defined. Here $\tau_s$ counts only the distinct-arm comparison rounds used by that identification pass. For a phase that has started but not finished by time $T$, $\widehat a_s$ is counterfactual only relative to the truncated horizon, and using it below only produces an upper bound.

Let $K_T$ be the number of phases that have started by time $T$. If phase $s$ starts at time $t_s$, then
\[
    t_s
    =
    \sum_{r=1}^{s-1}(\tau_r+B_r)
    \ge
    \sum_{r=1}^{s-1}B_r .
\]
Thus any phase $s$ that has started by time $T$ satisfies $\sum_{r=1}^{s-1}B_r\le T$. For $s\ge2$, the left side is at least $B_{s-1}=2^{2^{s-1}}$. Hence any such started phase satisfies $2^{2^{s-1}}\le T$, and, after harmless endpoint slack handled by $T+2$,
\[
    s\le 2+\left\lceil\log_2\log_2(T+2)\right\rceil=:S_T .
\]
Therefore $K_T\le S_T$ deterministically; identification passes can only delay later phases and cannot increase this bound.

Strong regret before time $T$ is upper-bounded pathwise by the full identification costs and full self-play-block losses of all phases that have started:
\[
    R_T^{\rm str}
    \le
    \sum_{s=1}^{K_T}\tau_s
    +
    \sum_{s=1}^{K_T}B_s\mathbf 1\{\widehat a_s\neq a^\star\}.
\]
This may count parts of a pass or a self-play block beyond time $T$, but only as a nonnegative overcount. Since $\tau_s\ge0$ and $B_s\mathbf 1\{\widehat a_s\neq a^\star\}\ge0$, the deterministic bound $K_T\le S_T$ gives
\[
    R_T^{\rm str}
    \le
    \sum_{s=1}^{S_T}\tau_s
    +
    \sum_{s=1}^{S_T}B_s\mathbf 1\{\widehat a_s\neq a^\star\}.
\]
Because $S_T$ is deterministic, we can exchange the finite summation and expectation after taking expectations.

Let $\mathcal F_{s-1}$ denote the history before phase $s$ starts, including the starting arm $c_{s-1}$ chosen by the algorithm. Conditional on $\mathcal F_{s-1}$, this starting arm is fixed, and the $s$-th identification pass uses fresh samples. Since the \textsc{TreeAscent} guarantees are uniform over the starting arm, they apply conditionally:
\[
    \Pr(\widehat a_s\neq a^\star\mid\mathcal F_{s-1})\le\varepsilon_s,
    \qquad
    \mathbb E[\tau_s\mid\mathcal F_{s-1}]
    \le
    \frac{CN}{\Delta^2}
    \left(\log\frac1{\varepsilon_s}+\log\frac1\Delta+1\right).
\]
Taking expectations and summing over $s\le S_T$ yields
\[
    \mathbb E[R_T^{\rm str}]
    \le
    \frac{CN}{\Delta^2}
    \sum_{s=1}^{S_T}
    \left(\log\frac1{\varepsilon_s}+\log\frac1\Delta+1\right)
    +
    \sum_{s=1}^{S_T}B_s\varepsilon_s.
\]
By construction $B_s\varepsilon_s=1$ and $\log(1/\varepsilon_s)=2^s\log2$, so
\[
    \mathbb E[R_T^{\rm str}]
    \le
    \frac{CN}{\Delta^2}
    \left(
        \sum_{s=1}^{S_T}2^s
        +S_T\left(\log\frac1\Delta+1\right)
    \right)
    +S_T.
\]
Finally, the definition of $S_T$ gives $S_T\le3+\log_2\log_2(T+2)$ and therefore
\[
    \sum_{s=1}^{S_T}2^s
    =2^{S_T+1}-2
    \le
    16\log_2(T+2)
    \le
    C\log(T+1),
    \qquad
    S_T\le C(\log\log(T+2)+1).
\]
Substituting these estimates proves the theorem. The additive $C'(\log\log(T+2)+1)$ term is the absorbed form of $S_T=\sum_{s=1}^{S_T}B_s\varepsilon_s$. The induction argument at the beginning of the proof also shows that the anytime policy is well-defined as an infinite comparison stream.
\end{proof}

\section{Known-Gap Budgeted Extension}
\label{app:known-gap-budgeted-proofs}

This appendix proves the guarantees stated in Section~\ref{sec:known-gap-budgeted-extension}. Throughout this section, \(g\le\Delta^\star\) is the known lower bound, and all calls use fresh samples. The proofs mirror the gap-oblivious analysis, but the pairwise cost is now a deterministic fixed budget.

\begin{proof}[Proof of Lemma~\ref{lem:budgeted-dueltest-main}]
The deterministic budget is immediate from the loop length. We prove correctness for the case \(p_{u,v}=1/2+\gamma\) with \(\gamma\ge g\); the other case is symmetric. Let \(M=\lceil (2g^2)^{-1}\log(4/\delta)\rceil\) and \(b_n=b_n(\delta)\).

Consider first an erroneous early return. Since ties are broken in favor of \(u\), the algorithm can return \(v\) only if, for some \(n\le M\), \(\widehat\mu_n<1/2\) and \(|\widehat\mu_n-1/2|>b_n-g\). If \(b_n\ge g\), then
\[
    p_{u,v}-\widehat\mu_n
    =\gamma+\left(\frac12-\widehat\mu_n\right)
    > g+b_n-g=b_n.
\]
If \(b_n<g\), then \(p_{u,v}-\widehat\mu_n>g>b_n\). Hence every erroneous early return is contained in
\[
    \left\{\exists n\le M: p_{u,v}-\widehat\mu_n>b_n\right\}.
\]
A one-sided Hoeffding bound and a union bound give
\[
    \Pr\left(\exists n\le M: p_{u,v}-\widehat\mu_n>b_n\right)
    \le
    \sum_{n\ge1}\exp(-2nb_n^2)
    =
    \sum_{n\ge1}\frac{\delta}{8n^2}
    \le \frac\delta4.
\]
If no early return occurs and the final majority vote is wrong, then \(\widehat\mu_M<1/2\), and therefore \(p_{u,v}-\widehat\mu_M>\gamma\ge g\). By Hoeffding's inequality,
\[
    \Pr(\widehat\mu_M<1/2)
    \le
    \exp(-2Mg^2)
    \le \frac\delta4.
\]
Thus the error probability is at most \(\delta/2\), and in particular at most \(\delta\). The proof for \(p_{u,v}=1/2-\gamma\) is identical with the inequalities reversed.
\end{proof}

For the remaining proofs, let \(\textsc{BKT}_g\), \(\textsc{TreeAscent}_g\), and \(\textsc{ScreenAndReplace}_g\) denote the natural variants obtained by using \textsc{BudgetedDuelTest} instead of \textsc{DuelTest} everywhere.

\begin{lemma}[Budgeted knockout tournament]
\label{lem:app-budgeted-bkt}
Let \(S\subseteq[N]\) be nonempty and \(m=|S|\). For every \(\delta\in(0,1)\), \(\textsc{BKT}_g(S,\delta)\) uses at most
\[
    \frac{Cm}{g^2}\left(\log\frac1\delta+1\right)
\]
comparisons deterministically for a universal constant \(C>0\). Moreover, if \(S\) contains an arm \(w\) such that \(p_{w,j}\ge1/2+g\) for every \(j\in S\setminus\{w\}\), then \(\textsc{BKT}_g(S,\delta)\) returns \(w\) with probability at least \(1-\delta\).
\end{lemma}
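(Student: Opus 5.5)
The plan is to mirror the proof of Lemma~\ref{lem:app-bkt-subset}, with the anytime \textsc{DuelTest} replaced by the fixed-budget test of Lemma~\ref{lem:budgeted-dueltest-main}. Both parts then become simpler, because the per-test cost is now a deterministic number rather than an expectation.

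\emph{Cost bound.} If $m=1$, no comparison is made and there is nothing to prove. For $m\ge2$, we use the deterministic halving structure of Algorithm~\ref{alg:bkt}. Let $K_r$ be the number of pairwise tests in round $r$. Regardless of the outcomes,
\[
    K_r\le \left\lfloor \lceil m/2^{r-1}\rceil/2\right\rfloor\le m/2^r+1,
    \qquad
    \sum_r K_r\le m-1,
\]
and there are at most $R=\lceil\log_2 m\rceil$ rounds. Each test in round $r$ is called with $\delta_r=6\delta/(\pi^2r^2)$. By Lemma~\ref{lem:budgeted-dueltest-main} it uses at most
\[
    \left\lceil\frac{1}{2g^2}\log\frac{4}{\delta_r}\right\rceil
    \le
    \frac{1}{2g^2}\left(\log\frac1\delta+2\log r+\log\frac{2\pi^2}{3}\right)+1
\]
comparisons deterministically. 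Summing over tests gives three contributions. The terms independent of $r$ give $O\!\left(\frac{m}{g^2}\left(\log\frac1\delta+1\right)\right)$, using $\sum_rK_r\le m-1$ and $1\le 1/g^2$ since $g\le1/2$. The $\log r$ terms require the bound $\sum_{r\le R}K_r\log r\le \sum_r (m/2^r)\log r + R\log R\le Cm$, which is the same computation as in Lemma~\ref{lem:app-bkt-subset}. Adding the three pieces gives the claimed deterministic bound.

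\emph{Correctness.} Track $w$ through the bracket. In each round, $w$ participates in at most one \textsc{BudgetedDuelTest}, against some $j\in S\setminus\{w\}$. By hypothesis $|p_{w,j}-1/2|\ge g$ with $w$ the better arm, so Lemma~\ref{lem:budgeted-dueltest-main} applies. That lemma requires only a gap of at least $g$ for the pair actually tested, not a global gap, so it fits this setting. Hence $w$ loses in round $r$ with probability at most $\delta_r$. The tests use fresh samples, so the bound also holds conditionally on the history before each test, even though $w$'s opponent is chosen adaptively. A union bound over rounds gives $\Pr(w\text{ eliminated})\le\sum_r\delta_r=\delta$. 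If $w$ is never eliminated, it is the final remaining arm.

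The argument is routine. The only step needing some care is the conditional application of Lemma~\ref{lem:budgeted-dueltest-main} to an adaptively chosen opponent. It goes through because pair choices depend only on earlier, independent outcomes, and because comparisons not involving $w$ may return anything without affecting the argument.
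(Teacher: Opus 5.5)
Your proposal is correct and follows essentially the same argument as the paper. It gives a union bound over the at most one budgeted test per round involving $w$, and a deterministic cost bound from summing the per-call budget $\lceil (2g^2)^{-1}\log(4/\delta_r)\rceil$ against the halving bounds $\sum_r K_r\le m-1$ and $\sum_r K_r\log r\le Cm$. Your explicit expansion $\log(4/\delta_r)=\log(1/\delta)+2\log r+\log(2\pi^2/3)$ and your remark on conditioning for adaptively chosen opponents only fill in details the paper leaves implicit.
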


\begin{proof}
The correctness proof only tracks \(w\). In tournament round \(r\), \(w\) participates in at most one pairwise call with confidence \(\delta_r=6\delta/(\pi^2r^2)\). Whenever \(w\) is one of the two inputs, the pairwise gap is at least \(g\), so Lemma~\ref{lem:budgeted-dueltest-main} bounds the conditional probability of eliminating \(w\) by \(\delta_r\). A union bound gives \(\sum_{r\ge1}\delta_r=\delta\).

For the deterministic budget, let \(K_r\) be the number of matches in round \(r\). The halving structure gives \(\sum_r K_r\le m-1\) and \(\sum_r K_r\log r\le Cm\), exactly as in the proof of Lemma~\ref{lem:app-bkt-subset}. Each round-\(r\) call uses at most
\[
    \left\lceil\frac{1}{2g^2}\log\frac4{\delta_r}\right\rceil
    \le
    \frac{C}{g^2}\left(\log\frac1\delta+\log r+1\right)
\]
comparisons. Summing over rounds proves the budget bound.
\end{proof}

\begin{lemma}[Budgeted tree ascent]
\label{lem:app-budgeted-tree-ascent}
If \(g\le\Delta^\star\), then for every starting arm \(b\), tree \(\mathcal T\), and confidence \(\varepsilon\in(0,1)\),
\[
    \Pr\{\textsc{TreeAscent}_g(b,\mathcal T,\varepsilon)=a^\star\}\ge1-\varepsilon,
\]
and the number of comparisons in the pass is deterministically bounded by
\[
    \tau_{{\rm TA},g}(b,\varepsilon)
    \le
    \frac{CN}{g^2}\left(\log\frac1\varepsilon+1\right).
\]
The same deterministic bound upper-bounds the weak or strong regret incurred by the pass.
\end{lemma}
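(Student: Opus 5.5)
The argument follows the gap-oblivious analysis of \textsc{TreeAscent} (Theorems~\ref{thm:app-plr-correctness} and~\ref{thm:app-plr-cost}) almost verbatim. The only changes are that every pairwise call now goes through Lemma~\ref{lem:budgeted-dueltest-main}, and tournament costs are handled by Lemma~\ref{lem:app-budgeted-bkt} in place of Lemma~\ref{lem:app-bkt-subset}. The key observation is that correctness only needs pairwise tests in which $a^\star$ is one of the two inputs. For those tests the gap is $p_{a^\star,j}-\tfrac12\ge\Delta^\star\ge g$, so the budgeted test is valid there. Comparisons between nonwinner arms may have gap below $g$, but they only have to respect the deterministic budget, which holds regardless of the gap.

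\emph{Correctness.} Let $h_\star=h_\star(b)$ be the winner-entry level from Definition~\ref{def:app-winner-entry-level}. If $h_\star>0$, then $a^\star\in Q_{h_\star}(b)$, and $a^\star$ beats every other arm of that block by at least $g$. Lemma~\ref{lem:app-budgeted-bkt} then shows that $\textsc{BKT}_g(Q_{h_\star}(b),\beta_{h_\star})$ fails to return $a^\star$ with probability at most $\beta_{h_\star}$. For each level $h$, let $F_h$ be the event that the merge test is invoked with $a^\star$ as an input and returns the other arm. Conditioning on the history before the merge, and using fresh samples, Lemma~\ref{lem:budgeted-dueltest-main} gives $\Pr(F_h)\le\eta_h$. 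Lemma~\ref{lem:app-plr-capture-preserve} is purely combinatorial and does not depend on which pairwise test is used, so it still applies. A union bound over the capture failure and the events $F_h$ gives total failure probability at most $\sum_h\beta_h+\sum_h\eta_h=\varepsilon/2+\varepsilon/2$, by Lemma~\ref{lem:app-sibling-partition}.

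\emph{Deterministic budget.} Split the pass into subtree tournaments and merges, as in Theorem~\ref{thm:app-plr-cost}. By Lemma~\ref{lem:app-budgeted-bkt}, the level-$h$ tournament uses at most $Cm_h g^{-2}(\log(1/\beta_h)+1)$ comparisons deterministically. Summing over levels and using $m_h\le\bar m_h=2^{h-1}$, the entropy-type sum $\sum_h\bar m_h\log((M-1)/\bar m_h)\le 2M\log 2$ is exactly as before. This bounds the subtree part by $CMg^{-2}(\log(1/\varepsilon)+1)$. Each merge uses at most $\lceil(2g^2)^{-1}\log(8L/\varepsilon)\rceil$ comparisons, so the merges total $O(Lg^{-2}(\log(1/\varepsilon)+\log L+1))$. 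Since $L\log L=O(M)$, this is absorbed into the same bound. Using $M<2N$ then gives $\tau_{{\rm TA},g}\le CNg^{-2}(\log(1/\varepsilon)+1)$ pathwise.

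Every round of the pass is a distinct-arm comparison, so both weak and strong regret during the pass are at most $\tau_{{\rm TA},g}$. This gives the final claim.

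The only delicate point is the correctness step. The argument must avoid any use of a gap condition on non-$a^\star$ comparisons, and this is where $g\le\Delta^\star$ rather than $g\le\Delta$ is used. Once that is in place, the rest is a substitution of deterministic budgets for expected stopping times.
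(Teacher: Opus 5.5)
Your proposal is correct and follows the paper's proof essentially step for step. It uses the same capture-and-preservation union bound, with Lemma~\ref{lem:app-budgeted-bkt} on the critical block and Lemma~\ref{lem:budgeted-dueltest-main} on merges involving $a^\star$, and the same size-weighted entropy sum plus $L\log L=O(M)$ absorption for the deterministic budget. Your explicit per-merge cap $\lceil (2g^2)^{-1}\log(8L/\varepsilon)\rceil$ is only slightly more careful bookkeeping of the paper's $\log(2L/\varepsilon)$ term.
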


\begin{proof}
The capture-and-preservation proof is the same as Theorem~\ref{thm:app-plr-correctness}. If \(a^\star\) first enters the ancestor path through the critical block \(Q_{h_\star}(b)\), then within this block it beats every other arm by at least \(g\). Lemma~\ref{lem:app-budgeted-bkt} therefore returns \(a^\star\) from the critical block with failure probability at most \(\beta_{h_\star}\). Once \(a^\star\) is the current champion, every later merge call involving \(a^\star\) has pairwise gap at least \(g\), so Lemma~\ref{lem:budgeted-dueltest-main} bounds its failure probability by \(\eta_h\). Thus the same union bound gives \(\sum_h\beta_h+\sum_h\eta_h\le\varepsilon\).

For cost, the subtree tournaments satisfy, by Lemma~\ref{lem:app-budgeted-bkt},
\[
    \sum_{h=1}^L T_h^{\rm sub}
    \le
    \frac{C}{g^2}\left(\sum_{h=1}^L m_h\log\frac1{\beta_h}+N\right)
    \le
    \frac{CN}{g^2}\left(\log\frac1\varepsilon+1\right),
\]
where the final inequality is the same size-weighted entropy calculation as in Theorem~\ref{thm:app-plr-cost}. The merge calls contribute at most
\[
    \frac{CL}{g^2}\left(\log\frac{2L}{\varepsilon}+1\right)
    \le
    \frac{CN}{g^2}\left(\log\frac1\varepsilon+1\right),
\]
because \(L\le M<2N\) and \(L\log(2L)\le CN\). Adding the two displays proves the deterministic comparison budget. Each comparison costs at most one unit of weak or strong regret, so the same bound applies to regret incurred during identification.
\end{proof}

\begin{lemma}[Budgeted screen-and-replace epoch]
\label{lem:app-budgeted-screen-epoch}
Let \(\mathcal F\) be the history before a call to \(\textsc{ScreenAndReplace}_g(c,\rho,\kappa)\), and suppose that \(c\) is \(\mathcal F\)-measurable. Let \(c^+\) be the returned arm and let \(W(c,\rho,\kappa)\) be the weak regret incurred during the call. If \(g\le\Delta^\star\), then, almost surely,
\[
    \Pr(c^+\neq a^\star\mid\mathcal F)
    \le
    \mathbf 1\{c\neq a^\star\}(\rho+\kappa)+\mathbf 1\{c=a^\star\}\kappa,
\]
and
\[
    \mathbb E[W(c,\rho,\kappa)\mid\mathcal F]
    \le
    \frac{CN}{g^2}\left[
        \mathbf 1\{c\neq a^\star\}\left(\log\frac1\rho+\log\frac1\kappa+1\right)
        +
        \rho\left(\log\frac1\kappa+1\right)
    \right].
\]
\end{lemma}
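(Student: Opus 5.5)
The plan is to mirror the proofs of Lemma~\ref{lem:app-one-step-repair} and Lemma~\ref{lem:app-screen-epoch-regret}. Throughout, \textsc{DuelTest} is replaced by \textsc{BudgetedDuelTest} and \textsc{BKT} by \(\textsc{BKT}_g\). For correctness, the gap only has to be controlled on comparisons that involve \(a^\star\), and Lemma~\ref{lem:budgeted-dueltest-main} together with Lemma~\ref{lem:app-budgeted-bkt} supplies exactly this. For cost, the random stopping-time bounds are replaced by the deterministic budgets. First condition on \(\mathcal F\). This fixes \(c\), and since every call inside the epoch uses fresh samples, the per-call guarantees apply conditionally on the within-call history.

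\emph{Correctness.} Take the case \(c\neq a^\star\) first. The screening call \(\textsc{BudgetedDuelTest}(c,a^\star,g,\rho)\) has gap \(p_{a^\star,c}-1/2\ge\Delta^\star\ge g\). By Lemma~\ref{lem:budgeted-dueltest-main} it therefore puts \(a^\star\) into \(B\) except with probability \(\rho\). Once \(a^\star\in B\), the set \(B\cup\{c\}\) contains \(a^\star\), and \(a^\star\) beats every other member of it by at least \(g\). Lemma~\ref{lem:app-budgeted-bkt} then gives failure probability at most \(\kappa\), conditional on the screening outcome. A union bound yields \(\rho+\kappa\).

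Now take \(c=a^\star\). If \(B=\emptyset\), the call returns \(a^\star\) deterministically. Otherwise \(\textsc{BKT}_g(B\cup\{a^\star\},\kappa)\) fails with probability at most \(\kappa\). The one subtle point is that Lemma~\ref{lem:app-budgeted-bkt} only requires a gap of at least \(g\) between \(w\) and the other arms. Matches between two nonwinner arms may therefore have arbitrarily small gap, but they still cost only a deterministic budget and do not affect the tracking of \(a^\star\).

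\emph{Regret.} Again take \(c\neq a^\star\) first. Screening uses at most \(N-1\) calls, each costing at most \(\lceil(2g^2)^{-1}\log(4/\rho)\rceil\). This totals at most \(CNg^{-2}(\log(1/\rho)+1)\). The replacement tournament runs on at most \(N\) arms, and Lemma~\ref{lem:app-budgeted-bkt} bounds it by \(CNg^{-2}(\log(1/\kappa)+1)\). Weak regret is at most the number of comparisons.

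For \(c=a^\star\), every screening comparison involves \(a^\star\) and so incurs zero weak regret. A nonwinner \(j\) enters \(B\) only through an erroneous test with gap at least \(g\), which has probability at most \(\rho\). Hence \(\mathbb E[|B|\mid\mathcal F]\le(N-1)\rho\). Conditional on \(B\neq\emptyset\), the replacement cost is at most
\[
C(|B|+1)g^{-2}(\log(1/\kappa)+1)\le 2C|B|g^{-2}(\log(1/\kappa)+1).
\]
The tower property then gives the false-positive term \(CN\rho g^{-2}(\log(1/\kappa)+1)\). Summing the contributions proves the displayed bound.

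The main obstacle is the \(c=a^\star\) case, and specifically confirming that the relevant gaps are at least \(g\). We must check that Lemma~\ref{lem:budgeted-dueltest-main} applies to the screening test \((a^\star,j)\), which holds since \(p_{a^\star,j}-1/2\ge\Delta^\star\ge g\). We must also check that the replacement tournament's correctness relies only on \(a^\star\)-involving gaps, which holds by Lemma~\ref{lem:app-budgeted-bkt}. Everything else is deterministic bookkeeping.
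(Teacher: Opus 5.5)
Your proposal is correct and follows the paper's proof essentially step for step. The shared steps are: conditioning on \(\mathcal F\); the \(\rho+\kappa\) union bound for \(c\neq a^\star\) and the \(\kappa\) bound for \(c=a^\star\) via Lemma~\ref{lem:budgeted-dueltest-main} and Lemma~\ref{lem:app-budgeted-bkt}; the deterministic screening and replacement budgets; and the false-positive term obtained from \(\mathbb E[|B|\mid\mathcal F]\le(N-1)\rho\) together with \((|B|+1)\mathbf 1\{|B|>0\}\le 2|B|\).
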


\begin{proof}
The correction probability is identical to Lemma~\ref{lem:app-one-step-repair}. If \(c\neq a^\star\), then the screening phase compares \(c\) with \(a^\star\), and this comparison inserts \(a^\star\) into the beater set with failure probability at most \(\rho\). Conditional on \(a^\star\) being present, the replacement tournament contains an arm that beats every other candidate by at least \(g\), so Lemma~\ref{lem:app-budgeted-bkt} returns \(a^\star\) with failure probability at most \(\kappa\). If \(c=a^\star\), only the replacement tournament can lose \(a^\star\), again with probability at most \(\kappa\).

For the regret bound, the screening part costs at most \((N-1)\lceil(2g^2)^{-1}\log(4/\rho)\rceil\) comparisons, but contributes weak regret only when \(c\neq a^\star\). If \(c\neq a^\star\), the replacement tournament has size at most \(N\) and costs at most \(CNg^{-2}(\log(1/\kappa)+1)\) by Lemma~\ref{lem:app-budgeted-bkt}. If \(c=a^\star\), a nonwinner enters the beater set only after a false positive against \(a^\star\), which has conditional probability at most \(\rho\); hence \(\mathbb E[|B|\mid\mathcal F,c=a^\star]\le(N-1)\rho\). Applying the deterministic tournament budget conditionally on \(B\), and using \((|B|+1)\mathbf 1\{|B|>0\}\le2|B|\), yields the false-positive term \(CN\rho g^{-2}(\log(1/\kappa)+1)\). Summing the contributions proves the display.
\end{proof}

\begin{proof}[Proof of Theorem~\ref{thm:known-gap-extension-main}]
The BAI statement is Lemma~\ref{lem:app-budgeted-tree-ascent} with \(\varepsilon=\delta\). For the constant-confidence weak-regret algorithm, the warm start costs at most \(CN/g^2\) regret by Lemma~\ref{lem:app-budgeted-tree-ascent}. For the tail, use Lemma~\ref{lem:app-budgeted-screen-epoch} with \(\rho_s=\kappa_s=4^{-(s+2)}\). The same recursion as in Theorem~\ref{thm:app-online-weak} gives \(p_1\le1\) and \(p_s\le2\cdot4^{-(s+1)}\) for \(s\ge2\), so
\[
    \sum_{s\ge1}(p_s+\rho_s)(s+1)<\infty,
    \qquad
    \sum_{s\ge1}(p_s+\rho_s)<\infty.
\]
Summing the epoch bounds gives \(\mathbb E[R_\infty^{\rm weak}]\le CN/g^2\). With a \(\delta\)-confidence warm start, the warm-start contribution becomes \(CN g^{-2}(\log(1/\delta)+1)\), and the tail is absorbed.

For known-horizon strong regret, the pathwise inequality
\[
    R_H^{\rm str}\le \tau_H+H\mathbf 1\{\widehat a\neq a^\star\}
\]
from Theorem~\ref{thm:app-strong-known} is unchanged. Applying Lemma~\ref{lem:app-budgeted-tree-ascent} with \(\varepsilon_H=1/(H+1)\) gives the stated \(CN g^{-2}(\log(H+1)+1)+1\) bound.

For the anytime strong-regret policy, the phase-counting argument of Theorem~\ref{thm:app-strong-anytime} is unchanged: by time \(T\), at most \(S_T=O(\log\log(T+2)+1)\) phases have started, and \(\sum_{s\le S_T}\log(1/\varepsilon_s)=O(\log(T+1))\). Lemma~\ref{lem:app-budgeted-tree-ascent} replaces each phase cost by \(CN g^{-2}(\log(1/\varepsilon_s)+1)\), while the fallback term remains \(\sum_{s\le S_T}B_s\varepsilon_s=S_T\). Therefore
\[
    \mathbb E[R_T^{\rm str}]
    \le
    \frac{CN}{g^2}\left(\log(T+1)+\log\log(T+2)+1\right)
    +C'(\log\log(T+2)+1),
\]
which completes the proof.
\end{proof}

\section{Experiment Implementation Details}
\label{app:time}
Our experiments are conducted on a computational server with AMD Threadripper Pro 3975WX CPU and 256GB RAM. For time cost, we observe that all experiments are completed within 1 minute. 

The implementations of the classical weak regret and strong regret algorithms (such as \textsc{WS-W}~\citep{chen2017dueling}, \textsc{WR-TINF} and \textsc{WR-EXP3-IX}~\citep{saad2024weak} and \textsc{Versatile-DB}~\citep{pmlr-v162-saha22a}) are adapted from the supplementary material of the paper \citep{saad2024weak} available at \url{https://openreview.net/forum?id=dY4YGqvfgW}. All other algorithms are implemented by ourselves.


\end{document}